%% file: sample.tex
\documentclass[twoside,11pt]{article}

\usepackage{blindtext}

\usepackage[preprint, abbrvbib]{jmlr2e}

\usepackage{lastpage}

\ShortHeadings{Global Convergence of Average Reward CMDPs with Neural Critic Parameterization}{Global Convergence of Average Reward CMDPs with Neural Critic Parameterization}
\firstpageno{1}

\usepackage{amsmath}
\usepackage{amssymb}
\usepackage{pifont}
\usepackage{amsfonts}
\usepackage{algorithm}
\usepackage{algorithmic}
\usepackage{tcolorbox}
\usepackage{tablefootnote}
\usepackage{booktabs}
\usepackage{tikz}
\usepackage{natbib}
\newtheorem{assumption}[theorem]{Assumption}
\newcommand{\cmark}{\ding{51}}
\newcommand{\xmark}{\ding{55}}
\allowdisplaybreaks

\begin{document}

\title{Global Convergence of Average Reward Constrained MDPs with Neural Critic and General Policy Parameterization}

\author{\name Anirudh Satheesh \email anirudhs@terpmail.umd.edu \\
       \addr Department of Computer Science\\
       University of Maryland\\
       College Park, MD 20740, USA
       \AND
       \name Pankaj Kumar Barman \email pankajkb24@iitk.ac.in \\
       \addr 
       Indian Institute of Technology, Kanpur\\
       Uttar Pradesh, 208016, India
       \AND
       \name Washim Uddin Mondal \email wmondal@iitk.ac.in \\
       \addr Department of Electrical Engineering\\
       Indian Institute of Technology, Kanpur\\
       Uttar Pradesh, 208016, India
       \AND
       \name Vaneet Aggarwal \email vaneet@purdue.edu \\
       \addr Department of Industrial Engineering\\
       Purdue University\\
       West Lafayette, IN 47907, USA}

\editor{My editor}

\maketitle

\begin{abstract}
We study infinite-horizon Constrained Markov Decision Processes (CMDPs) with general policy parameterizations and multi-layer neural network critics. Existing theoretical analyses for constrained reinforcement learning largely rely on tabular policies or linear critics, which limits their applicability to high-dimensional and continuous control problems. We propose a primal–dual natural actor–critic algorithm that integrates neural critic estimation with natural policy gradient updates and leverages Neural Tangent Kernel (NTK) theory to control function-approximation error under Markovian sampling, without requiring access to mixing-time oracles. We establish global convergence and cumulative constraint violation rates of $\tilde{\mathcal{O}}(T^{-1/4})$ up to approximation errors induced by the policy and critic classes. Our results provide the first such guarantees for CMDPs with general policies and multi-layer neural critics, substantially extending the theoretical foundations of actor–critic methods beyond the linear-critic regime.
\end{abstract}

\begin{keywords}
  constrained markov decision processes, average reward reinforcement learning, neural tangent kernel, general policy parameterization, actor-critic
\end{keywords}

\input{Sections/Introduction}
\input{Sections/RelatedWork}
\input{Sections/formulation}
\input{Sections/algorithm}
\input{Sections/assumptions}
\input{Sections/theoretical_analysis}
\input{Sections/conclusion}


\bibliography{sample}
\newpage

\appendix
\input{Sections/Appendix/related_work}
\input{Sections/Appendix/mlmc_estimation}
\input{Sections/Appendix/neural_critic_analysis}
\input{Sections/Appendix/npg_analysis}
\input{Sections/Appendix/cmdp_analysis}
\input{Sections/Appendix/limitations_future_work}

\end{document}

%% file: Sections/Introduction.tex
\section{INTRODUCTION}

Reinforcement Learning (RL) has achieved remarkable empirical success in solving complex, high-dimensional continuous control problems, an advancement largely driven by the representational power of deep neural networks. However, as RL is increasingly deployed in safety-critical applications such as transportation \citep{al2019deeppool}, healthcare \citep{tamboli2024reinforced}, and robotics \citep{Gonzalez2023}, it is imperative to ensure that these agents adhere to strict operational constraints. This requirement is naturally formulated as a Constrained Markov Decision Process (CMDP), wherein the objective is to maximize a primary reward signal while maintaining auxiliary cost signals below predefined thresholds. To solve continuous-space CMDPs, primal-dual actor-critic algorithms have emerged as a dominant paradigm. In these methods, the actor updates a parameterized policy via gradient ascent, a critic evaluates the policy's performance to construct the gradient, and a dual variable dynamically penalizes constraint violations. 

Despite the widespread practical adoption of deep actor-critic methods, the theoretical understanding of these algorithms remains limited. Early theoretical frameworks establishing global convergence for actor-critic algorithms were predominantly restricted to tabular settings or linear function approximations \citep{chen2022learning, bai2023achieving, xu2025global}. While linear models provide mathematical tractability, they fail to capture the complex, non-linear feature abstractions that characterize modern deep RL. More recently, literature has begun to bridge this gap by analyzing neural network parameterizations through the lens of Neural Tangent Kernel (NTK) theory \citep{gaur2024closing, ganesh2025orderoptimal}. However, these theoretical advancements have largely been confined to unconstrained Markov Decision Processes or discounted reward formulations, leaving the average reward constrained setting unexplored. Thus, we ask the following question:

\begin{tcolorbox}[
    colback=white,
    colframe=black,
    arc=3mm,
    boxrule=1.5pt,
    left=6pt,
    right=6pt,
    top=6pt,
    bottom=6pt
]
\textit{Can we design a primal-dual actor-critic algorithm with general policy parameterizations and multi-layer neural network critics that achieves provable global convergence for average-reward CMDPs under Markovian sampling, without requiring a mixing-time oracle?}
\end{tcolorbox}
\begin{table*}[ht]
\centering
\caption{Comparison of relevant algorithms for infinite-horizon
average-reward Reinforcement Learning in the model-free setting. Our work is the first to provide average reward global convergence and constraint violation results for constrained MDPs with general policy and multi-layer neural network critic parametrization. We provide a more detailed comparison to prior work in Appendix~\ref{appendix: related work}.}
\label{tbl_related1}%
\resizebox{\textwidth}{!}{
\begin{tabular}{|c|c|c|c|c|c|c|c|}
\hline
References &  
\begin{tabular}[c]{@{}c@{}} Global \\ Convergence \end{tabular} & 
\begin{tabular}[c]{@{}c@{}} Violation \end{tabular} & 
\begin{tabular}[c]{@{}c@{}} Reward \\ Setting \end{tabular} &
\begin{tabular}[c]{@{}c@{}} Infinite State \\ Action Space \end{tabular} &   
\begin{tabular}[c]{@{}c@{}} Mixing Time \\ Unknown? \end{tabular} &   
\begin{tabular}[c]{@{}c@{}} Critic \\ Parameterization  \end{tabular} &   
\begin{tabular}[c]{@{}c@{}} Policy \\ Parameterization \end{tabular}  \\ 
\hline 
\citep{cayci2024finite}     
& $\tilde{\mathcal{O}}(T^{-1/5})$ 
& N/A 
& Discounted
& \textcolor{red}{\xmark} 
& \textcolor{green}{\cmark} 
& Multi layer NN 
& Multi layer NN \\ 

\citep{gaur2024closing} 
& $\tilde{\mathcal{O}}(T^{-1/3})$ 
& N/A 
& Discounted
& \textcolor{green}{\cmark} 
& \textcolor{green}{\cmark} 
& Multi layer NN 
& General \\ 

\citep{ganesh2025orderoptimal} 
& $\tilde{\mathcal{O}}(T^{-1/2})$ 
& N/A 
& Discounted
& \textcolor{green}{\cmark} 
& \textcolor{red}{\xmark} 
& Multi layer NN 
& General \\ 

\citep{mondal2024sample} 
& $\tilde{\mathcal{O}}(T^{-1/2})$ 
& $\tilde{\mathcal{O}}(T^{-1/2})$ 
& Discounted
& \textcolor{green}{\cmark} 
& \textcolor{green}{\cmark} 
& N/A 
& General \\ 

\citep{agarwal2022regret,agarwal2022concave} 
& $\tilde{\mathcal{O}}(T^{-1/2})$ 
& 0
& Average 
& \textcolor{red}{\xmark} 
& \textcolor{red}{\xmark} 
& Tabular 
& Tabular \\ 

\citep{ghosh2022achieving}       
& $\tilde{\mathcal{O}}(T^{-1/2})$ 
& $\tilde{\mathcal{O}}(T^{-1/2})$ 
& Average
& \textcolor{green}{\cmark} 
& \textcolor{red}{\xmark} 
& Linear MDP 
& Tabular \\                                                              

\citep{bai2024learning} 
& $\tilde{\mathcal{O}}(T^{-1/5})$ 
& $\tilde{\mathcal{O}}(T^{-1/5})$ 
& Average
& \textcolor{green}{\cmark} 
& \textcolor{red}{\xmark} 
& N/A 
& General \\

\citep{xu2025global}    
& $\tilde{\mathcal{O}}(T^{-1/2})$ 
& $\tilde{\mathcal{O}}(T^{-1/2})$ 
& Average
& \textcolor{green}{\cmark} 
& \textcolor{green}{\cmark} 
& Linear 
& General \\

\textbf{This work} 
& $\tilde{\mathcal{O}}(T^{-1/4})$ 
& $\tilde{\mathcal{O}}(T^{-1/4})$ 
& Average
& \textcolor{green}{\cmark} 
& \textcolor{green}{\cmark} 
& Multi layer NN 
& General \\

\hline
\end{tabular}
}
\end{table*}
\subsection{Challenges and Contributions}

Addressing this problem requires overcoming three central technical obstacles. First, handling Markovian sampling dependence is typically achieved through data dropping strategies that rely on a mixing time oracle to specify appropriate discard intervals. Such assumptions are restrictive and often impractical. Second, analyzing a multi-layer neural critic in the Neural Tangent Kernel (NTK) regime demands precise control of its approximation bias. If this bias does not decay sufficiently fast, the resulting Natural Policy Gradient (NPG) updates become inaccurate and unstable. Third, the infinite-horizon average-reward setting presents a fundamental difficulty: unlike discounted formulations, the average-reward Bellman operator is not contractive. This lack of contraction destabilizes critic evaluation. When combined with the saddle-point structure of constrained Markov decision processes (CMDPs), the coupled estimation errors from the actor, critic, and dual variables can accumulate and potentially lead to divergence.

To address these challenges, we introduce the Primal-Dual Natural Actor-Critic with Neural Critic (PDNAC-NC) algorithm and establish its global convergence guarantees. To remove the dependence on a mixing-time oracle, we incorporate a Multi-Level Monte Carlo (MLMC) estimator that samples trajectory lengths from a geometric distribution. This construction corrects Markovian bias without discarding data. To manage the nonlinear neural critic, we restrict its parameters to remain within an NTK neighborhood around initialization and prove that its approximation bias decays proportionally to the mean-squared error, ensuring reliable NPG estimation. Finally, to handle the non-contractive Bellman operator and the primal-dual structure, we develop a refined coupled analysis that carefully tracks error propagation across actor, critic, and dual updates. This approach controls error amplification throughout the min-max dynamics and yields a convergence rate of \(\tilde{\mathcal{O}}(T^{-1/4})\) in both optimality gap and constraint violation. To our knowledge, this is the first result establishing global convergence for average-reward CMDPs with multi-layer neural critics and general policy parameterizations, without assuming access to a mixing-time oracle.

%% file: Sections/RelatedWork.tex
\section{Related works}

\textbf{Neural Actor-Critic and Markovian Sampling:} Recent theoretical advancements have begun to bridge the gap between deep reinforcement learning practice and theory by analyzing actor-critic methods equipped with multi-layer neural networks via Neural Tangent Kernel (NTK) theory \citep{fu2020single, cayci2024finite, tian2023convergence, gaur2024closing, ganesh2025orderoptimal}. While these contemporary works achieve state-of-the-art sample complexities for general policy parameterizations, their analyses are strictly confined to standard, unconstrained MDPs with discounted rewards. Furthermore, under Markovian sampling, these approaches rely on restrictive data-dropping techniques that discard collected transitions based on a known mixing-time oracle to mitigate statistical dependencies. However, this results in only one out of every \(\tau_{\mathrm{mix}}\) samples used for policy updates. To bypass this, recent theoretical works have adapted Multi-Level Monte Carlo (MLMC) estimation to RL \citep{beznosikov2023first, xu2025global}, providing unbiased gradient estimates that correct for Markovian bias without sample thinning. However, these MLMC techniques have previously been restricted to algorithms utilizing linear function approximations for the critic. We are the first to extend this technique to the neural critic setting, mitigating the issues of data dropping.

\textbf{Constrained Average-Reward MDPs:} For safe reinforcement learning, infinite-horizon average-reward Constrained MDPs (CMDPs) present a difficult challenge due to coupled primal-dual dynamics and the lack of Bellman operator contractivity. Consequently, existing theoretical guarantees for average-reward CMDPs have predominantly relied on tabular settings, exact gradients, linear MDP, or linear critic approximations \citep{paternain2019constrained,ding2020natural,chen2022learning, bai2024learning,xu2025global,ghosh2022achieving, wei2022provably,satheesh2026regret}, failing to capture the complex, non-linear feature representations characteristic of deep RL. Our work distinguishes itself by being the first to theoretically analyze a primal-dual natural actor-critic algorithm for average-reward CMDPs utilizing multi-layer neural network critics. By uniquely integrating MLMC within a nested-loop structure to simultaneously evaluate a neural critic and estimate the Natural Policy Gradient (NPG), our approach eliminates the need for a mixing-time oracle while achieving global convergence in this setting.

%% file: Sections/formulation.tex
\section{Formulation}

Consider an infinite-horizon average reward constrained Markov Decision Process (CMDP) denoted as $\mathcal{M} = (\mathcal{S}, \mathcal{A}, r, c, P, \rho)$ where $\mathcal{S}$ is the state space, $\mathcal{A}$ is the action space of size $A$, $r : \mathcal{S} \times \mathcal{A} \to [0, 1]$ represents the reward function, $c : \mathcal{S} \times \mathcal{A} \to [-1, 1]$ denotes the constraint cost function, $P : \mathcal{S} \times \mathcal{A} \to \Delta(\mathcal{S})$ is the state transition function where $\Delta(\mathcal{S})$ denotes the probability simplex over $\mathcal{S}$, and $\rho \in \Delta(\mathcal{S})$ indicates the initial distribution of states.

A stationary policy $\pi : \mathcal{S} \to \Delta(\mathcal{A})$ maps each state to a distribution over actions. Under policy $\pi$, the agent generates trajectories $\{(s_t, a_t)\}_{t=0}^\infty$ where $a_t \sim \pi(\cdot|s_t)$ and $s_{t+1} \sim P(\cdot|s_t, a_t)$. The policy induces a state transition kernel $P^\pi : \mathcal{S} \to \Delta(\mathcal{S})$ defined by $P^\pi(s, s') = \sum_{a \in \mathcal{A}} \pi(a|s) P(s'|s, a)$ for all $s, s' \in \mathcal{S}$.

We assume the CMDP is ergodic throughout this work.
\begin{assumption}[Ergodicity]\label{assump:ergodic}
The CMDP $\mathcal{M}$ is ergodic, i.e., the Markov chain $\{s_t\}_{t \geq 0}$ induced under every policy $\pi$ is irreducible and aperiodic.
\end{assumption}

Under ergodicity, for each policy $\pi$, there exists a unique $\rho$-independent stationary distribution $d^\pi \in \Delta(\mathcal{S})$ satisfying $P^\pi d^\pi = d^\pi$. We define the state-action occupancy measure as $\nu^\pi(s, a) \triangleq d^\pi(s) \pi(a|s)$ for all $(s, a) \in \mathcal{S} \times \mathcal{A}$.

\begin{definition}[Mixing Time]
The mixing time of the CMDP $\mathcal{M}$ with respect to a policy $\pi$ is defined as
\begin{equation}
\tau_{\mathrm{mix}}^\pi := \min\left\{t \geq 1 \middle| \|(P^\pi)^t(s, \cdot) - d^\pi\|_{\mathrm{TV}} \leq \frac{1}{4},  \forall s\right\},
\end{equation}
where $\|\cdot\|_{\mathrm{TV}}$ denotes the total variation distance. The uniform mixing time is $\tau_{\mathrm{mix}} := \sup_\pi \tau_{\mathrm{mix}}^\pi$, which is finite under ergodicity.
\end{definition}

The average reward and average constraint cost of a policy $\pi$ are defined as
\begin{equation}
J_g^\pi \triangleq \lim_{T \to \infty} \frac{1}{T} \mathbb{E}_\pi\left[\sum_{t=0}^{T-1} g(s_t, a_t) \middle| s_0 \sim \rho\right], \quad g \in \{r, c\},
\end{equation}
where the expectation is over the distribution of $\pi$-induced trajectories. Under ergodicity, $J_g^\pi$ is independent of $\rho$ and can be written as $J_g^\pi = \mathbb{E}_{(s,a) \sim \nu^\pi}[g(s, a)]$. Our objective is to maximize the average reward while ensuring the average cost exceeds a given threshold. Without loss of generality, we formulate this as
\begin{equation}
    \label{eq:cmdp_original}
    \max_\pi J_r^\pi \quad \text{subject to} \quad J_c^\pi \geq 0.
\end{equation}

When the state space $\mathcal{S}$ is large or continuous, we consider a class of parametrized policies $\{\pi_\theta \mid \theta \in \Theta\}$ indexed by a $d$-dimensional parameter $\theta \in \mathbb{R}^d$ where $d \ll |\mathcal{S}||\mathcal{A}|$. Using the shorthand $J_g(\theta) = J_g^{\pi_\theta}$ for $g \in \{r, c\}$, the optimization problem \eqref{eq:cmdp_original} becomes
\begin{equation}
    \label{eq:cmdp_param}
    \max_{\theta \in \Theta} J_r(\theta) \quad \text{subject to} \quad J_c(\theta) \geq 0.
\end{equation}

We assume the optimization problem \eqref{eq:cmdp_param} satisfies the Slater condition, which ensures the existence of an interior point solution.
\begin{assumption}[Slater Condition]\label{assump:slater}
There exists $\delta_s \in (0, 1)$ and $\bar{\theta} \in \Theta$ such that $J_c(\bar{\theta}) \geq \delta_s$.
\end{assumption}
The action-value function corresponding to policy $\pi_\theta$ for $g \in \{r, c\}$ is defined for all $(s, a) \in \mathcal{S} \times \mathcal{A}$ as
\begin{equation}
Q_g^{\pi_\theta}(s, a) = \mathbb{E}_{\pi_\theta}\left[\sum_{t=0}^{\infty} \left(g(s_t, a_t) - J_g(\theta)\right) \middle| s_0 = s, a_0 = a\right].
\end{equation}
The state-value function is $V_g^{\pi_\theta}(s) = \mathbb{E}_{a \sim \pi_\theta(\cdot|s)}[Q_g^{\pi_\theta}(s, a)]$ for all $s \in \mathcal{S}$, and the advantage function is $A_g^{\pi_\theta}(s, a) \triangleq Q_g^{\pi_\theta}(s, a) - V_g^{\pi_\theta}(s)$. The Bellman equation for average reward MDPs takes the form
\begin{equation}\label{eq:bellman}
Q_g^{\pi_\theta}(s, a) = g(s, a) - J_g(\theta) + \mathbb{E}_{s' \sim P(\cdot|s,a)}\left[V_g^{\pi_\theta}(s')\right]
\end{equation}
for all $(s, a) \in \mathcal{S} \times \mathcal{A}$ and $g \in \{r, c\}$. We solve \eqref{eq:cmdp_param} via a primal-dual approach based on the saddle point optimization
\begin{equation}\label{eq:lagrangian}
\max_{\theta \in \Theta} \min_{\lambda \geq 0} L(\theta, \lambda) \triangleq J_r(\theta) + \lambda J_c(\theta),
\end{equation}
where $L(\cdot, \cdot)$ is the Lagrange function and $\lambda \geq 0$ is the dual variable. The policy gradient theorem \cite{sutton1999policy} provides the gradient of the average reward for \(g \in \{r, c\}\):
\begin{equation}\label{eq:policy_gradient}
\nabla_\theta J_g(\theta) = \mathbb{E}_{\substack{s \sim d^{\pi_\theta} \\ a \sim \pi_\theta(\cdot|s)}}\left[A_g^{\pi_\theta}(s, a) \nabla_\theta \log \pi_\theta(a|s)\right].
\end{equation}

Rather than updating along the vanilla gradient, we employ the Natural Policy Gradient (NPG) direction
\begin{equation}
\omega_{g,\theta}^* \triangleq F(\theta)^{-1} \nabla_\theta J_g(\theta),
\end{equation}
where $F(\theta) \in \mathbb{R}^{d \times d}$ is the Fisher information matrix defined as
\begin{equation}
F(\theta) = \mathbb{E}\left[\nabla_\theta \log \pi_\theta(a|s) \otimes \nabla_\theta \log \pi_\theta(a|s)\right],
\end{equation}
with $\otimes$ denoting the outer product and the expectation taken under the state-action occupancy measure. The NPG direction $\omega_{g,\theta}^*$ can equivalently be expressed as the solution to the strongly convex optimization problem
\begin{equation}
    \label{eq:npg_opt}
    \min_{\omega \in \mathbb{R}^d} f_g(\theta, \omega) := \frac{1}{2}\omega^\top F(\theta)\omega - \omega^\top \nabla_\theta J_g(\theta),
\end{equation}
whose gradient is $\nabla_\omega f_g(\theta, \omega) = F(\theta)\omega - \nabla_\theta J_g(\theta)$.

Since the gradients in \eqref{eq:policy_gradient} and \eqref{eq:npg_opt} depend on the unknown action-value function $Q_g^{\pi_\theta}$, we introduce a neural network critic to approximate it. Let $\phi : \mathcal{S} \times \mathcal{A} \to \mathbb{R}^n$ be a fixed feature map satisfying $\|\phi(s, a)\| \leq 1$ for all $(s, a)$. For each $g \in \{r, c\}$, we parameterize the Q-function using an $L$-layer feedforward neural network defined by
\begin{equation}\label{eq:neural_recursion}
x^{(l)} = \frac{1}{\sqrt{m}} \sigma\left(W_l x^{(l-1)}\right), \quad l \in \{1, 2, \ldots, L\},
\end{equation}
where $x^{(0)} = \phi(s, a) \in \mathbb{R}^n$ is the input, $W_1 \in \mathbb{R}^{m \times n}$ and $W_l \in \mathbb{R}^{m \times m}$ for $2 \leq l \leq L$ are weight matrices, $m$ is the network width, and $\sigma(\cdot)$ is the activation function applied element-wise. The approximate Q-function is then
\begin{equation}\label{eq:neural_q}
Q_g(\phi(s, a); \zeta_g) = \frac{1}{\sqrt{m}} b_g^\top x^{(L)},
\end{equation}
where $\zeta_g = (\mathrm{Vec}(W_1); \ldots; \mathrm{Vec}(W_L)) \in \mathbb{R}^{m(n + (L-1)m)}$ collects all trainable weights with $\mathrm{Vec}(\cdot)$ denoting vectorization by stacking columns, and $b_g \in \mathbb{R}^m$ is a fixed random vector. The network is initialized by drawing each entry of $W_l^0$ i.i.d.\ from $\mathcal{N}(0, 1)$ and each entry of $b_g$ uniformly from $\{-1, +1\}$. We denote $\zeta_{g,0} = (\mathrm{Vec}(W_1^0); \ldots; \mathrm{Vec}(W_L^0))$ as the initial parameters.

\begin{assumption}[Smooth Activation]\label{assump:activation}
The activation function $\sigma(\cdot)$ is $L_1$-Lipschitz and $L_2$-smooth, i.e., for all $y_1, y_2 \in \mathbb{R}$: $|\sigma(y_1) - \sigma(y_2)| \leq L_1|y_1 - y_2|$ and $|\sigma'(y_1) - \sigma'(y_2)| \leq L_2|y_1 - y_2|$.
\end{assumption}

Assumption~\ref{assump:activation} is satisfied by smooth approximations of ReLU such as Sigmoid, ELU, and GeLU, which are commonly used in practice \cite{clevert2015fast, hendrycks2016gaussian}. This provides an $O(m^{-1/2})$-smooth property for the neural Q-function. To enable Neural Tangent Kernel (NTK) analysis, we constrain the critic parameters to a ball around initialization:
\begin{equation}
\mathcal{S}_R := \left\{\zeta : \|\zeta - \zeta_0\|_2 \leq R\right\},
\end{equation}
and denote the Euclidean projection onto $\mathcal{S}_R$ as $\Pi_{\mathcal{S}_R}$. The local linearization of the neural Q-network at initialization defines the function class
\begin{align}
\label{eq:linearized_class}
    &\mathcal{F}_{R,m} := \Big\{\hat{Q}_g(\cdot; \zeta) = Q_g(\cdot; \zeta_{g,0}) + \langle \nabla_\zeta Q_g(\cdot; \zeta_{g,0}), \zeta - \zeta_{g,0} \rangle \Big| \zeta \in \mathcal{S}_R \Big\}.
\end{align}

%% file: Sections/algorithm.tex
\section{Algorithm}
\label{sec:algorithm}
We solve \eqref{eq:cmdp_param} via a primal-dual algorithm that updates the policy parameter $\theta$ and dual variable $\lambda$ iteratively. Starting from an arbitrary initial point $(\theta_0, \lambda_0 = 0)$, the ideal updates are
\begin{align}
    \label{eq:ideal_update}
    \theta_{k+1} &= \theta_k + \alpha F(\theta_k)^{-1}\nabla_\theta L(\theta_k, \lambda_k) \\
    \lambda_{k+1} &= \mathcal{P}_{[0, 2/\delta]}\left[\lambda_k - \beta J_c(\theta_k)\right],
\end{align}
where $\alpha$ and $\beta$ are primal and dual learning rates respectively, $\delta$ is the Slater parameter from Assumption~\ref{assump:slater}, and $\mathcal{P}_\Lambda$ denotes projection onto $\Lambda$. Since $\nabla_\theta L(\theta_k, \lambda_k)$, $F(\theta_k)$, and $J_c(\theta_k)$ are not exactly computable due to the unknown transition kernel $P$ and stationary distribution $\nu^{\pi_{\theta_k}}$, we employ approximate updates
\begin{align}\label{eq:approx_update}
\theta_{k+1} = \theta_k + \alpha \omega_k, \quad \lambda_{k+1} = \mathcal{P}_{[0, 2/\delta]}\left[\lambda_k - \beta \eta_c^k\right],
\end{align}
where $\omega_k$ estimates the NPG direction $F(\theta_k)^{-1}\nabla_\theta L(\theta_k, \lambda_k)$ and $\eta_c^k$ estimates $J_c(\theta_k)$. Our algorithm, Primal-Dual Natural Actor-Critic with Neural Critic (PDNAC-NC), is presented in Algorithm~\ref{alg:pdnac_nc}. It employs a Multi-Level Monte Carlo (MLMC) estimation procedure within a nested loop structure where the outer loop of length $K$ updates the primal-dual parameters, and the inner loops of length $H$ execute the neural critic and NPG estimation subroutines.

\subsection{Neural Critic Estimation}
\label{subsec: neural critic estimation}
At the $k$-th epoch, the critic estimation subroutine serves two purposes: (i) obtain an estimate $\eta_g^k$ of the average reward/cost $J_g(\theta_k)$ for $g \in \{r, c\}$, and (ii) obtain the neural network parameters $\zeta_g^k$ that approximate the Q-function $Q_g^{\pi_{\theta_k}}$. For the average reward estimation, note that $J_g(\theta_k)$ solves the optimization
\begin{align}
\label{eq:avg_reward_opt}
\min_{\eta \in \mathbb{R}} \mathcal{R}_g(\theta_k, \eta) = \frac{1}{2} \mathbb{E}_{\nu_g^{\pi_{\theta_k}}} \left(\eta - g(s, a)\right)^2
\end{align}
where the expectation is taken over the state-action occupancy measure. For the Q-function approximation, we minimize the mean-squared projected Bellman error using the linearized neural network class $\mathcal{F}_{R,m}$. Define the critic objective
\begin{align}
\label{eq:critic_obj}
&\mathcal{E}_g(\theta_k, \zeta)\!=\!\frac{1}{2} \mathbb{E}_{\nu_g^{\pi_{\theta_k}}} \left(Q_g(\phi(s,a); \zeta) - Q_g^{\pi_{\theta_k}}(s, a)\right)^2
\end{align}
Thus, by the Bellman equation, the gradients are represented as
\begin{align}
    &\nabla_\eta \mathcal{R}_g(\theta_k, \eta) = \mathbb{E}_{\nu_g^{\pi_{\theta_k}}}\left[\eta - g(s, a)\right] \\
    &\nabla_\zeta \mathcal{E}_g(\theta_k, \zeta) = \mathbb{E}_{\nu_g^{\pi_{\theta_k}}} \bigl[\bigl(Q_g(\phi(s, a); \zeta) - g(s, a) + J_g(\theta_k) - \mathbb{E}\big[Q_g^{\pi_{\theta_k}}(s', a')\big]\bigr) \nabla_\zeta Q(\phi(s, a); \zeta_{g,0}) \bigr],
\end{align}
where the expectation for \(\nabla_\zeta \mathcal{E}(\theta_k, \zeta)\) is taken over $s' \sim P(\cdot|s, a)$, and $a' \sim \pi_{\theta_k}(\cdot|s')$. However, we cannot compute the exact gradients as we do not know the state-action occupancy measure. Thus, we perform the following gradient descent steps:
\begin{equation}
    \eta_{g, h+1}^k = \eta_{g, h}^k - c_\gamma \gamma_\xi \hat{\nabla}_\eta \mathcal{R}(\theta_k, \eta_{g, h}^k)
\end{equation}
\vspace{-.3in}

\begin{algorithm}[H]
\caption{Primal-Dual Natural Actor-Critic with Neural Critic (PDNAC-NC)}
\label{alg:pdnac_nc}
\begin{algorithmic}[1]
\REQUIRE Initial parameters $\theta_0$, $\lambda_0 = 0$; neural critic initialization $\zeta_{g,0}$ for $g \in \{r, c\}$; step sizes $\alpha, \beta, \gamma_\xi, \gamma_\omega$; outer loops $K$; inner loops $H$; truncation $T_{\max}$; projection radius $R$
\STATE Initialize: $s_0 \sim \rho(\cdot)$
\STATE \textbf{Critic Initialization:} Draw each entry of $W_l^0 \sim \mathcal{N}(0, 1)$ for $l = 1, \ldots, L$, and each entry of $b_g \sim \text{Unif}\{-1, +1\}$
\FOR{$k = 0, \ldots, K-1$}
    \STATE $\xi_{g,0}^k \gets [0, \zeta_{g,0}^\top]^\top$, $\omega_{g,0}^k \gets \mathbf{0}$ for all $g \in \{r, c\}$
    \FOR{$h = 0, \ldots, H-1$}
        \STATE $s_{kh}^0 \gets s_0$, draw $P_h^k \sim \text{Geom}(1/2)$
        \STATE $\ell_{kh} \gets (2^{P_h^k} - 1)\mathbf{1}(2^{P_h^k} \leq T_{\max}) + 1$
        \FOR{$t = 0, \ldots, \ell_{kh} - 1$}
            \STATE Take action $a_{kh}^t \sim \pi_{\theta_k}(\cdot | s_{kh}^t)$
            \STATE Observe $s_{kh}^{t+1} \sim P(\cdot | s_{kh}^t, a_{kh}^t)$
            \STATE Observe $g(s_{kh}^t, a_{kh}^t)$ for $g \in \{r, c\}$
        \ENDFOR
        \STATE $s_0 \gets s_{kh}^{\ell_{kh}}$
        \STATE Compute $v_g^{\mathrm{MLMC}}(\theta_k, \xi_{g,h}^k)$ via \eqref{eq:critic_mlmc} for $g \in \{r, c\}$
        \STATE Update $\xi_{g,h+1}^k \gets \Pi_{\mathcal{S}_R}\left(\xi_{g,h}^k - \gamma_\xi v_g^{\mathrm{MLMC}}(\theta_k, \xi_{g,h}^k)\right)$
    \ENDFOR
    \FOR{$h = 0, \ldots, H-1$}
        \STATE $s_{kh}^0 \gets s_0$, draw $Q_h^k \sim \text{Geom}(1/2)$
        \STATE $\ell_{kh} \gets (2^{Q_h^k} - 1)\mathbf{1}(2^{Q_h^k} \leq T_{\max}) + 1$
        \FOR{$t = 0, \ldots, \ell_{kh} - 1$}
            \STATE Take action $a_{kh}^t \sim \pi_{\theta_k}(\cdot | s_{kh}^t)$
            \STATE Observe $s_{kh}^{t+1} \sim P(\cdot | s_{kh}^t, a_{kh}^t)$
            \STATE Observe $g(s_{kh}^t, a_{kh}^t)$ for $g \in \{r, c\}$
        \ENDFOR
        \STATE $s_0 \gets s_{kh}^{\ell_{kh}}$
        \STATE Compute $u_g(\theta_k, \omega_{g,h}^k, \xi_g^k)$ via \eqref{eq:npg_mlmc} for $g \in \{r, c\}$
        \STATE Update $\omega_{g,h+1}^k \gets \omega_{g,h}^k - \gamma_\omega u_g(\theta_k, \omega_{g,h}^k, \xi_g^k)$
    \ENDFOR
    \STATE $\xi_g^k \gets \xi_{g,H}^k$, $\omega_g^k \gets \omega_{g,H}^k$ for $g \in \{r, c\}$
    \STATE $\omega_k \gets \omega_r^k + \lambda_k \omega_c^k$
    \STATE $\theta_{k+1} \gets \theta_k + \alpha \omega_k$
    \STATE $\lambda_{k+1} \gets \mathcal{P}_{[0, 2/\delta]}\left[\lambda_k - \beta \eta_c^k\right]$
\ENDFOR
\end{algorithmic}
\end{algorithm}
\vspace{-.2in}

\begin{equation}
    \zeta_{g, h+1}^k = \mathcal{P}_{\mathcal{S}_R} \left(\zeta_{g, h}^k - \gamma_\xi \hat{\nabla}_\zeta \mathcal{E}(\theta_k, \zeta_{g, h}^k)\right)
\end{equation}
where \(\mathcal{P}_{\mathcal{S}_R}\) is the projection onto NTK ball, \(c_\gamma\) is a scaling parameter, \(\gamma_\xi\) is the learning rate, and \(\hat{\nabla}_{\eta} \mathcal{R}\) and \(\hat{\nabla}_{\zeta} \mathcal{E}\) are estimates of the gradients. We can also represent this through the combined critic parameter \(\xi_{g, h}^k = \left[\eta_{g, h}^k\quad {\zeta_{g, h}^k}^\top\right]^\top\) and the combined gradient

\begin{align}
    v_g(\theta, \xi_{g, h}^k; z_{kh}^j) = \begin{bmatrix}
        c_\gamma \hat{\nabla}_\eta \mathcal{R}\left(\theta_k, \eta_{g, h}^k; z_{kh}^j\right) \\ \hat{\nabla}_\zeta \mathcal{E}\left(\theta_k, \zeta_{g, h}^k; z_{kh}^j\right)
    \end{bmatrix}
\end{align}
where \(\hat{\nabla}_\eta \mathcal{R}\left(\theta_k, \eta_{g, h}^k; z_{kh}^j\right)\) and \(\hat{\nabla}_\zeta \mathcal{E}\left(\theta_k, \zeta_{g, h}^k; z_{kh}^j\right)\) are single transition estimates of the corresponding gradients. To reduce bias from Markovian sampling while maintaining computational efficiency, we employ MLMC estimation. For each inner iteration $h$, we draw $P_h^k \sim \text{Geom}(1/2)$ and generate a trajectory of length $\ell_{kh} = (2^{P_h^k} - 1)\mathbf{1}(2^{P_h^k} \leq T_{\max}) + 1$. The MLMC estimate is
\begin{align}
\label{eq:critic_mlmc}
    v_g^{\mathrm{MLMC}}(\theta_k, \xi_{g,h}^k) 
    &= v_{g,kh}^0 + \begin{cases} 2^{P_h^k}\left(v_{g,kh}^{P_h^k} - v_{g,kh}^{P_h^k - 1}\right) & \text{if } 2^{P_h^k} \leq T_{\max} \\ 0, & \text{otherwise} \end{cases}
\end{align}
where $v_{g,kh}^j = 2^{-j} \sum_{t=0}^{2^j - 1} v_g(\theta_k, \xi_{g,h}^k; z_{kh}^t)$ for $j \in \{0, P_h^k - 1, P_h^k\}$. The critic parameters are updated via projected gradient descent:
\begin{align}\label{eq:critic_update}
\xi_{g,h+1}^k = \Pi_{\mathcal{S}_R}\left(\xi_{g,h}^k - \gamma_\xi v_g^{\mathrm{MLMC}}(\theta_k, \xi_{g,h}^k)\right),
\end{align}
where $\gamma_\xi$ is the critic learning rate and $\Pi_{\mathcal{S}_R}$ projects only the \(\zeta\) portion onto the set \(\mathcal{S}_R\). After $H$ inner iterations, we obtain $\xi_g^k = \xi_{g,H}^k = [\eta_g^k, (\zeta_g^k)^\top]^\top$.

The MLMC estimator achieves the same bias as averaging $T_{\max}$ samples but requires only $\mathcal{O}(\log T_{\max})$ samples on average. Moreover, since drawing from a geometric distribution does not require knowledge of the mixing time, our algorithm eliminates the mixing time oracle assumption used in prior works \citep{ganesh2025orderoptimal}.

\subsection{Natural Policy Gradient Estimation}

Given the critic estimates $\xi_g^k = [\eta_g^k, (\zeta_g^k)^\top]^\top$ from the previous subroutine, we now estimate the NPG direction $\omega_{g,\theta_k}^* = F(\theta_k)^{-1}\nabla_\theta J_g(\theta_k)$. Recall from \eqref{eq:npg_opt} that $\omega_{g,\theta_k}^*$ minimizes $f_g(\theta_k, \omega) = \frac{1}{2}\omega^\top F(\theta_k)\omega - \omega^\top \nabla_\theta J_g(\theta_k)$.

For a transition $z_{kh}^j = (s_{kh}^j, a_{kh}^j, s_{kh}^{j+1})$, we define the single-sample estimates
\begin{align}
    &\hat{F}(\theta_k; z_{kh}^j) = \nabla_\theta \log \pi_{\theta_k}(a_{kh}^j | s_{kh}^j)\!\otimes\!\nabla_\theta \log \pi_{\theta_k}(a_{kh}^j | s_{kh}^j), \label{eq:fisher_sample} \\
    &\hat{\nabla}_\theta J_g(\theta_k, \xi_g^k; z_{kh}^j) = \hat{A}_g^{\pi_{\theta_k}}(\xi_g^k; z_{kh}^j) \nabla_\theta \log \pi_{\theta_k}(a_{kh}^j | s_{kh}^j), \label{eq:pg_sample}
\end{align}
where the advantage is estimated via the neural temporal difference error
\begin{align}
    \label{eq:neural_td}
    \hat{A}_g^{\pi_{\theta_k}}(\xi_g^k; z_{kh}^j) &= g(s_{kh}^j, a_{kh}^j) - \eta_g^k + Q_g(\phi(s_{kh}^{j+1}, a'); \zeta_g^k) - Q_g(\phi(s_{kh}^j, a_{kh}^j); \zeta_g^k),
\end{align}
with $a' \sim \pi_{\theta_k}(\cdot|s_{kh}^{j+1})$. The single-sample NPG gradient estimate is
\begin{align}\label{eq:npg_grad_sample}
    \nabla_\omega \hat{f}_g(\theta_k, \omega_{g,h}^k, \xi_g^k; z_{kh}^j)
    &= \hat{F}(\theta_k; z_{kh}^j)\omega_{g,h}^k - \hat{\nabla}_\theta J_g(\theta_k, \xi_g^k; z_{kh}^j).
\end{align}

Similarly to the critic estimation, we apply MLMC to obtain variance-reduced estimates. For each inner iteration $h$, we draw $Q_h^k \sim \text{Geom}(1/2)$ and generate a trajectory of length $\ell_{kh} = (2^{Q_h^k} - 1)\mathbf{1}(2^{Q_h^k} \leq T_{\max}) + 1$. The MLMC estimate is
\begin{align}\label{eq:npg_mlmc}
    u_g(\theta_k, \omega_{g,h}^k, \xi_g^k) &= u_{g,kh}^0 + \begin{cases} 2^{Q_h^k}\left(u_{g,kh}^{Q_h^k} - u_{g,kh}^{Q_h^k - 1}\right), & \text{if } 2^{Q_h^k} \leq T_{\max} \\ 0, & \text{otherwise} \end{cases}
\end{align}
where $u_{g,kh}^j = 2^{-j} \sum_{t=0}^{2^j - 1} \nabla_\omega \hat{f}_g(\theta_k, \omega_{g,h}^k, \xi_g^k; z_{kh}^t)$ for $j \in \{0, Q_h^k - 1, Q_h^k\}$. The NPG parameters are updated as
\begin{align}\label{eq:npg_update}
\omega_{g,h+1}^k = \omega_{g,h}^k - \gamma_\omega u_g(\theta_k, \omega_{g,h}^k, \xi_g^k),
\end{align}
where $\gamma_\omega$ is the NPG learning rate. After $H$ inner iterations, we obtain $\omega_g^k = \omega_{g,H}^k$ for each $g \in \{r, c\}$.

\subsection{Key Algorithmic and Technical Novelties}

Several aspects of Algorithm~\ref{alg:pdnac_nc} merit further discussion.

\paragraph{MLMC for Mixing Time Independence.} The MLMC estimators \eqref{eq:critic_mlmc} and \eqref{eq:npg_mlmc} achieve the same bias as averaging $T_{\max}$ samples while requiring only $O(\log T_{\max})$ samples in expectation. Crucially, since the geometric distribution $\text{Geom}(1/2)$ does not depend on the mixing time, our algorithm eliminates the need for a mixing time oracle, a requirement in many prior works \citep{bai2024learning, ganesh2025orderoptimal}. This allows us to achieve convergence rates without knowledge of the exact value of $\tau_{\mathrm{mix}}$, simply an upper bound suffices.

\paragraph{Neural Tangent Kernel Regime.} The projection $\Pi_{\mathcal{S}_R}$ onto the ball of radius $R$ around the initialization $\zeta_{g,0}$ ensures the critic parameters remain in the NTK regime. In this regime, the neural network behaves approximately linearly \citep{jacot2018neural}, enabling us to leverage the linearized function class $\mathcal{F}_{R,m}$ for analysis. When the width $m$ is sufficiently large, the linearization error becomes negligible.
\paragraph{Runtime Efficiency.} Unlike recent neural actor-critic algorithms that rely on data dropping techniques to mitigate statistical dependencies \citep{gaur2024closing, ganesh2025orderoptimal} by utilizing only one sample out of every \(\tau_{\mathrm{mix}}\) steps, our MLMC-based approach eliminates the need to discard data. By correcting for Markovian bias through multi-level estimation rather than sample thinning, our algorithm utilizes the entire collected trajectory.

%% file: Sections/assumptions.tex
\section{Assumptions}\label{sec:assumptions}
\label{sec: assumptions}
We now state the assumptions required for our analysis.

\subsection{Policy Parameterization Assumptions}

\begin{assumption}[Bounded and Lipschitz Score]\label{assump:score}
For all $\theta, \theta_1, \theta_2 \in \Theta$ and $(s, a) \in \mathcal{S} \times \mathcal{A}$:
\begin{align}
    &\|\nabla_\theta \log \pi_\theta(a|s)\| \leq G_1 \\
    &\|\nabla_\theta \log \pi_{\theta_1}(a|s)\!-\!\nabla_\theta \log \pi_{\theta_2}(a|s)\|\!\leq\!G_2\|\theta_1 - \theta_2\|
\end{align}
\end{assumption}

Assumption~\ref{assump:score} requires that the score function $\nabla_\theta \log \pi_\theta(a|s)$ is uniformly bounded and Lipschitz continuous in $\theta$. This is a standard assumption in policy gradient analysis \citep{liu2020improved, agarwal2021theory, papini2018stochastic, fatkhullin2023stochastic, mondal2024improved}. It holds for many common policy parameterizations, including Gaussian policies with linearly parameterized means and bounded covariance, softmax policies with bounded features, and certain neural network policies with bounded weights.

\begin{assumption}[Fisher Non-degeneracy]\label{assump:fisher}
There exists $\mu > 0$ such that $F(\theta) \succeq \mu I_d$ for all $\theta \in \Theta$, where $I_d$ denotes the $d$-dimensional identity matrix.
\end{assumption}

Assumption~\ref{assump:fisher} ensures that the Fisher information matrix $F(\theta)$ has eigenvalues bounded away from zero. This guarantees that the NPG direction $\omega^* = F(\theta)^{-1}\nabla_\theta J(\theta)$ is well-defined and that the optimization problem \eqref{eq:npg_opt} is $\mu$-strongly convex. This assumption is standard for deriving global complexity bounds in policy gradient methods \citep{liu2020improved, zhang2021convergence, bai2023achieving, fatkhullin2023stochastic, mondal2024improved}.

\begin{assumption}[Transferred Compatible Function Approximation]\label{assump:transfer}
Define for $\theta, \omega \in \mathbb{R}^d$, $\lambda \geq 0$, and $\nu \in \Delta(\mathcal{S} \times \mathcal{A})$:
\begin{equation}
\mathcal{L}_\nu(\omega, \theta, \lambda) = \mathbb{E}_{\nu}\left[\left(\nabla_\theta \log \pi_\theta(a|s) \cdot \omega - A^{\pi_\theta}(s, a)\right)^2\right].
\end{equation}
where \(A^{\pi_\theta}(s, a) = A_r^{\pi_\theta}(s, a) + \lambda A_c^{\pi_\theta}(s, a)\) is the combined advantage function and the expectation is taken under the distribution $\nu$. Let $\omega_{\theta,\lambda}^* = \arg\min_\omega \mathcal{L}_{\nu^{\pi_\theta}}(\omega, \theta, \lambda) = F(\theta)^{-1}\nabla_\theta L(\theta, \lambda)$. There exists $\epsilon_{\mathrm{bias}} \geq 0$ such that
\begin{equation}
\mathcal{L}_{\nu^{\pi_{\theta^*}}}(\omega_{\theta,\lambda}^*, \theta, \lambda) \leq \epsilon_{\mathrm{bias}}
\end{equation}
for all $\theta \in \Theta$ and $\lambda \in [0, 2/\delta]$, where $\pi_{\theta^*}$ is optimal for the average reward CMDP \eqref{eq:cmdp_original} and $\nu^{\pi_{\theta^*}}$ is the state-action occupancy measure.
\end{assumption}

Assumption~\ref{assump:transfer} bounds the \emph{transferred} compatible function approximation error, measuring how well the NPG direction computed under the current policy $\pi_\theta$ approximates the advantage function under the optimal policy $\pi_{\theta^*}$, and is a common assumption in policy gradient literature \citep{agarwal2021theory,fatkhullin2023stochastic,ganesh2025orderoptimal,ganesh2025regret,satheesh2026regret}. The term $\epsilon_{\mathrm{bias}}$ reflects the expressivity of the policy parameterization. When the policy class is expressive enough to represent any policy, \(\epsilon_{\mathrm{bias}} = 0\). One example for where this holds is for tabular policies with softmax parameterization. If the policy class is not expressive enough, then \(\epsilon_{\mathrm{bias}} > 0\), but recent work has shown that this factor is negligble when using neural network parameterizations common in deep RL \citep{wang2020neural}.

\subsection{Neural Network Assumptions}
\begin{assumption}[Neural Critic Approximation Error]\label{assump:critic_approx}
The worst-case critic approximation error
\begin{equation}
\epsilon_{\mathrm{app}} = \sup_{\theta \in \Theta} \max_{g \in \{r,c\}} \mathbb{E}_{\nu^{\pi_\theta}}\left[\left(Q_g^{\pi_\theta}(s, a) - \Pi_{\mathcal{F}_{R,m}} Q_g^{\pi_\theta}(s, a)\right)^2\right]
\end{equation}
is finite, where $\Pi_{\mathcal{F}_{R,m}}$ denotes projection onto the linearized function class \eqref{eq:linearized_class} and the expectation is taken under the state-action occupancy measure $\nu^{\pi_\theta}$.
\end{assumption}
Assumption~\ref{assump:critic_approx} ensures that the linearized neural network class $\mathcal{F}_{R,m}$ can approximate the true Q-function $Q_g^{\pi_\theta}$ with bounded error. This is analogous to the realizability assumption in linear function approximation but adapted to the NTK regime.

%% file: Sections/theoretical_analysis.tex
\section{Theoretical Analysis}
\label{sec:theory}

In this section, we provide the theoretical guarantees for PDNAC-NC. Our analysis relies on the Neural Tangent Kernel (NTK) regime, where the neural network is well-approximated by its linearization around initialization. We proceed in three steps: (1) analyzing the neural critic by bounding its deviation from the linearized updates and establishing convergence, (2) bounding the error of the Natural Policy Gradient (NPG) estimator, and (3) establishing the global convergence of the primal-dual updates.

\subsection{Neural Critic Analysis}

We first analyze the behavior of the neural critic parameters $\xi_{g,h}^k$ (Algorithm~\ref{alg:pdnac_nc}, Line 15). To facilitate the analysis, we introduce a reference sequence of \textit{linearized updates} $\{\tilde{\xi}_{g,h}^k\}_{h \geq 0}$, defined as:
\begin{equation}
    \tilde{\xi}_{g,h+1}^k = \Pi_{S_R} \left( \tilde{\xi}_{g,h}^k - \gamma_\xi \hat{v}_g(\theta_k, \tilde{\xi}_{g,h}^k) \right), \, \tilde{\xi}_{g,0}^k = \xi_{g,0}^k = 0,
\end{equation}
where $\hat{v}_g$ is the gradient using the linearized network approximation. The following lemma bounds the deviation between the actual neural updates and this linearized sequence.

\begin{lemma}[Linearized Update Bounds]
\label{lemma:linearized_update}
Suppose the assumptions in Section~\ref{sec: assumptions} hold. Let $\gamma_\xi = \frac{8 \log T}{\lambda H}$. Then, for any epoch $k$, the expected squared difference between the neural and linearized critic iterates satisfies:
\begin{align}
    &\mathbb{E}_{\theta_k} \left[ \|\tilde{\xi}_{g,h+1}^k - \xi_{g,h+1}^k\|^2 \right] \nonumber \\
    &\leq \tilde{\mathcal{O}}\left( \frac{R^2 \tau_{\mathrm{mix}}}{\lambda T_{\max}} + \frac{R^2 \log(H/\delta) \log T_{\max}}{\lambda \sqrt{m}} \right)
\end{align}
\end{lemma}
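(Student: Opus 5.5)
We compare one step of the neural recursion with one step of the linearized recursion and unroll the resulting perturbed contraction. Write $\Delta_h = \tilde{\xi}_{g,h}^k - \xi_{g,h}^k$, with $\Delta_0 = 0$. Projection onto $\mathcal{S}_R$ (acting only on the $\zeta$-block, and as the identity on the $\eta$-coordinate) is nonexpansive, so
\begin{align*}
\|\Delta_{h+1}\|^2 \le \|\Delta_h - \gamma_\xi(\hat v_g(\theta_k,\tilde\xi_{g,h}^k) - v_g^{\mathrm{MLMC}}(\theta_k,\xi_{g,h}^k))\|^2 .
\end{align*}
We split the gradient difference into three pieces:
\begin{align*}
\hat v_g(\theta_k,\tilde\xi_{g,h}^k) - v_g^{\mathrm{MLMC}}(\theta_k,\xi_{g,h}^k) = \underbrace{[\hat v_g(\tilde\xi_{g,h}^k) - \hat v_g(\xi_{g,h}^k)]}_{(a)} + \underbrace{[\hat v_g(\xi_{g,h}^k) - v_g^{\mathrm{MLMC}}(\xi_{g,h}^k)]}_{(b)} .
\end{align*}
Term (b) is then further divided into a part whose conditional mean (over the MLMC draw) is the exact linearized-minus-neural population gradient, plus a zero-mean noise part.

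For term (a), the linearized critic objective is quadratic, and its Hessian is the NTK-feature Gram matrix augmented by the $c_\gamma$ coordinate. Following the linear-critic analysis of \citet{xu2025global}, with $c_\gamma$ chosen large enough, the expected direction $\mathbb{E}[\hat v_g]$ is $\lambda$-strongly monotone. Here $\lambda$ denotes the smallest eigenvalue of the NTK/Bellman operator matrix, not the dual variable. Hence
\begin{align*}
\langle \Delta_h, \mathbb{E}[(a)]\rangle \ge \lambda\|\Delta_h\|^2 ,
\end{align*}
and $\|(a)\|\le c\|\Delta_h\|$, since the features $\nabla_\zeta Q(\cdot;\zeta_{g,0})$ are bounded by $\mathcal{O}(1)$.

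For term (b) we use two ingredients.
\begin{itemize}
\item[(i)] \textbf{NTK linearization.} Inside $\mathcal{S}_R$, the standard bounds for smooth activations (Assumption~\ref{assump:activation}) give $|Q_g(\cdot;\zeta)-\hat Q_g(\cdot;\zeta)| = \mathcal{O}(R^2/\sqrt m)$ and $\|\nabla_\zeta Q_g(\cdot;\zeta)-\nabla_\zeta Q_g(\cdot;\zeta_{g,0})\| = \mathcal{O}(R/\sqrt m)$. These hold with probability $1-\delta$ over initialization uniformly across the $H$ iterates, which is where the $\log(H/\delta)$ factor enters. So the per-sample neural and linearized gradients differ by $\tilde{\mathcal{O}}(R/\sqrt m)$ in norm.
\item[(ii)] \textbf{MLMC bias and second moment.} These are taken from the appendix's MLMC lemmas, stated earlier for the linear case and applied to the linearized network, whose features are bounded. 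The bias satisfies $\|\mathbb{E}[v^{\mathrm{MLMC}}]-\nabla\|^2 = \mathcal{O}(\tau_{\mathrm{mix}}/T_{\max})\cdot(\text{scale})^2$. The second moment is $\mathcal{O}(\tau_{\mathrm{mix}}\log T_{\max})\cdot(\text{scale})^2$. Since iterates lie in the $R$-ball, the scale is $\mathcal{O}(R)$.
\end{itemize}

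Taking the conditional expectation and using Young's inequality on the cross terms gives
\begin{align*}
\mathbb{E}\|\Delta_{h+1}\|^2 \le (1-\lambda\gamma_\xi)\,\mathbb{E}\|\Delta_h\|^2 + \gamma_\xi\,\mathcal{O}\!\left(\frac{R^2\tau_{\mathrm{mix}}}{\lambda T_{\max}} + \frac{R^2\log(H/\delta)}{\lambda\sqrt m}\right) + \gamma_\xi^2\,\mathcal{O}\!\left(R^2\tau_{\mathrm{mix}}\log T_{\max}\right).
\end{align*}
Here the bias and linearization terms have been absorbed via
\begin{align*}
2\gamma_\xi\langle\Delta_h, b\rangle \le \tfrac{\lambda\gamma_\xi}{2}\|\Delta_h\|^2 + \tfrac{2\gamma_\xi}{\lambda}\|b\|^2 .
\end{align*}

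Unrolling from $\Delta_0=0$, the geometric sum contributes a factor $1/(\lambda\gamma_\xi)$ on the drift terms. This leaves the first two terms as stated, plus a variance term $\gamma_\xi R^2\tau_{\mathrm{mix}}\log T_{\max}/\lambda$. With $\gamma_\xi = 8\log T/(\lambda H)$ the variance term is $\tilde{\mathcal{O}}(1/H)$. With $H$ and $T_{\max}$ chosen on the same polynomial scale, it is absorbed into the $\tilde{\mathcal{O}}(\cdot)$ together with the $\log T_{\max}$ factor multiplying the $1/\sqrt m$ term.

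The main obstacle is term (b)(i) combined with the Markovian noise. The TD-type gradient is not a true gradient, so strong monotonicity must come from the linearized operator alone, via the $c_\gamma$ coupling to the $\eta$-coordinate. Meanwhile the neural–linear discrepancy has to be controlled uniformly along a trajectory whose samples are correlated with the iterates. I would first establish the uniform-in-$\mathcal{S}_R$ NTK bounds, which remove the dependence on the sample path. Only after that would I apply the MLMC lemmas to each network separately.
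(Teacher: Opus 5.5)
\paragraph{There is a genuine gap: your argument ends with a bound that is not the one claimed.} Your recursion keeps an additive term $\gamma_\xi^2\,\mathcal{O}(R^2\tau_{\mathrm{mix}}\log T_{\max})$. After unrolling, this leaves $\tilde{\mathcal{O}}\bigl(R^2\tau_{\mathrm{mix}}/(\lambda^2 H)\bigr)$. The step where you absorb it does not work. $\tilde{\mathcal{O}}$ hides only polylogarithmic factors, and the lemma imposes no relation between $H$ and $T_{\max}$ or $m$. In the regime the paper eventually uses, $H=\Theta(T^{1/2})$ and $T_{\max}=T$, this term is $\tilde{\mathcal{O}}(T^{-1/2})$ and dominates $\tau_{\mathrm{mix}}/T_{\max}\sim T^{-1}$. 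Downstream it would be harmless, since the critic second-order bound carries a $\tau_{\mathrm{mix}}/(\lambda^2 H)$ term anyway, but it is not the lemma as stated.

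The issue is also more than bookkeeping. You write the reference step with $\hat v_g(\theta_k,\tilde{\xi}_{g,h}^k)$. You then split (b) into a population linearized-minus-neural part plus zero-mean MLMC noise of scale $\mathcal{O}(R)$. That reads as a reference path driven by a noiseless (or independently sampled) linearized gradient. In that case the $1/H$ term is the genuine stationary fluctuation of the stochastic neural iterate, of order $\gamma_\xi\sigma^2/\lambda$ with $\sigma^2$ the MLMC gradient variance. The stated right-hand side can be made arbitrarily small by taking $T_{\max}$ and then $m$ large at fixed $H$. So under that reading the statement cannot hold.

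\paragraph{The missing idea is to couple the two sequences through the same samples.} In the paper's proof the reference path is driven by $\hat v_g^{\mathrm{MLMC}}(\theta_k,\tilde{\xi}_{g,h}^k)$, computed from the same trajectory and the same $P_h^k$ as the neural step. Since $\hat v_g$ is affine in $\xi$ with per-sample matrix $\mathbf{A}_g(\theta_k;z)$, the two update directions differ by exactly $\mathbf{A}_{g,kh}^{\mathrm{MLMC}}\Delta_h$ minus the linearization error $v_g^{\mathrm{MLMC}}-\hat v_g^{\mathrm{MLMC}}$ evaluated at $\xi_{g,h}^k$.

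Sampling noise then enters only through $\gamma_\xi^2\|\mathbf{A}_{g,kh}^{\mathrm{MLMC}}\Delta_h\|^2$. Its conditional mean is at most $\gamma_\xi^2\,\mathcal{O}\bigl((c_\gamma^2+C_1^4)\tau_{\mathrm{mix}}\log T_{\max}\bigr)\|\Delta_h\|^2$. This is multiplicative in $\|\Delta_h\|^2$, so the drift $-2\gamma_\xi\langle\Delta_h,\mathbf{A}_g(\theta_k)\Delta_h\rangle$ absorbs it once $\gamma_\xi$ is small. You must not bound $\|\Delta_h\|$ by $2R$ there. Your uniform bound $\|(a)\|\le c\|\Delta_h\|$ also fails for the MLMC matrix: its norm is bounded only by $\mathcal{O}(T_{\max})$ almost surely, and only its second moment is controlled.

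What remains additive is the linearization error plus the MLMC bias of $\mathbf{A}$. The linearization error is handled by the first- and second-order MLMC linearization lemmas, with $\|\Delta_h\|\le 2R$ in the cross term. (The paper's appendix version keeps the squared linearization error as a third term $R^2T_{\max}\log(H/\delta)\log T/(H\lambda m)$, which the main-text statement drops.)

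For these additive terms the paper uses no contraction. It sums the $h+1\le H$ increments and uses $H\gamma_\xi=8\log T/\lambda$, which is where the $\lambda^{-1}\log T$ factor comes from. Your route of Young's inequality plus geometric unrolling yields $\lambda^{-2}$ on those terms. So even your first two terms do not match the statement, contrary to what you claim.

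Finally, the PSD lemma for $\mathbf{A}_g(\theta_k)$ gives strong monotonicity only on $\ker(\mathbf{A}_g(\theta_k))^\perp$, not for arbitrary $\Delta_h$ as you assert.
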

\textit{Proof (Sketch).} This result (proven as Lemma~\ref{lemma:linearized_update} in the Appendix) leverages the projection $\Pi_{S_R}$ to keep parameters in the NTK regime, where the Hessian of the network is small. By induction, we show that the accumulated linearization error (proportional to $m^{-1/2}$) and the difference in gradients due to MLMC noise remain bounded. The projection ensures stability, preventing the neural parameters from drifting too far from the initialization where the linear approximation is valid.

Next, we establish the convergence of the critic. Let $\xi_{g,*}^k$ be the optimal parameter in the linearized function class $\mathcal{F}_{R,m}$. We provide the second-order bound for the critic estimator.

\begin{lemma}[Critic Second Order Bound]
\label{lemma:critic_mse}
Under the same settings as Lemma~\ref{lemma:linearized_update}, after $H$ inner iterations, the critic estimator $\xi_{g,H}^k$ satisfies:
\begin{align}
    &\mathbb{E}_{\theta_k} \left[ \|\xi_{g,H}^k - \xi_{g,*}^k\|^2 \right] \nonumber \\
    &\leq \tilde{\mathcal{O}} \left( \frac{\|\xi_{g,0}^k - \xi_{g,*}^k\|^2}{T^2} \!+\! \frac{c_\gamma^2 \log(H / \delta) \tau_{\mathrm{mix}}}{\lambda^2 H} \!+\! \frac{1}{\sqrt{m}} \right)
\end{align}
\end{lemma}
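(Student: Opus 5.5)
The plan is to split the error with the linearized reference sequence $\{\tilde{\xi}_{g,h}^k\}$ introduced before Lemma~\ref{lemma:linearized_update}. Using $(a+b)^2\le 2a^2+2b^2$,
\begin{align*}
\mathbb{E}_{\theta_k}\|\xi_{g,H}^k-\xi_{g,*}^k\|^2 \le 2\,\mathbb{E}_{\theta_k}\|\xi_{g,H}^k-\tilde{\xi}_{g,H}^k\|^2 + 2\,\mathbb{E}_{\theta_k}\|\tilde{\xi}_{g,H}^k-\xi_{g,*}^k\|^2 .
\end{align*}
The first term is bounded directly by Lemma~\ref{lemma:linearized_update}. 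Once $T_{\max}$ is chosen polynomially large in $T$, its $\tau_{\mathrm{mix}}/T_{\max}$ part is absorbed into the $T^{-2}$ or $H^{-1}$ terms. The remaining part is $\tilde{\mathcal{O}}(R^2/\sqrt{m})$, which gives the $1/\sqrt{m}$ contribution with $R$ and $\lambda$ treated as constants. Everything else therefore reduces to a last-iterate analysis of projected stochastic gradient descent on the linearized objective.

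For the linearized sequence, the first step is to show that the expected linearized update direction $\bar v_g(\theta_k,\xi)$ is strongly monotone around $\xi_{g,*}^k$ with modulus $\lambda$:
\begin{align*}
\langle \bar v_g(\theta_k,\xi)-\bar v_g(\theta_k,\xi_{g,*}^k),\,\xi-\xi_{g,*}^k\rangle \ge \lambda\|\xi-\xi_{g,*}^k\|^2 .
\end{align*}
This holds for a suitable scaling $c_\gamma$, using the fact that the $\eta$-block is $1$-strongly convex and the $\zeta$-block is a linear TD operator in the NTK features. The cross terms of the non-symmetric average-reward TD operator are handled by choosing $c_\gamma$ large enough. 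Since $\xi_{g,*}^k$ is the projected fixed point in $\mathcal{S}_R$, the variational inequality gives $\langle \bar v_g(\xi_{g,*}^k),\xi-\xi_{g,*}^k\rangle\ge 0$ on the feasible set.

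The next step is to expand $\|\tilde{\xi}_{g,h+1}^k-\xi_{g,*}^k\|^2$ using nonexpansiveness of $\Pi_{\mathcal{S}_R}$. This produces a contraction term $-2\gamma_\xi\lambda\|\cdot\|^2$, a bias term from the MLMC estimator, and a second-moment term. From the MLMC appendix facts, the bias is $\mathcal{O}(\tau_{\mathrm{mix}}/T_{\max})$ in norm. The second moment is $\mathcal{O}(c_\gamma^2\tau_{\mathrm{mix}}\log T_{\max})$, since features, rewards and $R$ are bounded. Iterating the recursion gives
\begin{align*}
\mathbb{E}\|\tilde\xi_{g,H}^k-\xi_{g,*}^k\|^2\le (1-\gamma_\xi\lambda)^H\|\xi_{g,0}^k-\xi_{g,*}^k\|^2+\mathcal{O}\!\left(\frac{\gamma_\xi c_\gamma^2\tau_{\mathrm{mix}}\log T_{\max}}{\lambda}\right)+\text{bias}.
\end{align*}
Substituting $\gamma_\xi=8\log T/(\lambda H)$ gives $(1-\gamma_\xi\lambda)^H\le e^{-8\log T}\le T^{-2}$, and the variance term becomes $\tilde{\mathcal{O}}(c_\gamma^2\tau_{\mathrm{mix}}/(\lambda^2H))$. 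The $\log(H/\delta)$ factor appears if the second moment is controlled through a high-probability bound on the MLMC trajectory length, or by a union bound over the $H$ iterates, matching its appearance in Lemma~\ref{lemma:linearized_update}.

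\textbf{Main obstacle.} The hardest step is establishing the strong monotonicity constant $\lambda$ for the coupled $(\eta,\zeta)$ operator in the average-reward setting. There is no discount contraction here. Positivity must instead come from ergodicity, meaning the NTK features have no component along constant functions, combined with a large enough $c_\gamma$ to dominate the off-diagonal $J_g$ coupling. A second difficulty is that the projection acts on the $\zeta$ block only. I would handle it by applying the optimality condition block-wise, and I would also check that the Markovian correlation between consecutive MLMC trajectories, which start where the previous one ended, only affects constants through $\tau_{\mathrm{mix}}$.
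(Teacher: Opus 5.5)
Your architecture matches the paper's. You split $\|\xi_{g,H}^k-\xi_{g,*}^k\|^2$ via $(a+b)^2\le 2a^2+2b^2$ into the neural-versus-linearized deviation, controlled by Lemma~\ref{lemma:linearized_update}, plus a last-iterate recursion for $\tilde\xi_{g,h}^k$, unrolled with $\gamma_\xi=8\log T/(\lambda H)$.

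\textbf{The gap.} The step that fails is your global strong monotonicity $\langle \bar v_g(\xi)-\bar v_g(\xi_{g,*}^k),\xi-\xi_{g,*}^k\rangle\ge\lambda\|\xi-\xi_{g,*}^k\|^2$. Since $\bar v_g(\xi)=\mathbf{A}_g(\theta_k)\xi-\mathbf{b}_g(\theta_k)$ is affine, this is equivalent to $\mathbf{A}_g(\theta_k)+\mathbf{A}_g(\theta_k)^\top\succeq 2\lambda I$, which is false. The kernel $\ker\mathbf{A}_g(\theta_k)$ contains every $(0,\zeta)$ for which $\psi_g(s,a)^\top\zeta$ is constant on the support of $\nu^{\pi_{\theta_k}}$. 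In particular it contains every $\zeta$ orthogonal to all feature vectors there. For instance, if the support has $N$ state-action pairs, the block $\mathbb{E}_{\theta_k}[\psi_g(\psi_g-\psi_g')^\top]$ has rank at most $N$, while $\zeta$ lives in dimension $m(n+(L-1)m)$, so this kernel is huge.

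Ergodicity does not rescue you. It only says the Dirichlet form vanishes exactly on functions that are constant on the support; it says nothing about whether the NTK features can produce constant or zero functions. Consequently the kernel component of $\tilde\xi_{g,h}^k-\xi_{g,*}^k$ gets no contraction. Moreover, your ``projected fixed point'' $\xi_{g,*}^k$ is not unique, so even $\|\xi_{g,0}^k-\xi_{g,*}^k\|$ is undefined without further specification.

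\textbf{The missing idea} is the paper's kernel bookkeeping, in three parts:
\begin{enumerate}
\item Positivity is proved only on $\ker(\mathbf{A}_g(\theta_k))^\perp$, namely $\xi^\top\mathbf{A}_g(\theta_k)\xi\ge\frac{\lambda}{2}\|\xi\|^2$ there. This is where ergodicity enters, via the Dirichlet form and spectral gap. It is also where $c_\gamma\ge\frac{\lambda}{2}+\frac{C_1^2}{2\lambda}$ dominates the $\mathbb{E}[\psi_g]$ cross term.
\item $\xi_{g,*}^k$ is defined as the projection of the initialization onto the solution set of $\mathbf{A}_g(\theta_k)\xi=\mathbf{b}_g(\theta_k)$.
\item An invariance result imported from Ganesh et al.\ keeps $\tilde\xi_{g,h}^k-\xi_{g,*}^k\in\ker(\mathbf{A}_g(\theta_k))^\perp$ for all $h$, so the restricted positivity applies along the whole trajectory.
\end{enumerate}
If you adopt this, verify the invariance hypothesis yourself. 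It is automatic for directions orthogonal to all features, since every $\zeta$-update is a combination of the $\psi_g^j$. It is not automatic for kernel directions on which $\psi_g^\top\zeta$ is a nonzero constant.

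\textbf{Smaller points.}
\begin{enumerate}
\item The MLMC estimate of $\mathbf{A}_g$ is itself biased, so the bias term scales with $\|\tilde\xi_{g,h}^k\|$. Because $\eta$ is not projected, boundedness of $R$ does not control it. The paper instead uses $\|\tilde\xi_{g,h}^k\|^2\le 2\|\tilde\xi_{g,h}^k-\xi_{g,*}^k\|^2+2\|\xi_{g,*}^k\|^2$ and absorbs the first part into the contraction. This requires the squared MLMC bias $\Delta_{\mathbf{A}}^2$ of $\mathbf{A}_g$ to be at most $\lambda^2/32$, and yields the factor $1-\gamma_\xi\lambda/4$.
\item The MLMC bias is $\mathcal{O}(\tau_{\mathrm{mix}}/T_{\max})$ in squared norm, not in norm.
\item The $\log(H/\delta)$ factor comes from the high-probability bound $|Q_g(\cdot;\zeta_{g,0})|\le C_1'\sqrt{\log(H/\delta)}$ entering $\mathbf{b}_g$, not from trajectory lengths.
\end{enumerate}
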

\textit{Proof (Sketch).} Corresponds to Lemma~\ref{lemma: critic_second_order} in the Appendix. The proof relies on analyzing the contraction of the distance to the optimum $\xi_{g,*}^k$ under the linearized dynamics. Since the objective is strongly convex in the linearized regime (Assumption 3.5), the error contracts geometrically. We then account for the variance of the stochastic MLMC gradients and the deviation from the true neural dynamics (bounded by Lemma~\ref{lemma:linearized_update}). Unrolling the recursion yields the $T^{-2}$ convergence rate for the initial error, plus steady-state error terms dominated by MLMC variance and linearization error.



\subsection{NPG Estimation Error}

Using the critic bounds, we analyze the Natural Policy Gradient estimator $\omega_g^k$. The estimator minimizes a quadratic objective whose gradient depends on the critic. We characterize its error in terms of the critic's error and MLMC sampling noise.

\begin{theorem}[NPG Estimation Error]
\label{thm:npg_estimation_error}
Suppose the assumptions of Section 3 hold. Let $\gamma_\omega = \frac{2 \log T}{\mu H}$ and $\omega_{g,0}^k = 0$. Let $\omega_{g,*}^k = F(\theta_k)^{-1} \nabla_\theta J_g(\theta_k)$. Then:
\begin{enumerate}
    \item The mean-squared error satisfies:
    \begin{align}
        \mathbb{E}_k \left[ \|\omega_g^k - \omega_{g,*}^k\|^2 \right] \leq \tilde{\mathcal{O}}\left(\frac{\tau_{\mathrm{mix}}^3}{H} + \frac{1}{\sqrt{m}} + \tau_{\mathrm{mix}}^2 \epsilon_{\mathrm{app}} \right)
    \end{align}
    \item The squared bias satisfies:
    \begin{align}
        \| \mathbb{E}_k [\omega_g^k] - \omega_{g,*}^k \|^2 \leq \tilde{\mathcal{O}}\left(\frac{\log(H / \delta)}{H} + \frac{1}{\sqrt{m}} + \tau_{\mathrm{mix}}^2\epsilon_{\mathrm{app}} \right)
    \end{align}
\end{enumerate}
\end{theorem}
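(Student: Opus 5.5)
The plan is to treat the NPG inner loop as stochastic gradient descent on the $\mu$-strongly convex quadratic $f_g(\theta_k,\cdot)$ in \eqref{eq:npg_opt}. The gradient oracle $u_g$ is corrupted in two ways: by Markovian MLMC noise, and by a systematic bias coming from the critic, since the neural TD advantage \eqref{eq:neural_td} replaces $A_g^{\pi_{\theta_k}}$. Throughout we condition on $\theta_k$ and on the critic output $\xi_g^k$, which is fixed during the NPG loop.

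\textbf{Decomposing the oracle.} First I decompose the oracle as
\[
\mathbb{E}[u_g \mid \mathcal{F}_{kh}] = \nabla_\omega f_g(\theta_k,\omega_{g,h}^k) + b_{kh}(\xi_g^k) + e_{kh}.
\]
\begin{itemize}
\item The term $b_{kh}(\xi_g^k)=\mathbb{E}_{\nu^{\pi_{\theta_k}}}[(\hat A_g(\xi_g^k;z)-A_g^{\pi_{\theta_k}}(s,a))\nabla_\theta\log\pi_{\theta_k}]$ is the critic-induced bias.
\item The term $e_{kh}$ is the residual Markovian bias of the MLMC estimator. By the standard MLMC lemma (Appendix on MLMC estimation), it is $\tilde{\mathcal O}(\tau_{\mathrm{mix}}/T_{\max})$ times the gradient scale.
\item The second moment of $u_g$ is $\tilde{\mathcal O}(\tau_{\mathrm{mix}}\log T_{\max})$ times $(G_1^4\|\omega\|^2+G_1^2\sup|\hat A_g|^2)$.
\end{itemize}
To bound $b_{kh}$, I use $G_1$ from Assumption~\ref{assump:score} and Cauchy--Schwarz, which reduces it to $\mathbb{E}_\nu[(\eta_g^k-J_g)^2]$ plus the $\nu$-weighted squared error between $Q_g(\cdot;\zeta_g^k)$ and $Q_g^{\pi_{\theta_k}}$, evaluated both at $(s,a)$ and at the next pair $(s',a')$; stationarity makes the latter again $\nu$-distributed. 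That error splits into three pieces:
\begin{enumerate}
\item the neural-versus-linearized gap $\mathcal O(R^2 m^{-1/2})$, from smoothness in the NTK ball;
\item the linearized gap to $\xi_{g,*}^k$, controlled by $\|\xi_g^k-\xi_{g,*}^k\|^2$ via Lemma~\ref{lemma:critic_mse} and Lemma~\ref{lemma:linearized_update};
\item the approximation error $\epsilon_{\mathrm{app}}$ of Assumption~\ref{assump:critic_approx}.
\end{enumerate}
The fixed point of the projected semi-gradient TD is only within $\mathcal O(\tau_{\mathrm{mix}})\sqrt{\epsilon_{\mathrm{app}}}$ of the projection, because the average-reward Bellman operator only contracts on a $\tau_{\mathrm{mix}}$ scale. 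Hence $\|b\|^2\lesssim \tau_{\mathrm{mix}}^2\epsilon_{\mathrm{app}}+m^{-1/2}+\|\xi_g^k-\xi_{g,*}^k\|^2$. This explains the $\tau_{\mathrm{mix}}^2\epsilon_{\mathrm{app}}$ term in both parts of the statement.

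\textbf{Part 1 (mean-squared error).} I expand
\[
\|\omega_{g,h+1}^k-\omega_{g,*}^k\|^2=\|\omega_{g,h}^k-\omega_{g,*}^k-\gamma_\omega u_g\|^2,
\]
take conditional expectations, and use strong convexity ($F\succeq\mu I$) together with Young's inequality on the bias. This gives the recursion
\[
\mathbb{E}\|\omega_{h+1}-\omega_*\|^2\le(1-\mu\gamma_\omega/2)\,\mathbb{E}\|\omega_h-\omega_*\|^2+\tfrac{2\gamma_\omega}{\mu}\|b\|^2+\gamma_\omega^2\,\tilde{\mathcal O}(\tau_{\mathrm{mix}}),
\]
valid once $\gamma_\omega G_1^4\tau_{\mathrm{mix}}\log T_{\max}\lesssim\mu$; the second moment of $u_g$ contributes $\|\omega_h\|^2$, which is absorbed into the contraction. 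With $\gamma_\omega=2\log T/(\mu H)$, unrolling over $H$ steps makes the initial error $\|\omega_{g,*}^k\|^2\le (G_1\cdot\mathcal O(\tau_{\mathrm{mix}})/\mu)^2$ decay by a factor $T^{-1}$. What remains is the variance term $\tilde{\mathcal O}(\tau_{\mathrm{mix}}/H)$ times the squared advantage bound, where $|\hat A_g|=\mathcal O(\tau_{\mathrm{mix}})$ contributes the extra $\tau_{\mathrm{mix}}^2$ and gives $\tau_{\mathrm{mix}}^3/H$, plus the bias term $\mathcal O(\|b\|^2/\mu^2)$. Substituting Lemma~\ref{lemma:critic_mse}, whose $\tau_{\mathrm{mix}}/H$ term is dominated, yields Part 1.

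\textbf{Part 2 (squared bias).} I run the same recursion on $\bar\omega_h=\mathbb{E}_k[\omega_{g,h}^k]$. Since the update is affine in $\omega$, taking expectations kills the martingale noise, leaving
\[
\bar\omega_{h+1}-\omega_*=(I-\gamma_\omega F)(\bar\omega_h-\omega_*)-\gamma_\omega(\bar b+\bar e).
\]
One subtlety is that $\hat F$ is sampled along the chain rather than at stationarity; its correlation with $\omega_h$ is handled through the MLMC bias bound, giving a term of order $\tau_{\mathrm{mix}}/T_{\max}$. Contraction then gives $\|\bar\omega_H-\omega_*\|^2\lesssim T^{-2}\|\omega_*\|^2+\|\mathbb{E}b\|^2/\mu^2+\tilde{\mathcal O}(1/T_{\max})$. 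In $\|\mathbb{E}b\|^2$ the critic's stochastic fluctuation enters only through its mean. I therefore use the bias of the critic, namely the $\log(H/\delta)/H$ term, rather than its MSE, which removes the $\tau_{\mathrm{mix}}^3$ factor. Choosing $T_{\max}\ge H$ gives Part 2.

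\textbf{Main obstacle.} The hardest step is bounding $b$ cleanly. It requires converting parameter error in $\xi$ into function error in $L^2(\nu)$ for the \emph{nonlinear} network at both $(s,a)$ and $(s',a')$. This needs a uniform gradient bound on $\nabla_\zeta Q$ over $\mathcal S_R$ and the $m^{-1/2}$ Hessian bound. It also requires justifying the $\tau_{\mathrm{mix}}^2$ amplification of $\epsilon_{\mathrm{app}}$ at the TD fixed point. If the fixed-point argument proves awkward, my first attempt would be to treat $\xi_{g,*}^k$ directly as the $L^2(\nu)$ projection and absorb the difference into the $\tau_{\mathrm{mix}}^2\epsilon_{\mathrm{app}}$ term via the bound $\|Q^\pi\|_\infty=\mathcal O(\tau_{\mathrm{mix}})$.
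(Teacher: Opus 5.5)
Your route is essentially the paper's. The paper splits the bias of $\hat\nabla_\theta J_g$ into the same pieces you use:
\begin{itemize}
\item the critic optimization error $\|\xi_g^k-\xi_{g,*}^k\|$;
\item the NTK linearization residual;
\item a function-approximation term of size $\tau_{\mathrm{mix}}^2\epsilon_{\mathrm{app}}$, which the paper also only asserts, as an average-reward analogue of a result in \citet{ke2024improved};
\item an exact advantage term, cancelled by the policy gradient theorem.
\end{itemize}
The paper then feeds these into the MLMC lemma. Instead of unrolling the recursions by hand, it invokes Theorem B.1 of \citet{xu2025global}, which gives both the MSE and the bias bound, including the Fisher--iterate correlation term that you handle through the MLMC bias. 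Your fallback of redefining $\xi_{g,*}^k$ as the $L^2(\nu)$ projection would not help. The critic iterates converge to the TD solution, not the projection, and a sup-norm bound $\|Q^{\pi}\|_\infty=\mathcal O(\tau_{\mathrm{mix}})$ gives a gap with no $\epsilon_{\mathrm{app}}$ factor.

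One step in Part 2 is misjustified. The paper proves no critic squared-bias bound separate from the MSE, and it notes that one is hard to obtain. Because $\Pi_{\mathcal S_R}$ is nonlinear, you cannot push the expectation through the critic recursion the way you do for the affine NPG update. The correct move, and what the paper does, is Jensen, $\|\mathbb{E}_k[\xi_g^k]-\xi_{g,*}^k\|^2\le\mathbb{E}_k\|\xi_g^k-\xi_{g,*}^k\|^2$, followed by Lemma~\ref{lemma:critic_mse}. So the $\log(H/\delta)/H$ term in Part 2 \emph{is} the critic MSE rate, not a separate bias rate.

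Likewise, the disappearance of $\tau_{\mathrm{mix}}^3/H$ has nothing to do with the critic, whose MSE carries only one factor of $\tau_{\mathrm{mix}}$. It happens because the NPG loop's own MLMC noise averages out under $\mathbb{E}_k$, leaving only $\tau_{\mathrm{mix}}^3/T_{\max}$. That $\tau_{\mathrm{mix}}^3$ comes from the Fisher part of the second moment, $G_1^4\tau_{\mathrm{mix}}\|\omega\|^2$ with $\|\omega_{g,*}^k\|\lesssim G_1\tau_{\mathrm{mix}}/\mu$. It does not come from $|\hat A_g|$, which in the NTK ball is only $\mathcal O(\sqrt{\log(H/\delta)})$ by Lemma~\ref{lemma: ntk critic bounds}. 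With these corrections your argument yields the stated bounds.
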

\textit{Proof (Sketch).} See Theorem~\ref{thm:npg_estimation_error_bounds} in the Appendix. The NPG estimator solves a quadratic problem using stochastic gradients estimated via MLMC. We first bound the bias and variance of these gradients (Lemma~\ref{lemma: npg single sample estimate errors}), which depend on the quality of the critic $\xi_g^k$. The error in $\omega_g^k$ decomposes into optimization error (decaying with $H$), gradient variance (handled by MLMC), and gradient bias. The gradient bias is dominated by the critic's bias $\|\mathbb{E}[\xi_g^k] - \xi_{g,*}^k\|^2$ and the inherent function approximation error $\epsilon_{\mathrm{app}}$. Substituting the critic bound from Lemma~\ref{lemma:critic_mse} yields the final rates.

\subsection{Global Convergence}

Finally, we combine the NPG error bounds with the primal-dual analysis for CMDPs. The following theorem establishes the convergence rate of PDNAC-NC.

\begin{theorem}[Global Convergence]
\label{thm:main_convergence}
Suppose the assumptions in Section~\ref{sec: assumptions} hold. Let the step sizes be $\alpha = \beta = \Theta(T^{-1/4})$, $K = \Theta(T^{1/2})$, $H = \Theta(T^{1/2})$, and $T_{\max} = T$. The sequence of policies generated by Algorithm 1 satisfies:
\begin{align}
    &\frac{1}{K} \mathbb{E} \left[\sum_{k=0}^{K-1} \left( J_r^{\pi^*} - J_r(\theta_k) \right)\right] \nonumber \\
    &\leq \tilde{\mathcal{O}}\left(\sqrt{\epsilon_{\mathrm{bias}}} + \sqrt{\epsilon_{\mathrm{app}}} + T^{-1/4} + m^{-1/4}\right) \\
    &\frac{1}{K} \mathbb{E} \left[\sum_{k=0}^{K-1} - J_c(\theta_k) \right] \nonumber \\
    &\leq \tilde{\mathcal{O}}\left(\sqrt{\epsilon_{\mathrm{bias}}} + \sqrt{\epsilon_{\mathrm{app}}} + T^{-1/4} + m^{-1/4}\right)
\end{align}
\end{theorem}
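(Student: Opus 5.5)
We follow the standard primal-dual NPG template for average-reward CMDPs (as in the linear-critic analyses cited earlier) and substitute the NPG error bounds from Theorem~\ref{thm:npg_estimation_error}.

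\textbf{Step 1: one-step Lagrangian ascent.}
Write $A_{L,k} = A_r^{\pi_{\theta_k}} + \lambda_k A_c^{\pi_{\theta_k}}$ and $\omega_k^* = \omega_{r,*}^k + \lambda_k\omega_{c,*}^k$. Using $G_2$-smoothness of $\log\pi_\theta$ (Assumption~\ref{assump:score}) and the average-reward performance difference lemma $J_g^{\pi^*} - J_g(\theta) = \mathbb{E}_{\nu^{\pi^*}}[A_g^{\pi_\theta}]$, we expand the KL potential $\Phi_k = \mathbb{E}_{s\sim d^{\pi^*}}\mathrm{KL}(\pi^*(\cdot|s)\|\pi_{\theta_k}(\cdot|s))$. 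Applying this to $\theta_{k+1}=\theta_k+\alpha\omega_k$ gives
\[
\Phi_k - \Phi_{k+1} \ge \alpha\,\mathbb{E}_{\nu^{\pi^*}}[\nabla\log\pi_{\theta_k}\cdot\omega_k] - \tfrac{G_2\alpha^2}{2}\|\omega_k\|^2 .
\]
We then split $\nabla\log\pi\cdot\omega_k = A_{L,k} + (\nabla\log\pi\cdot\omega_k^* - A_{L,k}) + \nabla\log\pi\cdot(\omega_k-\omega_k^*)$. The first term yields $L(\theta^*,\lambda_k) - L(\theta_k,\lambda_k)$. The second is bounded by $\sqrt{\epsilon_{\mathrm{bias}}}$ via Assumption~\ref{assump:transfer}, which applies because $\lambda_k\in[0,2/\delta]$. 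For the third, take expectations conditional on $\theta_k$: it is bounded by $G_1\|\mathbb{E}_k[\omega_k]-\omega_k^*\|$. This is exactly where the squared-bias bound (part 2) of Theorem~\ref{thm:npg_estimation_error} enters, since $\lambda_k \le 2/\delta$ lets the bias of $\omega_k$ be assembled from the biases of $\omega_r^k$ and $\omega_c^k$. Averaging over $k$ and telescoping $\Phi$ gives
\begin{align}
\frac1K\sum_k \mathbb{E}[L(\theta^*,\lambda_k)-L(\theta_k,\lambda_k)] \le \frac{\Phi_0}{\alpha K} + \sqrt{\epsilon_{\mathrm{bias}}} + G_1\max_k\sqrt{\mathrm{bias}_k} + \frac{G_2\alpha}{2}\max_k\mathbb{E}\|\omega_k\|^2 .
\end{align}
The last term is controlled by $\|\omega_k^*\|\le G_1\cdot\mathcal{O}(\tau_{\mathrm{mix}})/\mu$ together with the MSE bound (part 1).

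\textbf{Step 2: dual variable.}
From the projected update, for any $\lambda\in[0,2/\delta]$,
\[
\frac1K\sum_k(\lambda_k-\lambda)J_c(\theta_k) \le \frac{\lambda^2}{2\beta K} + \frac{\beta}{2}\max\mathbb{E}(\eta_c^k)^2 + \frac{2}{\delta}\max_k|\mathbb{E}_k\eta_c^k - J_c(\theta_k)| .
\]
The last term is controlled by the critic bound of Lemma~\ref{lemma:critic_mse}, where the $\eta$-coordinate error is $\tilde{\mathcal{O}}(H^{-1/2}+m^{-1/4})$ after taking square roots.

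\textbf{Step 3: combining.}
Adding the two steps, $L(\theta^*,\lambda_k) = J_r^* + \lambda_k J_c(\theta^*)$ with $J_c(\theta^*)\ge 0$, so we obtain
\[
\frac1K\sum_k\bigl(J_r^*-J_r(\theta_k)\bigr) + \frac{\lambda}{K}\sum_k\bigl(-J_c(\theta_k)\bigr) \le \mathcal{E},
\]
where $\mathcal{E}$ is the sum of the error terms above.
\begin{itemize}
\item With $\lambda=0$, this gives the optimality-gap bound.
\item For the violation, take $\lambda = 2/\delta$ when the average violation is positive. Then use the standard Slater argument: strong duality for average-reward CMDPs in occupancy-measure space gives $\lambda^*\le 1/\delta_s$, and hence $J_r^* - J_r(\theta_k) \ge -\lambda^* J_c(\theta_k)$ on average. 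This yields $(2/\delta-\lambda^*)\cdot$violation $\le \mathcal{E}$, so the violation is $\mathcal{O}(\delta\mathcal{E})$.
\end{itemize}

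\textbf{Step 4: rates.}
Plug in $\alpha=\beta=T^{-1/4}$, $K=H=T^{1/2}$, and note that the per-epoch sample count is $\tilde{\mathcal{O}}(H)$ by the MLMC expected length. Then $1/(\alpha K)=T^{-1/4}$, $\alpha = T^{-1/4}$, and $\sqrt{\mathrm{bias}}=\tilde{\mathcal{O}}(H^{-1/2}+m^{-1/4}+\tau_{\mathrm{mix}}\sqrt{\epsilon_{\mathrm{app}}})$, giving the stated rate.

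\textbf{Main obstacle.}
The delicate point is Step 1's cross term. We must use only the \emph{bias} of $\omega_k$, conditional on $\theta_k$, rather than its MSE; otherwise the $\tau_{\mathrm{mix}}^3/H$ variance term would enter under a square root and degrade the rate. Doing this requires that $\omega_k$ be generated from fresh samples given $\mathcal{F}_k$. We also need the square roots of the $1/\sqrt m$ terms, which is what produces $m^{-1/4}$. Bounding $\mathbb{E}\|\omega_k\|^2$ uniformly, despite the neural critic, is the other place requiring care; the NTK projection radius $R$ keeps $Q_g(\cdot;\zeta)$ bounded and so handles it.
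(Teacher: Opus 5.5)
Your proposal is correct and follows essentially the same route as the paper: the paper's proof gets your Steps 1--3 by invoking Lemmas 4.6/G.2, Eqs.~75/79 and Lemma G.6 of \citet{xu2025global}, which are the KL-potential drift controlled by the conditional bias of $\omega_k$, the projected dual recursion at $\lambda=0$ and $\lambda=2/\delta_s$, and the Slater bound $\lambda^*\le 1/\delta_s$, and it then substitutes Theorem~\ref{thm:npg_estimation_error} and Lemma~\ref{lemma:critic_mse}. Two small remarks: the duality inequality should read $J_r^{\pi^*}-J_r(\theta_k)\ge \lambda^* J_c(\theta_k)$, which is the form your conclusion $(2/\delta-\lambda^*)\cdot\text{violation}\le\mathcal{E}$ actually uses; and because the bias bound of Theorem~\ref{thm:npg_estimation_error} already contains the $\log(H/\delta)/H$ term inherited from the critic MSE, using the MSE in the cross term would change only the $\tau_{\mathrm{mix}}$-dependence, not the $T^{-1/4}$ rate.
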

\textit{Proof (Sketch).} See Theorem~\ref{thm: cmdp convergence} in the Appendix. The proof follows the standard primal-dual analysis for CMDPs but replaces the exact NPG update with our estimated $\omega_g^k$. We bound the drift terms using the bias and variance of $\omega_g^k$. The approximation errors $\epsilon_{\mathrm{bias}}$ and $\epsilon_{\mathrm{app}}$ appear as irreducible terms due to the limitations of the policy and critic function classes. The $m^{-1/4}$ term reflects the NTK linearization error, and taking \(m = \Theta(T)\) yields an $\epsilon$-optimal solution.

%% file: Sections/conclusion.tex
\section{Conclusion}
\label{sec: conclusion}
In this work, we provide the first global convergence guarantees for infinite-horizon average-reward Constrained MDPs (CMDPs) utilizing multi-layer neural network critics and general policy parameterizations. By leveraging Neural Tangent Kernel (NTK) theory, we successfully characterize the function approximation error of deep neural critics in the continuous state-action regime. Furthermore, by integrating a nested Multi-Level Monte Carlo (MLMC) estimation subroutine, our algorithm bypasses the need for sample-thinning techniques and mixing-time oracles, ensuring that the entirety of the collected Markovian trajectories is utilized. We establish a convergence rate of $\tilde{\mathcal{O}}(T^{-1/4})$ for both the optimality gap and constraint violation.

While our work establishes strong theoretical guarantees for neural actor-critic methods in CMDPs, the achieved rate of $\tilde{\mathcal{O}}(T^{-1/4})$ is not order-optimal when compared to recent analysis for unconstrained natural actor-critic methods under Markovian sampling. The primary technical bottleneck in the constrained setting arises due to the mean squared error of the critic, as prior work in the linear regime uses the sharper squared bias term \citep{ganesh2025orderoptimal, xu2025global}. However, directly applying the squared bias bound in our case is difficult due to the projection operator from the NTK analysis. Overcoming this technical challenge to improve the bounds and achieve optimal rates remains an interesting open problem.

%% file: Sections/Appendix/related_work.tex
\section{Related Work}
\label{appendix: related work}
\paragraph{Constrained Markov Decision Processes (CMDPs) and Safe RL} 
The CMDP framework is the standard mathematical formulation for safe reinforcement learning, originally formalized by \citet{altman2021constrained}. Early solutions relied on linear programming, which struggled with the curse of dimensionality. Recently, primal-dual Policy Gradient (PG) and Natural Policy Gradient (NPG) methods have become the standard for high-dimensional CMDPs \citep{ding2020natural, paternain2019constrained}. While initial non-asymptotic analyses of these methods were limited to tabular settings or linear function approximation \citep{xu2021crpo, chen2022learning, bai2023achieving}, recent works have extended them to general policy parameterizations. However, most of these guarantees are confined to the discounted reward setting.

\textbf{Average-Reward Setting:} 
While discounted reward MDPs have been extensively studied, infinite-horizon average-reward MDPs are notoriously more challenging due to the lack of contraction properties in their Bellman operators. Early theoretical advancements in average-reward CMDPs were restricted to tabular cases or required generative models. \citet{chen2022learning,agarwal2022regret,agarwal2022concave} provided convergence rates for average-reward CMDPs but relied on  tabular policies. More recently, \citet{bai2024learning,xu2025global} explored general parameterized policies for average-reward CMDPs via a primal-dual policy gradient algorithm, but their analysis relies heavily on access to exact gradients or limits the critic's function approximation capacity, leaving the gap for deep neural network critics unexplored.

\textbf{Actor-Critic Methods and Neural Function Approximation:} 
Actor-Critic (AC) algorithms are ubiquitous in practical RL, but their theoretical analysis has historically lagged behind their empirical success. Foundational analyses of AC algorithms established finite-time global convergence but primarily assumed linear critic approximations \citep{khodadadian2021finite, cayci2021linear}. To bridge the gap between theory and deep RL practice, recent literature has adopted Neural Tangent Kernel (NTK) theory to analyze multi-layer neural network critics \citep{jacot2018neural}. Works such as \citet{fu2020single, cayci2024finite} and \citet{tian2023convergence} integrated neural critics into AC methods but analyzed them strictly within standard, unconstrained MDPs. Even the most recent state-of-the-art analyses of neural actor-critic methods such as \citet{gaur2024closing} and \citet{ganesh2025orderoptimal} achieve sample complexities of $\tilde{\mathcal{O}}(\epsilon^{-3})$ and $\tilde{\mathcal{O}}(\epsilon^{-2})$ respectively, but are limited to the discounted, unconstrained setting. Our work fundamentally extends this line of inquiry by applying NTK analysis to the coupled primal-dual dynamics required for the constrained average-reward setting.

\textbf{Markovian Sampling and Multi-Level Monte Carlo (MLMC):} 
A primary bottleneck in the analysis of RL algorithms under Markovian sampling is the statistical dependence across sequential transitions. Traditional analyses mitigate this by utilizing data-dropping or sample-thinning techniques, where only one out of every $\tau_{\text{mix}}$ samples is used to construct gradient estimates \citep{gaur2024closing, ganesh2025orderoptimal}. This approach not only wastes collected data but also necessitates the restrictive assumption of possessing a mixing-time oracle to know the exact value of the mixing time. To circumvent this, recent theoretical works have adapted Multi-Level Monte Carlo (MLMC) estimation techniques \citep{blanchet2015unbiased} to RL \citep{beznosikov2023first}. By drawing trajectory lengths from a geometric distribution, MLMC provides unbiased gradient estimates that systematically correct for Markovian bias without requiring sample thinning. While \citet{xu2025global} successfully applied MLMC to primal-dual natural policy gradients, their work was restricted to linear critics. Our algorithm is the first to integrate MLMC to simultaneously evaluate a neural network critic and estimate the NPG direction, thereby entirely removing the need for a mixing-time oracle in neural actor-critic methods.

%% file: Sections/Appendix/mlmc_estimation.tex
\section{Preliminary Results for MLMC Estimation}
\label{appendix: mlmc estimation}

We begin by establishing fundamental properties of the Multi-Level Monte Carlo (MLMC) estimator that will be used throughout the analysis.

\subsection{Markovian Sampling Bounds}

The following lemma bounds the variance of sample averages under Markovian sampling.

\begin{lemma}[Lemma 1, \citet{beznosikov2023first}]
\label{lemma: markov_var}
Consider a time-homogeneous, ergodic Markov chain $\{Z_t\}_{t \geq 0}$ with unique stationary distribution $d_Z$ and mixing time $\tau_{\mathrm{mix}}$. Let $h(Z)$ be a function satisfying $\|h(Z_t) - \mathbb{E}_{d_Z}[h(Z)]\|^2 \leq \sigma^2$ for all $t \geq 0$. Then
\begin{equation}
\mathbb{E}\left[\left\|\frac{1}{N}\sum_{t=0}^{N-1} h(Z_t) - \mathbb{E}_{d_Z}[h(Z)]\right\|^2\right] \leq \frac{C_1 \tau_{\mathrm{mix}}}{N} \sigma^2,
\end{equation}
where $C_1 = 16(1 + 1/\ln^2 4)$ and the expectation is over trajectories from any initial distribution.
\end{lemma}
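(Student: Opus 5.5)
We prove this Markov-chain variance bound (Lemma~\ref{lemma: markov_var}) by expanding the square and controlling the cross-covariances with geometric mixing. Write $\bar h = \mathbb{E}_{d_Z}[h(Z)]$ and $X_t = h(Z_t) - \bar h$, so that $\|X_t\|^2 \le \sigma^2$ almost surely. Then
\begin{equation*}
\mathbb{E}\Big\|\frac{1}{N}\sum_{t=0}^{N-1} X_t\Big\|^2 = \frac{1}{N^2}\sum_{t=0}^{N-1}\mathbb{E}\|X_t\|^2 + \frac{2}{N^2}\sum_{0\le s<t\le N-1}\mathbb{E}\langle X_s, X_t\rangle .
\end{equation*}
The diagonal part is at most $\sigma^2/N$, so all the work is in the cross terms.

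For each $s<t$, we condition on $Z_s$: $\mathbb{E}\langle X_s, X_t\rangle = \mathbb{E}\langle X_s, \mathbb{E}[X_t\mid Z_s]\rangle$. The inner conditional expectation is
\begin{equation*}
\mathbb{E}[X_t\mid Z_s=z] = \int h\, d\big(P^{t-s}(z,\cdot)-d_Z\big),
\end{equation*}
where $P$ is the chain's kernel. Because the integrand may be shifted by the constant $\bar h$ without changing the integral, its norm is at most $2\sigma\,\|P^{t-s}(z,\cdot)-d_Z\|_{\mathrm{TV}}$. This uses only that $\|h - \bar h\| \le \sigma$ pointwise and the duality between total variation and bounded functions (the factor $2$ depends on the TV convention).

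The key step is converting the mixing-time definition into geometric decay. Since $d(t) := \sup_z\|P^t(z,\cdot)-d_Z\|_{\mathrm{TV}}$ is submultiplicative up to a factor $2$, i.e. $\bar d(t+s)\le \bar d(t)\bar d(s)$ with $d \le \bar d \le 2d$, and $d(\tau_{\mathrm{mix}})\le 1/4$, we get $d(k\tau_{\mathrm{mix}}) \le 2^{-k}$. Hence
\begin{equation*}
d(j) \le 2^{-\lfloor j/\tau_{\mathrm{mix}}\rfloor} \le 2\cdot 4^{-j/(2\tau_{\mathrm{mix}})} .
\end{equation*}
Combining with the previous step gives $|\mathbb{E}\langle X_s,X_t\rangle| \le 2\sigma^2 d(t-s)$. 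Summing over $j=t-s\ge 1$ bounds each row by $\sigma^2\sum_{j\ge1} c\,e^{-j\ln 4/(2\tau_{\mathrm{mix}})}$, and comparing the geometric series to an integral gives a bound of order $\tau_{\mathrm{mix}}/\ln 4$. To obtain the precise constant $16(1+1/\ln^2 4)$, the intended route is to write $\sum_j$ as a sum over blocks of length $\tau_{\mathrm{mix}}$ and bound the tail by $\int_0^\infty e^{-x\ln4/\tau_{\mathrm{mix}}}dx$. The extra $1/\ln^2 4$ factor likely comes from Cauchy--Schwarz together with a bound of the form $\sum_j j\,e^{-aj}$ in the vector case; I would simply track these constants rather than aim for tightness.

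Summing the $N$ rows then gives $\frac{2}{N^2}\cdot N\cdot C\sigma^2\tau_{\mathrm{mix}}$, which together with the diagonal term yields $C_1\tau_{\mathrm{mix}}\sigma^2/N$. The initial distribution plays no role, since the conditioning argument uses only the uniform-in-$z$ bound $d(j)$. The main obstacle is the submultiplicativity/geometric-decay step and getting the stated constant exactly. The structure of the bound, $O(\tau_{\mathrm{mix}}\sigma^2/N)$, is robust. Since the result is quoted from \citet{beznosikov2023first}, I would ultimately defer the exact constant to that reference.
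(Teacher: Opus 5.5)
Your argument is correct, and it is a genuinely different route from the paper, which gives no proof and simply imports the bound from \citet{beznosikov2023first}. You prove it directly in three steps. First, you expand the square. Second, you bound each cross term by $|\mathbb{E}\langle X_s,X_t\rangle|\le 2\sigma^2 d(t-s)$ using the Markov property and TV duality. Third, you turn the paper's definition of $\tau_{\mathrm{mix}}$ into $d(j)\le 2^{-\lfloor j/\tau_{\mathrm{mix}}\rfloor}$ via submultiplicativity of $\bar d$ and monotonicity of $d$. The conditioning step only uses the uniform-in-$z$ quantity $d(j)$, so an arbitrary initial distribution costs nothing extra, and no stationary-start bound or coupling is required. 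In the source, the $1/\ln^2 4$ part of $C_1$ should come from transferring a stationary-start bound to an arbitrary initial distribution, not from a vector-valued Cauchy--Schwarz step, so you can drop that speculation. You also do not need to reproduce $C_1$ exactly: the statement is only an upper bound, and your own computation already beats it. The block sum gives $\sum_{j\ge 1}2^{-\lfloor j/\tau_{\mathrm{mix}}\rfloor}=2\tau_{\mathrm{mix}}-1$, hence
\[
\mathbb{E}\Bigl\|\frac{1}{N}\sum_{t=0}^{N-1}X_t\Bigr\|^2\le\frac{\sigma^2}{N}\bigl(1+4(2\tau_{\mathrm{mix}}-1)\bigr)\le\frac{8\tau_{\mathrm{mix}}\sigma^2}{N}.
\]
Even your cruder integral comparison yields $(1+16/\ln 4)\,\tau_{\mathrm{mix}}\sigma^2/N\approx 12.5\,\tau_{\mathrm{mix}}\sigma^2/N$, and both constants are below $C_1\approx 24.3$, so there is no need to defer to the reference. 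What your route buys is a self-contained proof with a better constant, phrased directly in the paper's definition of $\tau_{\mathrm{mix}}$. The cited lemma is stated under the source's own uniform-ergodicity assumption, which the citation tacitly identifies with this one. What the citation buys is brevity. The only thing to make explicit is that the TV-duality step uses $\|h-\bar h\|\le\sigma$ on the supports of $P^{t-s}(z,\cdot)$ and $d_Z$. That is, the hypothesis must be read as a uniform bound over states, as in the source, not only along the realized trajectory.
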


\subsection{MLMC Estimator Properties}

The MLMC estimator combines samples at multiple resolutions to achieve low bias with few samples.

\begin{lemma}[Lemma B.2, \citet{xu2025global}]
\label{lemma: mlmc_properties}
Consider a time-homogeneous, ergodic Markov chain $\{Z_t\}_{t \geq 0}$ with stationary distribution $d_Z$ and mixing time $\tau_{\mathrm{mix}}$. Let $g(\cdot)$ satisfy:
\begin{itemize}
    \item $\|\mathbb{E}_{d_Z}[g(Z)] - \nabla F(x)\|^2 \leq \delta^2$ (bias at stationarity)
    \item $\|g(Z_t) - \mathbb{E}_{d_Z}[g(Z)]\|^2 \leq \sigma^2$ for all $t \geq 0$ (bounded variance)
\end{itemize}
Let $Q \sim \mathrm{Geom}(1/2)$ and define the MLMC estimator
\begin{equation}\label{eq:mlmc_def}
g^{\mathrm{MLMC}} = g^0 + \begin{cases} 2^Q(g^Q - g^{Q-1}), & \text{if } 2^Q \leq T_{\max} \\ 0, & \text{otherwise} \end{cases}
\end{equation}
where $g^j = 2^{-j} \sum_{t=0}^{2^j - 1} g(Z_t)$ for $j \in \{0, Q-1, Q\}$. Then:
\begin{align}
    &\mathbb{E}[g^{\mathrm{MLMC}}] = \mathbb{E}[g^{\lfloor \log_2 T_{\max} \rfloor}] \\
    &\mathbb{E}[\|g^{\mathrm{MLMC}} - \nabla F(x)\|^2] \leq O(\sigma^2 \tau_{\mathrm{mix}} \log^2 T_{\max} + \delta^2) \\
    &\|\mathbb{E}[g^{\mathrm{MLMC}}] - \nabla F(x)\|^2 \leq O(\sigma^2 \tau_{\mathrm{mix}} T_{\max}^{-1} + \delta^2)
\end{align}
\end{lemma}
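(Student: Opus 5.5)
The statement is Lemma~\ref{lemma: mlmc_properties}, the MLMC bias and variance properties. The plan is to prove the three claims separately, relying only on Lemma~\ref{lemma: markov_var} and the tail of the geometric distribution, with $\mathbb{P}(Q=j)=2^{-j}$ for $j\ge 1$. Write $\bar g=\mathbb{E}_{d_Z}[g(Z)]$ and $j_{\max}=\lfloor\log_2 T_{\max}\rfloor$.

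\textbf{Unbiasedness identity.} I would condition on $Q$, which is independent of the chain; for $Q=j$, the estimator depends only on $Z_0,\dots,Z_{2^j-1}$. Then
\begin{equation*}
\mathbb{E}[g^{\mathrm{MLMC}}]=\mathbb{E}[g^0]+\sum_{j=1}^{j_{\max}}2^{-j}\,2^{j}\,\mathbb{E}[g^j-g^{j-1}]=\mathbb{E}[g^{j_{\max}}],
\end{equation*}
because the sum telescopes. Here the $g^j$ are computed on a common trajectory started from the same initial distribution, so $\mathbb{E}[g^j]$ does not depend on $Q$.

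\textbf{Bias bound.} I would split
\begin{equation*}
\|\mathbb{E}[g^{j_{\max}}]-\nabla F(x)\|^2\le 2\|\mathbb{E}[g^{j_{\max}}]-\bar g\|^2+2\delta^2 .
\end{equation*}
Jensen's inequality gives $\|\mathbb{E}[g^{j_{\max}}]-\bar g\|^2\le\mathbb{E}\|g^{j_{\max}}-\bar g\|^2$. Lemma~\ref{lemma: markov_var} with $N=2^{j_{\max}}\ge T_{\max}/2$ bounds this by $2C_1\tau_{\mathrm{mix}}\sigma^2/T_{\max}$. This is crude but already matches the claimed rate, so no finer geometric-decay argument is needed.

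\textbf{Second moment.} I would write
\begin{equation*}
\mathbb{E}\|g^{\mathrm{MLMC}}-\nabla F(x)\|^2\le 3\,\mathbb{E}\|g^0-\bar g\|^2+3\,\mathbb{E}\|g^{\mathrm{MLMC}}-g^0\|^2+3\delta^2 .
\end{equation*}
The first term is at most $\sigma^2$. For the middle term, condition on $Q=j\le j_{\max}$:
\begin{equation*}
\mathbb{E}\|g^{\mathrm{MLMC}}-g^0\|^2=\sum_{j=1}^{j_{\max}}2^{-j}\,4^j\,\mathbb{E}\|g^j-g^{j-1}\|^2\le\sum_{j}2^{j}\cdot 2\bigl(\mathbb{E}\|g^j-\bar g\|^2+\mathbb{E}\|g^{j-1}-\bar g\|^2\bigr).
\end{equation*}
By Lemma~\ref{lemma: markov_var}, each expectation inside the parentheses is $O(\tau_{\mathrm{mix}}\sigma^2/2^{j})$, so every summand is $O(\tau_{\mathrm{mix}}\sigma^2)$. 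Summing over the $j_{\max}=O(\log T_{\max})$ levels gives $O(\sigma^2\tau_{\mathrm{mix}}\log T_{\max})$, which is within the stated $O(\sigma^2\tau_{\mathrm{mix}}\log^2T_{\max}+\delta^2)$.

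\textbf{Main obstacle.} The delicate step is applying Lemma~\ref{lemma: markov_var} to averages over sub-trajectories that start from an arbitrary, nonstationary state. The lemma is stated for any initial distribution, so I would invoke it directly. I would also check that the variance hypothesis $\|g(Z_t)-\bar g\|^2\le\sigma^2$ is uniform in $t$, so the bound applies to $g^{j-1}$, which uses only the first half of the samples. Everything else is bookkeeping of constants.
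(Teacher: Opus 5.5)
Your proof is correct, and it is essentially the standard argument behind this result. The paper gives no proof of its own and simply imports Lemma B.2 of \citet{xu2025global}, which rests on the same two ingredients you use: the telescoping identity over the geometric levels (with $\mathbb{P}(Q=j)=2^{-j}$ cancelling the weight $2^j$), and the Markovian averaging bound of Lemma~\ref{lemma: markov_var} applied level by level from an arbitrary starting state. Your Jensen step for the bias and your single-$\log T_{\max}$ second-moment bound are both valid and at least as strong as what is stated; the only cosmetic caveat is that absorbing the $O(\sigma^2)$ contribution of $g^0$ into $O(\sigma^2\tau_{\mathrm{mix}}\log^2 T_{\max})$ tacitly requires $T_{\max}\ge 2$.
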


%% file: Sections/Appendix/neural_critic_analysis.tex
\section{Neural Critic Analysis}
\label{appendix: neural critic analysis}
We analyze the convergence of the neural critic estimator. Recall from Section~\ref{subsec: neural critic estimation} that the combined critic parameter is $\xi_g^k$. 
We define the linearized single-sample gradient of the critic objective at $\xi_{g,0}$ as
\begin{align}
    &\hat{v}_g(\theta_k, \xi_{g, h}^k; z_{kh}^j) = \mathbf{A}_g(\theta_k; z_{kh}^j) \xi_{g, h}^k - \mathbf{b}_g(\theta_k; z_{kh}^j),
\end{align}
where the single-sample matrix and vector are defined as:
\begin{align}
&\mathbf{A}_g(\theta_k; z_{kh}^j) = \begin{bmatrix} c_\gamma & 0 \\ \psi_g^j & \psi_g^j \left(\psi_g^j - \psi_g^{j+1}\right)^\top \end{bmatrix}, \label{eq:A_def} \\
&\mathbf{b}_g(\theta_k; z_{kh}^j)\!=\!\begin{bmatrix} c_\gamma g(s_{kh}^j, a_{kh}^j) \\ \left(g(s_{kh}^j, a_{kh}^j) - Q_{g, 0}^j + Q_{g, 0}^{j+1}\right) \psi_g^j \end{bmatrix}\!, \label{eq:b_def}
\end{align}
where $\psi_g^j := \nabla_\zeta Q_g(\phi(s_{kh}^j, a_{kh}^j); \zeta_{g,0})$, $\psi_g^{j+1} := \nabla_\zeta Q_g(\phi(s_{kh}^{j+1}, a'); \zeta_{g,0})$, \(Q_{g,0}^j = Q(\phi(s_{kh}^j, a_{kh}^j); \zeta_{g, 0})\), \(Q_{g,0}^{j+1} = Q(\phi(s_{kh}^{j+1}, a'); \zeta_{g, 0})\), and we use the zero-initialization for \(\zeta_{g, 0}\). We define the expected population matrices as:
\begin{align}
    \mathbf{A}_g(\theta_k) &= \mathbb{E}_{(s, a)\sim\nu_g^{\pi_{\theta_k}}, s' \sim P(s, a), a' \sim \pi_{\theta_k}(\cdot \mid s')} \left[\mathbf{A}_g(\theta_k; z)\right], \\
    \mathbf{b}_g(\theta_k) &= \mathbb{E}_{(s, a)\sim\nu_g^{\pi_{\theta_k}}, s' \sim P(s, a), a' \sim \pi_{\theta_k}(\cdot \mid s')} \left[\mathbf{b}_g(\theta_k; z)\right].
\end{align}
where \(z\) is the transition. For the following sections, we denote the above expectation as \(\mathbb{E}_{\theta_k}\).

\begin{lemma}[NTK Critic Bounds]
\label{lemma: ntk critic bounds}
Fix an outer iteration index $k$. Let \(\zeta_{g,h}^k \in \mathcal{S}_R \quad \forall h \in \{0,1,\dots,H\}\) and \(R = O(\log T)\). Then, for all inner steps $h \in \{1, \dots, H\}$, there exist positive constants $C_1, C_1', \{C_i\}_{i=2}^5$ such that with probability at least $1 - \delta - 2L \exp(-Cm)$, the following statements hold:
\begin{align}
    &\|\nabla_\zeta Q(\phi(s, a); \zeta_{g,h}^k)\| \le C_1, \quad |Q(\phi(s, a); \zeta_{g,h}^k)| \le C_1' \sqrt{\log(H/\delta)} \label{eqn: grad Q and Q bound}\\
    &\|v_g(\theta_k, \zeta_{g,h}^k; z_{kh}^j) - \hat v_g(\theta_k, \zeta_{g,h}^k; z_{kh}^j)\| \le C_2 Rm^{-1/2} \sqrt{\log(H/\delta)} \label{eqn: linearization error bound} \\
    &\langle v_g(\theta_k, \zeta_{g,h}^k; z_{kh}^j) - \hat v_g(\theta_k, \zeta_{g,h}^k; z_{kh}^j), \zeta_{g,h}^k - \zeta_g^*\rangle \le C_3 Rm^{-1/2} \sqrt{\log(H/\delta)} \label{eqn: linearization inner product error bound}\\
    &|\hat{Q}(\phi(s, a); \zeta_{g,h}^k) - Q(\phi(s, a); \zeta_{g,h}^k)| \le C_4 m^{-1/2} \sqrt{\log(H/\delta)} \label{eqn: Q function linearization error} \\
    &\|\nabla_\zeta Q(\phi(s, a); \zeta_0) - \nabla_\zeta Q(\phi(s, a); \zeta_{g,h}^k)\| \le C_5 m^{-1/2} \sqrt{\log(H/\delta)} \label{eqn: grad Q function linearization error}
\end{align}
\end{lemma}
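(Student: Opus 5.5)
The approach is to import the standard NTK perturbation estimates for deep smooth networks in the regime $\|\zeta-\zeta_0\|\le R$ (as in \citet{gaur2024closing, ganesh2025orderoptimal}, originally from the analysis of \citet{cayci2024finite}), and then propagate them to the critic gradient $v_g$ and its linearization $\hat v_g$. The first ingredient is a random-matrix event: with probability at least $1-2L\exp(-Cm)$ over the Gaussian initialization, every layer satisfies $\|W_l^0\|_{op}\le C\sqrt m$. On this event, and using that $\sigma$ is $L_1$-Lipschitz (Assumption~\ref{assump:activation}) and $\|\phi\|\le 1$, an induction over the recursion \eqref{eq:neural_recursion} gives $\|x^{(l)}\|=O(1)$ uniformly for $\zeta\in\mathcal S_R$. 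This holds because a perturbation $\|W_l-W_l^0\|\le R$ contributes only $O(R/\sqrt m)$ after the $1/\sqrt m$ scaling.

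For \eqref{eqn: grad Q and Q bound}, I would backpropagate. The gradient with respect to $W_l$ is $\frac{1}{\sqrt m}(\text{backward vector})\,(x^{(l-1)})^\top$. The backward vectors are products of $\frac{1}{\sqrt m}\mathrm{diag}(\sigma')W_{l'}$ applied to $b_g/\sqrt m$, and each such factor is $O(1)$ on the good event. Hence $\|\nabla_\zeta Q\|\le C_1$. For $|Q|$, I would write $Q(\zeta)=Q(\zeta_0)+O(C_1R)$. The initial output $\frac{1}{\sqrt m}b_g^\top x^{(L)}$ is a Rademacher sum, so Hoeffding gives that it is sub-Gaussian. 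A union bound over the at most $H$ (or $O(H)$ visited) inputs then yields $\sqrt{\log(H/\delta)}$ with probability $1-\delta$; this is the source of that factor. Since $R=O(\log T)$, the additive $C_1 R$ is absorbed into the tilde/log constants.

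For \eqref{eqn: grad Q function linearization error}, I would use $L_2$-smoothness. The Hessian of $Q$ in $\zeta$ has operator norm $O(m^{-1/2})$ on the ball, which I would show by differentiating the backward recursion once more: each extra derivative of $\sigma'$ costs $L_2$ and leaves an uncompensated $1/\sqrt m$. Integrating along the segment from $\zeta_0$ to $\zeta_{g,h}^k$ gives a bound of $O(Rm^{-1/2})$. A second-order Taylor expansion then gives \eqref{eqn: Q function linearization error}, namely $|\hat Q-Q|\le \tfrac12\|\nabla^2Q\|R^2=O(R^2m^{-1/2})$, with the $R$ powers absorbed as polylog.

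For \eqref{eqn: linearization error bound}, I would expand $v_g-\hat v_g$ from the definitions. The $\eta$ block coincides exactly. The $\zeta$ block is
$$\delta^j\nabla_\zeta Q(\cdot;\zeta_h)-\hat\delta^j\psi_g^j=(\delta^j-\hat\delta^j)\nabla Q(\zeta_h)+\hat\delta^j(\nabla Q(\zeta_h)-\psi_g^j).$$
Here $\delta^j$ is the TD error, and $|\delta^j-\hat\delta^j|$ is at most two copies of \eqref{eqn: Q function linearization error}. The factor $|\hat\delta^j|$ is $O(\sqrt{\log(H/\delta)})$ by the first bullet together with $|g|\le 1$ and $|\eta|$ bounded. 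Combining this with \eqref{eqn: grad Q function linearization error} gives $C_2Rm^{-1/2}\sqrt{\log(H/\delta)}$. Finally, \eqref{eqn: linearization inner product error bound} follows by Cauchy--Schwarz, using $\|\zeta_{g,h}^k-\zeta_g^*\|\le 2R$ since both points lie in $\mathcal S_R$. This bound absorbs one more $R$ into the constant, or alternatively one restates it with $R^2$, which is polylog.

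The main obstacle is the Hessian bound $\|\nabla^2_\zeta Q\|=O(m^{-1/2})$ uniformly over the ball for a depth-$L$ network. It requires a careful layerwise induction tracking both forward and backward perturbations on the operator-norm event. I would cite that estimate from the NTK literature rather than rederive it, and only verify that the $1/\sqrt m$ output scaling and the fixed $b_g$ match.
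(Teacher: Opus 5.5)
This is essentially the paper's route: the paper proves all five bounds by citing Lemmas D.2--D.5 of \citet{ke2024improved}, and your sketch unpacks the same estimates (the operator-norm event on the initial weights, an $O(m^{-1/2})$ Hessian bound on $\mathcal{S}_R$, a Taylor expansion, the split $(\delta^j-\hat\delta^j)\nabla_\zeta Q+\hat\delta^j(\nabla_\zeta Q-\psi_g^j)$, and Cauchy--Schwarz), with the extra powers of $R=O(\log T)$ hidden in the constants, as the paper's statement also implicitly does. Two inputs that you, like the paper, leave unjustified should be repaired: (i) Hoeffding plus a union bound over visited inputs needs those inputs to be independent of $b_g$, which fails for $k\ge 1$ because $\theta_k$ depends on earlier critic outputs, and the later stationary expectations need the bound at every input anyway; under the paper's scaling \eqref{eq:neural_q} you can instead use $|Q|\le m^{-1/2}\|b_g\|\,\|x^{(L)}\|=\|x^{(L)}\|=O(1)$, which holds uniformly in the input and in $\zeta\in\mathcal{S}_R$ by your own forward induction; (ii) bounding $|\hat\delta^j|$ needs $|\eta_{g,h}^k|$ bounded, which Algorithm~\ref{alg:pdnac_nc} does not enforce since only the $\zeta$-block is projected, and projecting $\eta$ onto $[-1,1]$ would be a natural fix.
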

\begin{proof}
    Equation~\eqref{eqn: grad Q and Q bound} follows from Lemma~D.2 and~D.3 in \citet{ke2024improved}, equations~\eqref{eqn: linearization error bound} and~\eqref{eqn: linearization inner product error bound} follow from Lemma~D.5, and equations~\eqref{eqn: Q function linearization error} and~\eqref{eqn: grad Q function linearization error} follow from Lemma~D.4.
\end{proof}

\begin{lemma}[Bounds on Single-Sample Linearized Critic Matrices]
\label{lemma: single sample bounds}
Fix an outer iteration index $k$ and an inner step $h \in \{0, \dots, H\}$. Let $z = (s, a, s')$ denote a single transition with $a' \sim \pi_{\theta_k}(\cdot \mid s')$. Then, under the NTK bounds in Lemma~\ref{lemma: ntk critic bounds}, we have:
\begin{align}
    &\|\mathbf{A}_g(\theta_k; z)\| \le c_\gamma + 3 C_1^2, \\
    &\|\mathbf{b}_g(\theta_k; z)\| \le c_\gamma + 3C_1C_1^{'}\sqrt{\log(H / \delta)}, \\
    &\|\mathbf{A}_g(\theta_k; z) - \mathbf{A}_g(\theta_k)\|^2 \le \mathcal{O}\left(c_\gamma^2 + C_1^4\right), \\
    &\|\mathbf{b}_g(\theta_k; z) - \mathbf{b}_g(\theta_k)\|^2 \le \mathcal{O}\left(c_\gamma^2 + C_1^2 C_1^{'2} \log(H / \delta)\right)
\end{align}
\end{lemma}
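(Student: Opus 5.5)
The plan is to bound each block of $\mathbf{A}_g(\theta_k;z)$ and $\mathbf{b}_g(\theta_k;z)$ directly, using the NTK estimates of Lemma~\ref{lemma: ntk critic bounds}. The population matrices $\mathbf{A}_g(\theta_k)$ and $\mathbf{b}_g(\theta_k)$ are expectations of the single-sample ones, so they inherit the same bounds by Jensen. The deviation bounds then follow from the triangle inequality.

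\emph{Step 1: the single-sample matrix.} Write $\mathbf{A}_g(\theta_k;z)$ as a sum of its three nonzero blocks, each padded with zeros. The operator norm is at most the sum of the blocks' norms:
\begin{align*}
\|\mathbf{A}_g(\theta_k;z)\| \le c_\gamma + \|\psi_g^j\| + \|\psi_g^j\|\bigl(\|\psi_g^j\|+\|\psi_g^{j+1}\|\bigr).
\end{align*}
By \eqref{eqn: grad Q and Q bound} evaluated at $\zeta_{g,0}$ (which lies in $\mathcal{S}_R$), we have $\|\psi_g^j\|,\|\psi_g^{j+1}\|\le C_1$. This gives $c_\gamma + C_1 + 2C_1^2$. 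Without loss of generality take $C_1\ge 1$, since enlarging the constant is harmless; then $C_1\le C_1^2$ and the total is at most $c_\gamma+3C_1^2$.

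\emph{Step 2: the single-sample vector.} For $\mathbf{b}_g(\theta_k;z)$ the first entry is bounded by $c_\gamma$, because $|g|\le 1$. For the second block, use $|g|\le 1$, the bound $|Q_{g,0}^j|,|Q_{g,0}^{j+1}|\le C_1'\sqrt{\log(H/\delta)}$ from \eqref{eqn: grad Q and Q bound}, and $\|\psi_g^j\|\le C_1$. This gives
\begin{align*}
C_1\bigl(1+2C_1'\sqrt{\log(H/\delta)}\bigr)\le 3C_1C_1'\sqrt{\log(H/\delta)},
\end{align*}
after absorbing the additive $1$ with $C_1'\sqrt{\log(H/\delta)}\ge 1$ (again enlarging constants if needed). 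Adding the two block norms yields the stated bound on $\|\mathbf{b}_g(\theta_k;z)\|$.

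\emph{Step 3: centered bounds.} The single-sample bounds hold for every transition on the high-probability event of Lemma~\ref{lemma: ntk critic bounds}. Since the norm is convex, Jensen gives $\|\mathbf{A}_g(\theta_k)\|\le\mathbb{E}_{\theta_k}\|\mathbf{A}_g(\theta_k;z)\|\le c_\gamma+3C_1^2$, and likewise for $\mathbf{b}_g(\theta_k)$. Then
\begin{align*}
\|\mathbf{A}_g(\theta_k;z)-\mathbf{A}_g(\theta_k)\|^2\le 4\bigl(c_\gamma+3C_1^2\bigr)^2\le 8c_\gamma^2+72C_1^4,
\end{align*}
which is $\mathcal{O}(c_\gamma^2+C_1^4)$. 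The same argument gives $\mathcal{O}(c_\gamma^2+C_1^2C_1'^2\log(H/\delta))$ for the centered $\mathbf{b}$.

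\emph{Main obstacle.} The only delicate point is that the bounds of Lemma~\ref{lemma: ntk critic bounds} hold on a high-probability event over the random initialization and trajectory, and uniformly in $(s,a)$. Taking expectations over $\nu^{\pi_{\theta_k}}$ therefore requires conditioning on that event, so that the pointwise bounds hold for every $(s,a,s',a')$ in the support. I would state this conditioning explicitly. The remaining care is bookkeeping of constants, namely assuming $C_1\ge1$ and $\log(H/\delta)\ge1$ so that lower-order terms are absorbed into the displayed forms.
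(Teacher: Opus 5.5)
Your proposal is correct and follows essentially the same argument as the paper. Both bound the single-sample norms blockwise via the triangle inequality, using $\|\psi_g^j\|,\|\psi_g^{j+1}\|\le C_1$, $|Q_{g,0}^j|,|Q_{g,0}^{j+1}|\le C_1'\sqrt{\log(H/\delta)}$ and $|g|\le 1$, and then obtain the centered bounds from the triangle inequality plus Jensen. The paper applies Jensen to the squared norm together with $(a+b)^2\le 2a^2+2b^2$, whereas you apply it to the norm first, which makes no difference. Your explicit conditions $C_1\ge 1$ and $C_1'\sqrt{\log(H/\delta)}\ge 1$ are exactly what the paper's final absorption steps use without stating.
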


\begin{proof}
    The bound for $\mathbf{A}_g$ follows from the triangle inequality:
    \begin{align}
        \|\mathbf{A}_g(\theta_k; z)\| &\leq c_\gamma\!+\! \|\nabla_\zeta Q_g(\phi(s, a); \zeta_{g, 0})\| \!+\! \|\nabla_\zeta Q_g(\phi(s, a); \zeta_{g, 0})\|\|\nabla_\zeta Q_g(\phi(s, a); \zeta_{g, 0}) \!-\! \nabla_\zeta Q_g(\phi(s', a'); \zeta_{g, 0})\| \\
        &\leq c_\gamma + C_1 + C_1 (2C_1) \\
        &\leq c_\gamma + 3 C_1^2
    \end{align}
    Similarly for $\mathbf{b}_g$, utilizing the fact that $|g(s,a)| \le 1$:
    \begin{align}
        \|\mathbf{b}_g(\theta_k; z)\| &\leq c_\gamma |g(s, a)| + |g(s, a) - Q(\phi(s, a); \zeta_{g, 0}) + Q(\phi(s', a'); \zeta_{g, 0})| \|\nabla_\zeta Q(\phi(s, a); \zeta_{g, 0})\| \\
        &\leq c_\gamma + \left(1 + 2C_1^{'}\sqrt{\log(H / \delta)}\right)C_1 \\
        &\leq c_\gamma + 3C_1C_1^{'}\sqrt{\log(H / \delta)}
    \end{align}
    For the squared bias bounds, we have
    \begin{align}
        \|\mathbf{A}_{g}(\theta_k; z) - \mathbf{A}_g(\theta_k)\|^2 &\leq 2\|\mathbf{A}_g(\theta_k; z)\|^2 + 2\|\mathbf{A}_g(\theta_k)\|^2 \\
        &\leq 2\|\mathbf{A}_g(\theta_k; z)\|^2 + \mathbb{E}_{\theta_k}\left[2\|\mathbf{A}_g(\theta_k; z)\|^2\right] \\ 
        &\leq \mathcal{O}\left(c_\gamma^2 + C_1^4\right)
    \end{align}
    and
    \begin{align}
        \|\mathbf{b}_{g}(\theta_k; z) - \mathbf{b}_g(\theta_k)\|^2 &\leq 2\|\mathbf{b}_g(\theta_k; z)\|^2 + 2\|\mathbf{b}_g(\theta_k)\|^2 \\
        &\leq 2\|\mathbf{b}_g(\theta_k; z)\|^2 + 2\mathbb{E}_{\theta_k}[\|\mathbf{b}_g(\theta_k; z)\|^2] \\
        &\leq \mathcal{O}\left(c_\gamma^2 + C_1^2 C_1^{'2} \log(H / \delta)\right)
    \end{align}
\end{proof}

\begin{lemma}[PSD of $\mathbf{A}_g(\theta_k)$]
    \label{lemma: psd a_g}
    Fix \(k\) and let Assumption~\ref{assump:ergodic} hold. Then if \(\xi_g \in \mathrm{ker}(\mathbf{A}_g(\theta_k))^\perp\), there exists a \(\lambda > 0\) such that the following holds:
    \begin{align}
        \xi_g^\top \mathbf{A}_g(\theta_k) \xi_g \geq \frac{\lambda}{2} \|\xi_g\|_2^2
    \end{align}
\end{lemma}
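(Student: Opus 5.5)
The plan is to exploit the block structure of $\mathbf{A}_g(\theta_k)$ and analyze the quadratic form $q(\xi) := \xi^\top \mathbf{A}_g(\theta_k)\xi$ directly, writing $\xi = [\eta, \zeta^\top]^\top$. I will then use compactness of the unit sphere in $\ker(\mathbf{A}_g(\theta_k))^\perp$ to extract the constant $\lambda$.

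\textbf{Expanding the form.} From \eqref{eq:A_def}, with $f_\zeta(s,a) := \psi_g(s,a)^\top \zeta$ and $(s',a')$ the successor pair,
\begin{align*}
q(\xi) = c_\gamma \eta^2 + \eta\, \mathbb{E}_{\theta_k}[f_\zeta(s,a)] + \mathbb{E}_{\theta_k}\bigl[f_\zeta(s,a)\bigl(f_\zeta(s,a) - f_\zeta(s',a')\bigr)\bigr].
\end{align*}
Under the expectation $\mathbb{E}_{\theta_k}$ we have $(s,a)\sim\nu^{\pi_{\theta_k}}$, $s'\sim P(\cdot|s,a)$ and $a'\sim\pi_{\theta_k}(\cdot|s')$. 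By stationarity, $(s',a')$ is then also distributed according to $\nu^{\pi_{\theta_k}}$. Hence $\mathbb{E}[f_\zeta(s,a)^2] = \mathbb{E}[f_\zeta(s',a')^2]$, and the last term equals $\tfrac12\mathbb{E}_{\theta_k}[(f_\zeta(s,a)-f_\zeta(s',a'))^2] \ge 0$. So the $\zeta$-block of the symmetrized matrix is positive semidefinite. Call this block $S_\zeta$; it is the Gram matrix of the differences $\psi_g - \psi_g'$.

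By Assumption~\ref{assump:ergodic} (irreducibility), this Dirichlet-type term vanishes only if $f_\zeta$ is $\nu^{\pi_{\theta_k}}$-a.e.\ constant. Let $\mathcal{N}$ be the (finite-dimensional) subspace of such $\zeta$. On $\mathcal{N}^\perp$ the restricted form is positive definite, so there is a $\lambda_0>0$ with $\zeta^\top S_\zeta \zeta \ge \lambda_0\|\zeta\|^2$ for $\zeta\in\mathcal{N}^\perp$. Here $\lambda_0$ is the smallest nonzero eigenvalue of $S_\zeta$, which is positive because $S_\zeta$ is a finite PSD matrix.

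\textbf{Absorbing the cross term.} Set $u := \mathbb{E}_{\theta_k}[\psi_g]$, so $\|u\|\le C_1$ by Lemma~\ref{lemma: ntk critic bounds}. For $\zeta\in\mathcal{N}^\perp$, Young's inequality gives
\begin{align*}
|\eta\, u^\top \zeta| \le \tfrac{c_\gamma}{2}\eta^2 + \tfrac{C_1^2}{2c_\gamma}\|\zeta\|^2 .
\end{align*}
Choosing $c_\gamma \ge \max\{\lambda_0,\, 2C_1^2/\lambda_0\}$ then yields $q(\xi) \ge \tfrac{c_\gamma}{2}\eta^2 + \tfrac{\lambda_0}{2}\|\zeta\|^2 \ge \tfrac{\lambda_0}{2}\|\xi\|^2$. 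This gives the claim with $\lambda = \lambda_0$.

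\textbf{Directions with $\zeta\in\mathcal{N}$.} It remains to handle $\xi \in \ker(\mathbf{A}_g(\theta_k))^\perp$ whose $\zeta$-component has a part in $\mathcal{N}$. For $\eta = 0$ and $\zeta\in\mathcal{N}$, the product $\psi_g(f_\zeta - f_\zeta')$ vanishes a.s., so $\mathbf{A}_g(\theta_k)\xi = 0$. Such directions therefore lie in the kernel and are excluded by hypothesis. I would then argue via compactness: $q$ is continuous on the unit sphere of $\ker(\mathbf{A}_g(\theta_k))^\perp$, so it suffices to show $q>0$ there, and then set $\lambda/2$ equal to the minimum.

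\textbf{Main obstacle.} The key difficulty is the mixed case where $f_\zeta\equiv\kappa\neq 0$ is a nonzero constant and $\eta\neq 0$. There $q = c_\gamma\eta^2 + \eta\kappa$, which vanishes at $\eta=-\kappa/c_\gamma$ even though $\mathbf{A}_g(\theta_k)\xi\neq0$. I would first try to rule this case out by showing that the NTK features at zero initialization cannot represent a nonzero constant function. Failing that, I would adopt it as a mild non-degeneracy condition on $\psi_g$. An alternative fallback is to decompose $\zeta$ into its $\mathcal{N}$ and $\mathcal{N}^\perp$ parts and show that, for $c_\gamma$ large, the constant component is controlled by the $c_\gamma\eta^2$ term. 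This is the step I expect to need the most care, and where the precise meaning of $\ker(\mathbf{A}_g)^\perp$ (versus the kernel of the symmetric part) matters.
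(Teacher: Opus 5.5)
Your argument is in substance the paper's. You expand the block form and identify the degenerate $\zeta$-directions as those whose features give a constant function (Dirichlet form plus ergodicity). You take $\lambda$ as the minimum of the $\zeta$-form on the unit sphere of the orthogonal complement. You absorb the cross term $\eta\,u^\top\zeta$ by taking $c_\gamma$ large: the paper uses $c_\gamma\ge \lambda/2+C_1^2/(2\lambda)$ and you use $c_\gamma\ge\max\{\lambda_0,2C_1^2/\lambda_0\}$, and both suffice.

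The gap is that you do not see that your ``Absorbing the cross term'' paragraph already finishes the proof, and you leave open an obstacle that does not exist. You showed that $\eta=0$ and $\zeta\in\mathcal{N}$ give $\mathbf{A}_g(\theta_k)\xi=0$, i.e.\ $\{0\}\times\mathcal{N}\subseteq\ker(\mathbf{A}_g(\theta_k))$. Taking orthogonal complements gives $\ker(\mathbf{A}_g(\theta_k))^\perp\subseteq(\{0\}\times\mathcal{N})^\perp=\mathbb{R}\times\mathcal{N}^\perp$. So every $\xi$ covered by the hypothesis has $\zeta\in\mathcal{N}^\perp$, with $\eta$ unrestricted, which is exactly the case you already handled. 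The paper takes this step when it says that $\xi_g\in\ker(\mathbf{A}_g(\theta_k))^\perp$ forces $\zeta_g\in\mathcal{S}_e^\perp$. Only this inclusion, not the full kernel characterization, is needed for the lower bound.

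Your ``mixed case'' $f_\zeta\equiv\kappa\neq0$ has $\zeta\in\mathcal{N}\setminus\{0\}$. Hence $\langle\xi,(0,\zeta)\rangle=\|\zeta\|^2\neq0$ and $\xi\notin\ker(\mathbf{A}_g(\theta_k))^\perp$. You conflated ``$\mathbf{A}_g(\theta_k)\xi\neq0$'' with ``$\xi\in\ker(\mathbf{A}_g(\theta_k))^\perp$''. The lemma concerns only the latter subspace; on the larger set, your example shows the bound can genuinely fail.

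Consequently none of your fallbacks is needed: no non-degeneracy assumption on $\psi_g$, no argument about whether NTK features represent constants, and no compactness argument on the sphere of $\ker(\mathbf{A}_g(\theta_k))^\perp$. The fallback of controlling the constant component through $c_\gamma\eta^2$ would in fact fail, since $q$ vanishes at $\eta=-\kappa/c_\gamma$ for every value of $c_\gamma$.
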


\begin{proof}
    We first observe \(\mathrm{ker}(\mathbf{A}_g(\theta_k))\). By definition, for any \(\xi_g \in \mathrm{ker}(\mathbf{A}_g(\theta_k))\), we have
    \begin{align}
        \mathbf{A}_g(\theta_k) \xi_g = \mathbb{E}_{\theta_k} \begin{bmatrix}
            c_\gamma & 0 \\
            \psi_{g} & \psi_g\left(\psi_g - \psi_g^{'}\right)^\top
        \end{bmatrix} \xi_g = \mathbf{0} \implies c_\gamma \eta_g = 0 \implies \eta_g = 0
    \end{align}
    This yields
    \begin{align}
        \mathbf{A}_{g}^\zeta(\theta_k) \zeta_g := \mathbb{E}_{\theta_k} \left[\psi_g\left(\psi_g - \psi_g^{'}\right)^\top \zeta_g\right] = \mathbf{0}
    \end{align}
    We then define
    \begin{align}
        \mathcal{S}_e = \left\{\zeta_g \in \mathbb{R}^{m(n+(L-1)m)} \mid \psi_g(s, a)^\top \zeta_g = c,\, \forall (s, a) \in \mathrm{supp}(\nu_g^{\pi_{\theta_k}}), c \in \mathbb{R}\right\}
    \end{align}
    We wish to show that \(\mathrm{ker}\left(\mathbf{A}_g(\theta_k)\right) = \mathcal{S}_e\). First, we choose an arbitrary \(\zeta_g \in \mathcal{S}_e\). Then
    \begin{align}
        \mathbf{A}_{g}^\zeta(\theta_k) \zeta_g &= \mathbb{E}_{\theta_k} \left[\psi_g\left(\psi_g^\top \zeta_g - {\psi_g^{'}}^\top \zeta_g\right)\right] \\
        &\stackrel{(a)}{=} \mathbb{E}_{\theta_k} \left[\psi_g (0)\right] \\
        &= 0
    \end{align}
    where \((a)\) holds by the definition of \(\mathcal{S}_e\). Thus, \(\zeta_g \in \mathrm{ker}(\mathbf{A}_g^\zeta(\theta_k))\) and \(\mathcal{S}_e \subseteq \mathrm{ker}(\mathbf{A}_g^\zeta(\theta_k))\). Similarly, we choose an arbitrary \(\zeta'_g \in \mathrm{ker}(\mathbf{A}_g^\zeta(\theta_k))\). By definition, this yields
    \begin{align}
        \mathbf{A}_g^\zeta(\theta_k) \zeta'_g = \mathbf{0} \implies  \zeta'^\top_g \mathbf{A}_g^\zeta(\theta_k) \zeta'_g = 0
    \end{align}
    Expanding out this quadratic form yields
    \begin{align}
        \zeta'^\top_g \mathbf{A}_g^\zeta(\theta_k) \zeta'_g = \zeta'^\top_g \mathbb{E}_{\theta_k} \left[\psi_g\left(\psi_g - \psi_g^{'}\right)^\top\right] \zeta'_g = \mathbb{E}_{\theta_k} \left[\left(\zeta'^\top_g \psi_g\right)\left(\psi_g^\top \zeta'_g - {\psi_g^{'\top}} \zeta'_g \right)\right] = \mathcal{E}(\psi_g^\top \zeta'_g, \psi_g^\top \zeta'_g) = 0
    \end{align}
    where \(\mathcal{E}(\psi_g^\top \zeta'_g, \psi_g^\top \zeta'_g)\) is the Dirichlet form. Since the Dirichlet form is 0 and our MDP is irreducible and aperiodic by ergodicity, \(\psi_g^\top \zeta'_g\) is constant on the support of \(\nu_g^{\pi_{\theta_k}}\). Thus, \(\zeta'_g \in \mathcal{S}_e\) and \(\mathrm{ker}(\mathbf{A}_g^\zeta(\theta_k)) \subseteq \mathcal{S}_e\). Since both \(\mathcal{S}_e \subseteq \mathrm{ker}(\mathbf{A}_g^\zeta(\theta_k))\) and \(\mathrm{ker}(\mathbf{A}_g^\zeta(\theta_k)) \subseteq \mathcal{S}_e\), \(\mathrm{ker}(\mathbf{A}_g^\zeta(\theta_k)) = \mathcal{S}_e\) and \(\mathrm{ker}(\mathbf{A}_g^\zeta(\theta_k))^\perp = \mathcal{S}_e^\perp\). 

    We have that the Dirichlet form is related to the variance through the spectral gap of the MDP \(\lambda^*\):
    \begin{align}
        \mathcal{E}\left(\psi_g^\top \zeta_g, \psi_g^\top \zeta_g\right) \geq \lambda^* \mathrm{Var}\left(\psi_g^\top \zeta_g\right)
    \end{align}
    Since we are given \(\xi_g \in \mathrm{ker}\left(\mathbf{A}_g(\theta_k)\right)^\perp\), \(\zeta_g \in \mathcal{S}_e^\perp\), and \(\mathrm{Var}(\psi_g^\top \zeta_g) > 0\) on the support of \(\nu_{g}^{\pi_{\theta_k}}\). We define
    \begin{align}
        \lambda = \min_{\|\zeta_g\|^2_2 = 1, \zeta_g \in \mathcal{S}_e^\perp} \zeta_g^\top \mathbf{A}_g^\zeta(\theta_k) \zeta_g > 0 
    \end{align}
    which implies that \(\zeta_g^\top \mathbf{A}_g^\zeta(\theta_k) \zeta_g > \lambda \|\zeta_g\|_2^2\). Finally, we have
    \begin{align}
        \xi_g^\top \mathbf{A}_g(\theta_k) \xi_g &= c_\gamma \eta_g^2 + \eta_g \zeta_g^\top \mathbb{E}_{\theta_k} \left[\psi_g\right] + \zeta_g^\top \mathbf{A}_g^\zeta(\theta_k) \zeta_g \\
        &\stackrel{(a)}{\geq} c_\gamma \eta_g^2 - C_1|\eta_g| |\zeta_g| + \lambda \|\zeta_g\|_2^2 \\
        &\stackrel{(b)}{\geq} \frac{\lambda}{2} \|\xi_g\|_2^2
    \end{align}
    where \((a)\) uses Lemma~\ref{lemma: ntk critic bounds}a, and \((b)\) holds when \(c_\gamma \geq \frac{\lambda}{2} + \frac{C_1^2}{2 \lambda}\).
\end{proof}

\begin{lemma}[MLMC Linearized Critic Matrix Error Bounds]
\label{lemma: mlmc linearized critic bounds}
Fix an outer iteration index $k$ and inner iteration $h \in \{0, \dots, H\}$. Let 
\(\mathbf{A}_{g,kh}^{\mathrm{MLMC}}\) and \(\mathbf{b}_{g,kh}^{\mathrm{MLMC}}\) denote the MLMC estimates of the linearized critic matrices at step $h$. Under the NTK bounds in Lemma~\ref{lemma: ntk critic bounds} and the single-sample bounds in Lemma~\ref{lemma: single sample bounds}, we have
\begin{align}
    \left\|\mathbb{E}_{k, h}\left[\mathbf{A}_{g, kh}^{\mathrm{MLMC}}\right] - \mathbf{A}_g(\theta_k)\right\|^2 
    &= \mathcal{O}\Big(\left(c_\gamma^2 + C_1^4\right) \tau_\mathrm{mix} T_{\max}^{-1}\Big), \\
    \mathbb{E}_{k, h}\left[\left\|\mathbf{A}_{g, kh}^{\mathrm{MLMC}} - \mathbf{A}_g(\theta_k)\right\|^2\right] 
    &= \mathcal{O}\Big(\left(c_\gamma^2 + C_1^4\right) \tau_\mathrm{mix} \log T_{\max}\Big), \\
    \left\|\mathbb{E}_{k, h}\left[\mathbf{b}_{g, kh}^{\mathrm{MLMC}}\right] - \mathbf{b}_g(\theta_k)\right\|^2 
    &= \mathcal{O}\Big(\left(c_\gamma^2 + C_1^2 C_1'^2 \log(H/\delta)\right) \tau_\mathrm{mix} T_{\max}^{-1}\Big), \\
    \mathbb{E}_{k, h}\left[\left\|\mathbf{b}_{g, kh}^{\mathrm{MLMC}} - \mathbf{b}_g(\theta_k)\right\|^2\right] 
    &= \mathcal{O}\Big(\left(c_\gamma^2 + C_1^2 C_1'^2 \log(H/\delta)\right) \tau_\mathrm{mix} \log T_{\max}\Big),
\end{align}
where $T_{\max}$ is the maximum rollout length and \(\mathbb{E}_{k, h}\) denotes the conditional expectation at timestep \(k\) given the history of the current inner loop.
\end{lemma}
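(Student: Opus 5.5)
The plan is to apply Lemma~\ref{lemma: mlmc_properties} entrywise, i.e. with the matrix- or vector-valued map playing the role of the function $g$. Throughout, the Frobenius or operator norm is used consistently for the matrices. The four claimed bounds then follow by identifying, for each of $\mathbf{A}_g$ and $\mathbf{b}_g$, the variance proxy $\sigma^2$ and the stationary bias $\delta^2$.

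\textbf{Setting up the chain.} Within inner iteration $h$ of epoch $k$, $\theta_k$ is fixed. The transitions $z_{kh}^t=(s_{kh}^t,a_{kh}^t,s_{kh}^{t+1})$ together with the auxiliary draw $a'\sim\pi_{\theta_k}(\cdot\mid s_{kh}^{t+1})$ form a time-homogeneous ergodic Markov chain under Assumption~\ref{assump:ergodic}. Its stationary law is exactly the one used to define $\mathbf{A}_g(\theta_k)$ and $\mathbf{b}_g(\theta_k)$. Its mixing time is bounded by $\tau_{\mathrm{mix}}$, up to an additive constant for the lift to transitions.

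Conditioning on the history up to the start of the inner step only changes the initial state $s_{kh}^0$. Lemma~\ref{lemma: mlmc_properties} and Lemma~\ref{lemma: markov_var} hold for any initial distribution, so the conditional expectation $\mathbb{E}_{k,h}$ is covered.

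\textbf{Bias and variance parameters.} Apply Lemma~\ref{lemma: mlmc_properties} with $g(Z)=\mathbf{A}_g(\theta_k;Z)$ and target $\nabla F(x)=\mathbf{A}_g(\theta_k)=\mathbb{E}_{d_Z}[\mathbf{A}_g(\theta_k;Z)]$.
\begin{itemize}
\item The stationary bias is $\delta=0$.
\item The variance proxy is $\sigma^2=\mathcal{O}(c_\gamma^2+C_1^4)$, by the third bound of Lemma~\ref{lemma: single sample bounds}. That bound holds uniformly in $z$ on the high-probability NTK event of Lemma~\ref{lemma: ntk critic bounds}.
\end{itemize}
The lemma then gives squared bias $\mathcal{O}(\sigma^2\tau_{\mathrm{mix}}T_{\max}^{-1})$ and mean-squared error $\mathcal{O}(\sigma^2\tau_{\mathrm{mix}}\log^2 T_{\max})$.

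The vector $\mathbf{b}_g$ is handled identically with $\sigma^2=\mathcal{O}(c_\gamma^2+C_1^2C_1'^2\log(H/\delta))$, from the fourth bound of Lemma~\ref{lemma: single sample bounds}.

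\textbf{Main obstacle: the log factor in the MSE.} Lemma~\ref{lemma: mlmc_properties} gives $\log^2T_{\max}$, whereas the statement has $\log T_{\max}$. To recover this I would redo the second-moment computation directly. Write
\[
\mathbb{E}\|g^{\mathrm{MLMC}}-\bar g\|^2\le 2\,\mathbb{E}\|g^0-\bar g\|^2+2\sum_{j=1}^{\lfloor\log_2 T_{\max}\rfloor}2^{-j}\,4^{j}\,\mathbb{E}\|g^j-g^{j-1}\|^2 .
\]
Lemma~\ref{lemma: markov_var} gives $\mathbb{E}\|g^j-g^{j-1}\|^2\le 2\,\mathbb{E}\|g^j-\bar g\|^2+2\,\mathbb{E}\|g^{j-1}-\bar g\|^2=\mathcal{O}(\tau_{\mathrm{mix}}\sigma^2 2^{-j})$. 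Each summand is therefore $\mathcal{O}(\tau_{\mathrm{mix}}\sigma^2)$, and summing over the $\log_2T_{\max}$ levels yields $\mathcal{O}(\sigma^2\tau_{\mathrm{mix}}\log T_{\max})$.

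The unbiasedness identity $\mathbb{E}[g^{\mathrm{MLMC}}]=\mathbb{E}[g^{\lfloor\log_2T_{\max}\rfloor}]$, combined with Lemma~\ref{lemma: markov_var} and Jensen's inequality, gives the squared-bias bound $\mathcal{O}(\sigma^2\tau_{\mathrm{mix}}/T_{\max})$ directly.

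The only other subtlety is that the bounds hold on the NTK event of Lemma~\ref{lemma: ntk critic bounds}. I would state the lemma on that event, which is consistent with how the constants $C_1,C_1'$ enter.
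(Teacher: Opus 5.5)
Your proposal is correct and follows the paper's proof. Both arguments set $\delta=0$, since the single-sample matrices are unbiased at stationarity, take the variance proxy $\sigma^2$ from Lemma~\ref{lemma: single sample bounds}, and invoke Lemma~\ref{lemma: mlmc_properties} on the whole matrix or vector in Frobenius norm. (Drop the word ``entrywise'': summing per-entry sup bounds would introduce the huge parameter dimension.)

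Your direct second-moment computation giving $\mathcal{O}(\sigma^2\tau_{\mathrm{mix}}\log T_{\max})$ is a worthwhile addition. The paper simply writes $\log T_{\max}$, even though Lemma~\ref{lemma: mlmc_properties} as stated only gives $\log^2 T_{\max}$.
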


\begin{proof}
    To find the MLMC error bounds, we first find the values of \(\sigma_\mathbf{A}^2\), \(\sigma_\mathbf{b}^2\), \(\delta_{\mathbf{A}}^2\), \(\delta_{\mathbf{b}}^2\), which denote the squared bias and variance terms for the single sample estimates. Since by definition \(\mathbb{E}_{\theta_k}\left[\mathbf{A}_g(\theta_k; z)\right] = \mathbf{A}_g(\theta_k)\) and \(\mathbb{E}_{\theta_k}\left[\mathbf{b}_g(\theta_k; z)\right] = \mathbf{b}_g(\theta_k)\), \(\delta_\mathbf{A}^2 = \delta_\mathbf{b}^2 = 0\). Additionally, from Lemma~\ref{lemma: single sample bounds}, we have
    \begin{align}
        \mathbb{E}_{\theta_k} \left[\|\mathbf{A}_g(\theta_k; z) - \mathbf{A}_g(\theta_k)\|^2\right] &\leq \mathcal{O}\left(c_\gamma^2 + C_1^4\right) \\
        \mathbb{E}_{\theta_k}\left[\|\mathbf{b}_g(\theta_k; z) - \mathbf{b}_g(\theta_k)\|^2\right] &\leq \mathcal{O}\left(c_\gamma^2 + C_1^2 C_1'^2 \log(H / \delta)\right) 
    \end{align}
    Thus, by Lemma~\ref{lemma: mlmc_properties}, we have
    \begin{align}
        \left\|\mathbb{E}_{k, h}\left[\mathbf{A}_{g, kh}^{\mathrm{MLMC}}\right] - \mathbf{A}_g(\theta_k)\right\|^2 &=
        \mathcal{O}\left(\sigma_\mathbf{A}^2 \tau_{\mathrm{mix}} T_{\max}^{-1} + \delta_{\mathbf{A}}^2\right) = \mathcal{O}\Big(\left(c_\gamma^2 + C_1^4\right) \tau_\mathrm{mix} T_{\max}^{-1}\Big), \\
        \mathbb{E}_{k, h}\left[\left\|\mathbf{A}_{g, kh}^{\mathrm{MLMC}} - \mathbf{A}_g(\theta_k)\right\|^2\right] &=
        \mathcal{O}\left(\sigma_\mathbf{A}^2 \tau_{\mathrm{mix}} \log T_{\max} + \delta_{\mathbf{A}}^2\right)         = \mathcal{O}\Big(\left(c_\gamma^2 + C_1^4\right) \tau_\mathrm{mix} \log T_{\max}\Big), \\
        \left\|\mathbb{E}_{k, h}\left[\mathbf{b}_{g, kh}^{\mathrm{MLMC}}\right] - \mathbf{b}_g(\theta_k)\right\|^2 &=
        \mathcal{O}\left(\sigma_\mathbf{b}^2 \tau_{\mathrm{mix}} T_{\max}^{-1} + \delta_{\mathbf{b}}^2\right)
        = \mathcal{O}\Big(\left(c_\gamma^2 + C_1^2 C_1'^2 \log(H/\delta)\right) \tau_\mathrm{mix} T_{\max}^{-1}\Big), \\
        \mathbb{E}_{k, h}\left[\left\|\mathbf{b}_{g, kh}^{\mathrm{MLMC}} - \mathbf{b}_g(\theta_k)\right\|^2\right] &=
        \mathcal{O}\left(\sigma_\mathbf{b}^2 \tau_{\mathrm{mix}} \log T_{\max} + \delta_{\mathbf{b}}^2\right)
        = \mathcal{O}\Big(\left(c_\gamma^2 + C_1^2 C_1'^2 \log(H/\delta)\right) \tau_\mathrm{mix} \log T_{\max}\Big),
    \end{align}
\end{proof}

We also provide the following bounds on the original MLMC gradient estimate.
\begin{lemma}[MLMC Critic Parameter Error Bounds]
    \label{lemma: mlmc original critic bounds}
    Assume the same setting as Lemma~\ref{lemma: mlmc linearized critic bounds}. Let \(v_g(\theta_k, \xi_{g, h}^k) = \mathbb{E}_{\theta_k} \left[v_g(\theta_k, \xi_{g, h}^k; z)\right]\). Then we have the following bounds:
    \begin{align}
        \|\mathbb{E}_{k, h}\left[v_g^{\mathrm{MLMC}}(\theta_k, \xi_{g, h}^k)\right] - v_g(\theta_k, \xi_{g, h}^k)\|^2 &\leq \mathcal{O}\Big(\left(c_\gamma^2 + C_1^2 C_1'^2 \log(H/\delta)\right) \tau_\mathrm{mix} T_{\max}^{-1}\Big) \\
        \mathbb{E}_{k, h}\left\|v_g^{\mathrm{MLMC}}(\theta_k, \xi_{g, h}^k) - v_g(\theta_k, \xi_{g, h}^k)\right\|^2 &\leq \mathcal{O}\Big(\left(c_\gamma^2 + C_1^2 C_1'^2 \log(H/\delta)\right) \tau_\mathrm{mix} \log T_{\max}\Big)
    \end{align}
\end{lemma}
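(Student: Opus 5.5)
The plan is to apply Lemma~\ref{lemma: mlmc_properties} directly to the single-transition map $z \mapsto v_g(\theta_k, \xi_{g,h}^k; z)$. We do this conditionally on the history up to inner step $h$, so that $\xi_{g,h}^k$ is fixed and the trajectory $\{z_{kh}^t\}$ is a time-homogeneous ergodic chain under $\pi_{\theta_k}$. The target is $v_g(\theta_k, \xi_{g,h}^k) = \mathbb{E}_{\theta_k}[v_g(\theta_k,\xi_{g,h}^k;z)]$. By construction this is exactly the stationary mean of the single-sample estimator, so the stationarity bias $\delta^2$ in Lemma~\ref{lemma: mlmc_properties} is zero. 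The two claimed bounds are then the bias and mean-square conclusions of that lemma with $\delta = 0$, provided we exhibit a uniform bound $\sigma^2 = \mathcal{O}(c_\gamma^2 + C_1^2C_1'^2\log(H/\delta))$ on $\|v_g(\theta_k,\xi_{g,h}^k;z) - v_g(\theta_k,\xi_{g,h}^k)\|^2$.

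To get $\sigma^2$, we bound $\|v_g(\theta_k,\xi_{g,h}^k;z)\|$ uniformly in $z$ and use $\|X - \mathbb{E}X\|^2 \le 2\|X\|^2 + 2\mathbb{E}\|X\|^2$, as in the proof of Lemma~\ref{lemma: single sample bounds}.

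\textbf{The $\eta$-component.} This is $c_\gamma(\eta_{g,h}^k - g(s,a))$. Here we use that $\eta$ stays bounded: either note that its iterates are convex combinations of rewards in $[-1,1]$ when $c_\gamma\gamma_\xi\le 1$, or use the fact that the projection keeps the critic parameter in a bounded set. This gives a bound of order $c_\gamma$.

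\textbf{The $\zeta$-component.} This is $\bigl(Q_g(\phi(s,a);\zeta_{g,h}^k) - g(s,a) + \eta_{g,h}^k - Q_g(\phi(s',a');\zeta_{g,h}^k)\bigr)\nabla_\zeta Q_g(\phi(s,a);\zeta_{g,0})$. By \eqref{eqn: grad Q and Q bound} in Lemma~\ref{lemma: ntk critic bounds}, valid because $\zeta_{g,h}^k\in\mathcal{S}_R$ thanks to the projection, the two $Q$ values are at most $C_1'\sqrt{\log(H/\delta)}$ and the gradient is at most $C_1$. So this component is $\mathcal{O}(C_1 C_1'\sqrt{\log(H/\delta)})$, with the $g$ and $\eta$ terms absorbed as lower order.

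Squaring and combining the two components gives the claimed $\sigma^2$. An alternative route writes $v_g = \hat v_g + (v_g - \hat v_g)$, with $\hat v_g = \mathbf{A}_g\xi - \mathbf{b}_g$, and uses Lemma~\ref{lemma: single sample bounds} together with \eqref{eqn: linearization error bound}. This extra route would only add an $m^{-1/2}$ term, so the direct bound is cleaner.

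\textbf{Main obstacle.} The delicate point is the conditioning. The chain for step $h$ starts from the final state of step $h-1$ rather than from stationarity, and $\xi_{g,h}^k$ depends on past samples. Lemma~\ref{lemma: mlmc_properties} (via Lemma~\ref{lemma: markov_var}) holds for an arbitrary initial distribution, and $\xi_{g,h}^k$ is measurable with respect to the past while the fresh rollout and the geometric draw $P_h^k$ are new. So conditioning on the history with $\mathbb{E}_{k,h}$ legitimately freezes $\xi_{g,h}^k$. The high-probability NTK event of Lemma~\ref{lemma: ntk critic bounds} concerns only the random initialization and is likewise independent of the rollout.
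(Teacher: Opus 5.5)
Your overall route is the paper's: condition on the history, apply Lemma~\ref{lemma: mlmc_properties} with $\delta_v=0$, and obtain $\sigma_v^2$ from a uniform bound on $\|v_g(\theta_k,\xi_{g,h}^k;z)\|$. The gap is in the one step you try to justify beyond the paper, namely that $\eta_{g,h}^k$ is bounded.

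That step is genuinely needed. The centered $\eta$-component is $c_\gamma(J_g(\theta_k)-g(s,a))$ and does not involve $\eta$. The $\zeta$-component, however, contains $\eta_{g,h}^k\psi_g$, whose fluctuation $\eta_{g,h}^k\bigl(\psi_g(z)-\mathbb{E}_{\theta_k}[\psi_g]\bigr)$ can be as large as $2C_1|\eta_{g,h}^k|$. Since $\mathbb{E}_{k,h}$ freezes $\eta_{g,h}^k$, you need a pathwise bound, and neither of your arguments gives one.

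\textbf{Convex-combination argument.} With MLMC the $\eta$-update is $\eta_{g,h+1}^k=(1-c_\gamma\gamma_\xi)\eta_{g,h}^k+c_\gamma\gamma_\xi\hat g_h$. On the event $2^{P}\le T_{\max}$, with $P=P_h^k$,
\[
\hat g_h = g(s_{kh}^0,a_{kh}^0) + \sum_{t=2^{P-1}}^{2^{P}-1} g(s_{kh}^t,a_{kh}^t) - \sum_{t=0}^{2^{P-1}-1} g(s_{kh}^t,a_{kh}^t),
\]
which can be of order $2^{P}$, i.e.\ up to $T_{\max}$. So the iterates are convex combinations of numbers lying outside $[-1,1]$. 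The only pathwise bound this yields is $|\eta_{g,h}^k|\le 1+T_{\max}$, which would put a $C_1^2T_{\max}^2$ term into $\sigma_v^2$.

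\textbf{Projection argument.} $\Pi_{\mathcal{S}_R}$ acts only on the $\zeta$-block (Section~\ref{subsec: neural critic estimation}), and Lemma~\ref{lemma: ntk critic bounds} says nothing about $\eta$. Your alternative route through $\mathbf{A}_g,\mathbf{b}_g$ runs into the same issue, because $\|\xi_{g,h}^k\|$ multiplies $\|\mathbf{A}_g(\theta_k;z)-\mathbf{A}_g(\theta_k)\|$.

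The paper's own proof simply absorbs $|\eta_{g,h}^k-g(s,a)|$ into an $\mathcal{O}(1)$ constant, so this gap is shared with it rather than introduced by you. Two repairs are available:
\begin{itemize}
\item The cleanest is to also project $\eta$ onto $[-1,1]$ in the critic update. This set contains $\eta_{g,*}^k=J_g(\theta_k)$, so the non-expansiveness and fixed-point steps in Lemmas~\ref{lemma: linearized update bounds} and~\ref{lemma: critic_second_order} are unaffected. With $|\eta_{g,h}^k|\le 1$, your argument then yields exactly the stated $\sigma_v^2$.
\item Otherwise, carry a $C_1^2|\eta_{g,h}^k|^2$ term explicitly in $\sigma_v^2$ and control $\mathbb{E}|\eta_{g,h}^k|^2$ through the linear recursion above. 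This gives the two bounds only after averaging over the history, not conditionally on it.
\end{itemize}
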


\begin{proof}
    By the definition of the single sample combined gradient estimate, we have
    \begin{align}
        \|v_g(\theta_k, \xi_{g, h}^k; z)\| &\leq c_\gamma\|\hat{\nabla}_\eta \mathcal{R}(\theta_k, \eta_{g, h}^k; z)\| + \|\hat{\nabla}_\zeta \mathcal{E}(\theta_k, \zeta_{g, h}^k; z)\| \\
        &= c_\gamma \|\eta_{g, h}^k - g(s_{kh}^j, a_{kh}^j)\| \nonumber \\
        &\quad + \|Q_g(\phi(s, a); \zeta_{g, h}^k) + \eta_{g, h}^k - g(s, a) - Q_g(\phi(s', a'); \zeta_{g, h}^k)\| \| \nabla_\zeta Q_g(\phi(s, a); \zeta_{g, h}^k)\| \\
        &\leq \mathcal{O}\left(c_\gamma + C_1C_1' \sqrt{\log (H / \delta)}\right)
    \end{align}
    Additionally, we have
    \begin{align}
        \sigma_{v}^2 = \|v_g(\theta_k, \xi_{g, h}^k; z) - v_g(\theta_k, \xi_{g, h}^k)\|^2 &\leq 2\|v_g(\theta_k, \xi_{g, h}^k; z)\|^2 + 2\|v_g(\theta_k, \xi_{g, h}^k)\|^2 \\
        &\leq 2\|v_g(\theta_k, \xi_{g, h}^k; z)\|^2  + \mathbb{E}_{\theta_k} \left[2\|v_g(\theta_k, \xi_{g, h}^k)\|^2\right] \\
        &\leq \mathcal{O}\left(c_\gamma^2 + C_1^2 C_1'^2 \log(H / \delta)\right)
    \end{align}
    and
    \begin{align}
        \delta_v^2 = \|\mathbb{E}_{\theta_k}\left[v_g(\theta_k, \xi_{g, h}^k; z)\right] - v_g(\theta_k, \xi_{g, h}^k)\| = 0
    \end{align}
    Then by Lemma~\ref{lemma: mlmc_properties}, we have
    \begin{align*}
        \|\mathbb{E}_{k, h}\left[v_g^{\mathrm{MLMC}}(\theta_k, \xi_{g, h}^k)\right] - v_g(\theta_k, \xi_{g, h}^k)\|^2 &\leq \mathcal{O}\left(\sigma_v^2 \tau_{\mathrm{mix}} T_{\max}^{-1} + \delta_v^2\right) = \mathcal{O}\Big(\left(c_\gamma^2 + C_1^2 C_1'^2 \log(H/\delta)\right) \tau_\mathrm{mix} T_{\max}^{-1}\Big)\\
        \mathbb{E}_{k, h}\left\|v_g^{\mathrm{MLMC}}(\theta_k, \xi_{g, h}^k) - v_g(\theta_k, \xi_{g, h}^k)\right\|^2 &\leq \mathcal{O}\left(\sigma_v^2 \tau_{\mathrm{mix}} \log T_{\max} + \delta_v^2\right) = \mathcal{O}\Big(\left(c_\gamma^2 + C_1^2 C_1'^2 \log(H/\delta)\right) \tau_\mathrm{mix} \log T_{\max}\Big)
    \end{align*}
\end{proof}

\begin{lemma}[First Order Linearization Error of the MLMC Critic Estimator]
\label{lemma: first order linearization error mlmc estimator}
Let the MLMC estimator use truncation level \(T_{\max}\). Then
\[
    \mathbb{E}_{k,h} \left[\left\|v_g^{\mathrm{MLMC}}(\theta_k, \xi_{g,h}^k) - \hat{v}_g^{\mathrm{MLMC}}(\theta_k, \xi_{g, h}^k) \right\|\right] = \mathcal{O}\left(C_2 R m^{-1/2}\sqrt{\log(H/\delta)} \log T_{\max}\right).
\]
\end{lemma}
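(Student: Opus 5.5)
The plan is to exploit the linearity of the MLMC construction in the single-sample estimates. Since $v_g^{\mathrm{MLMC}}$ and $\hat v_g^{\mathrm{MLMC}}$ are built from the same trajectory $\{z_{kh}^t\}$ and the same geometric draw $P_h^k$, the difference is itself an MLMC combination of single-sample differences. Writing $e^t := v_g(\theta_k,\xi_{g,h}^k; z_{kh}^t) - \hat v_g(\theta_k,\xi_{g,h}^k; z_{kh}^t)$ and $e^j := 2^{-j}\sum_{t=0}^{2^j-1} e^t$, we have
\begin{align}
v_g^{\mathrm{MLMC}} - \hat v_g^{\mathrm{MLMC}} = e^0 + 2^{P_h^k}\left(e^{P_h^k} - e^{P_h^k-1}\right)\mathbf{1}\left(2^{P_h^k}\le T_{\max}\right).
\end{align}
I would then bound this deterministically in the realization of the trajectory, conditioned on $P_h^k = j$, and take expectation over $j$ last.

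For the per-sample step, invoke \eqref{eqn: linearization error bound} of Lemma~\ref{lemma: ntk critic bounds}. This lemma gives $\|e^t\|\le \epsilon_m := C_2 R m^{-1/2}\sqrt{\log(H/\delta)}$ uniformly over $t$ on the high-probability event, which is valid because the iterate $\zeta_{g,h}^k$ lies in $\mathcal{S}_R$ by the projection. Averages inherit this, so $\|e^j\|\le\epsilon_m$ for every $j$.

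For the MLMC correction term, the triangle inequality gives
\begin{align}
\|e^0\| + 2^{j}\|e^{j}-e^{j-1}\| \le \epsilon_m + 2^{j}\cdot 2\epsilon_m .
\end{align}
Taking expectation over $j\sim\mathrm{Geom}(1/2)$ truncated at $2^j\le T_{\max}$, we use $\Pr(P_h^k=j)=2^{-j}$, so each level contributes $2^{-j}\cdot 2^{j}\cdot 2\epsilon_m = 2\epsilon_m$. Summing over the at most $\lfloor\log_2 T_{\max}\rfloor$ admissible levels yields
\begin{align}
\mathbb{E}_{k,h}\|v_g^{\mathrm{MLMC}}-\hat v_g^{\mathrm{MLMC}}\| \le \epsilon_m(1+2\log_2 T_{\max}),
\end{align}
which is the claimed $\mathcal{O}(C_2 R m^{-1/2}\sqrt{\log(H/\delta)}\log T_{\max})$. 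Note that Markovian dependence plays no role here, because the bound is worst-case per sample, so no mixing-time factor enters. This explains why the statement has $\log T_{\max}$ but no $\tau_{\mathrm{mix}}$.

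The main obstacle is the uniform validity of the per-sample linearization bound along the whole trajectory, including the next-state terms $Q_g(\phi(s',a');\zeta)$ and $\nabla_\zeta Q_g$ evaluated at states visited by the Markov chain. I would handle this by noting that Lemma~\ref{lemma: ntk critic bounds} is stated for arbitrary $(s,a)$ with $\|\phi(s,a)\|\le 1$ and that its failure event depends only on the initialization, not on the samples. Hence conditioning on the good event makes the bound hold simultaneously for all $t$. A finer bound exploiting cancellation in $e^{j}-e^{j-1}$ is unavailable without a Lipschitz-in-sample structure, and the crude bound already suffices, so I would not attempt one.
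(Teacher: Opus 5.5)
Your proposal is correct and follows the paper's proof essentially step for step. The paper also applies the per-sample bound \eqref{eqn: linearization error bound} and passes it to the level averages by the triangle inequality. It then bounds the correction term by $2^{P_h^k+1} C_2 R m^{-1/2}\sqrt{\log(H/\delta)}$ on the event $2^{P_h^k}\le T_{\max}$, and sums the resulting constant per-level contributions over the $\mathcal{O}(\log T_{\max})$ admissible levels of the truncated geometric distribution, with no mixing-time factor.
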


\begin{proof}
    We already have from Lemma~\ref{lemma: ntk critic bounds} that \(\|v_g(\theta_k, \xi_{g, h}^k; z_{kh}^j) - \hat{v}_g(\theta_k, \xi_{g, h}^k; z_{kh}^j) \| \leq C_2 R m^{-1/2}\sqrt{\log(H / \delta)}\) from equation~\eqref{eqn: linearization error bound}. Then from the definition of the MLMC estimator, we have
    \begin{align}
        &\mathbb{E}_{k, h} \left[\|v_g^{\mathrm{MLMC}}(\theta_k, \xi_{g, h}^k) - \hat{v}_g^{\mathrm{MLMC}}(\theta_k, \xi_{g, h}^k)\|\right]\\
        &\quad = \mathbb{E}_{k, h} \left[ \mathbb{E}_{P_h^k} \left[\left\|v_{g, kh}^0 - \hat{v}_{g, kh}^0 + \begin{cases}
            2^{P_h^k} \left(v_{g, kh}^{P_h^k} - v_{g, kh}^{P_h^k - 1} - \hat{v}_{g, kh}^{P_h^k} + \hat{v}_{g, kh}^{P_h^k - 1}\right) & \text{if } 2^{P_h^k} \leq T_\mathrm{max} \\ 0 & \text{otherwise}
        \end{cases} \right\| \Bigg| k, h\right]\right]
    \end{align}
    By definition, we have
    \begin{align}
        \left\|v_{g, kh}^{j} - \hat{v}_{g, kh}^j\right\| &= \left\|\frac{1}{2^j} \sum_{t = 0}^{2^j - 1} \left(v_{g}(\theta_k, \xi_{g, h}^k; z_{kh}^t) - \hat{v}_{g}(\theta_k, \xi_{g, h}^k; z_{kh}^t)\right)\right\| \\
        &\leq \frac{1}{2^j} \sum_{t = 0}^{2^j - 1} \left\|v_{g}(\theta_k, \xi_{g, h}^k; z_{kh}^t) - \hat{v}_{g}(\theta_k, \xi_{g, h}^k; z_{kh}^t)\right\| \\
        &\stackrel{(a)}{\leq} \frac{1}{2^j} \sum_{t = 0}^{2^j - 1} C_2 R m^{-1/2} \sqrt{\log(H / \delta)} \\
        &= C_2 R m^{-1/2} \sqrt{\log(H / \delta)}
    \end{align}
    where \((a)\) uses equation~\eqref{eqn: linearization error bound}. Thus, we have
    \begin{align}
        &\mathbb{E}_{k, h} \left[\|v_g^{\mathrm{MLMC}}(\theta_k, \xi_{g, h}^k) - \hat{v}_g^{\mathrm{MLMC}}(\theta_k, \xi_{g, h}^k)\|\right] \\
        &\quad \leq \mathbb{E}_{k, h} \left[ \mathbb{E}_{P_h^k} \left[C_2 R m^{-1/2} \sqrt{\log(H / \delta)} + 2^{P_h^k+1} \left(C_2 R m^{-1/2} \sqrt{\log(H / \delta)} \right) \mathbb{I}\left(2^{P_h^k} \leq T_{\mathrm{max}}\right) \Bigg| k, h\right]\right] \\
        &\quad \stackrel{(a)}{\leq} \mathbb{E}_{k, h} \left[C_2 R m^{-1/2} \sqrt{\log(H / \delta)} + \sum_{p=0}^{\lfloor \log_2 T_\mathrm{max} \rfloor} 2^p \left(2 C_2 R m^{-1/2} \sqrt{\log(H / \delta)}\right) \frac{1}{2}\frac{1}{2^{p-1}}\right] \\
        &\quad = \mathbb{E}_{k, h} \left[C_2 R m^{-1/2} \sqrt{\log(H / \delta)} + \sum_{p=0}^{\lfloor \log_2 T_\mathrm{max} \rfloor} \left(2 C_2 R m^{-1/2} \sqrt{\log(H / \delta)}\right)\right] \\
        &\quad = \mathcal{O}\left(\log_2 (T_\mathrm{max}) C_2 R m^{-1/2} \sqrt{\log(H / \delta)}\right)
    \end{align}
    where \((a)\) uses the definition of the expectation for the truncated geometric distribution.
\end{proof}

\begin{lemma}[Second Order Linearization Error of the MLMC Critic Estimator]
    \label{lemma: second order linearization error mlmc estimator}
    Assume the same setting as Lemma~\ref{lemma: first order linearization error mlmc estimator}. Then
    \begin{align}
        \mathbb{E}_{k, h} \left[\|v_g^{\mathrm{MLMC}}(\theta_k, \xi_{g, h}^k) - \hat{v}_g^{\mathrm{MLMC}}(\theta_k, \xi_{g, h}^k)\|^2\right] \leq \mathcal{O}\left(C_2^2 R^2 m^{-1} \log(H / \delta) T_{\max} \right)
    \end{align}
\end{lemma}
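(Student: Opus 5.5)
The argument parallels the proof of Lemma~\ref{lemma: first order linearization error mlmc estimator}, with the expectation of a norm replaced by the expectation of a squared norm. The extra factor $2^{P_h^k}$ inside the square is what turns the $\log T_{\max}$ into $T_{\max}$. First I would bound the level averages deterministically. For every level $j$,
\[
\|v_{g,kh}^j-\hat v_{g,kh}^j\|\le \frac{1}{2^j}\sum_{t=0}^{2^j-1}\|v_g(\theta_k,\xi_{g,h}^k;z_{kh}^t)-\hat v_g(\theta_k,\xi_{g,h}^k;z_{kh}^t)\|\le \epsilon_m ,
\]
where $\epsilon_m:=C_2Rm^{-1/2}\sqrt{\log(H/\delta)}$. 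This follows from the triangle inequality together with \eqref{eqn: linearization error bound}, which holds pointwise on the high-probability event of Lemma~\ref{lemma: ntk critic bounds}. It is uniform over transitions, so Markovian dependence plays no role.

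Next I would write the MLMC difference as
\[
\Delta:=(v_{g,kh}^0-\hat v_{g,kh}^0)+2^{P_h^k}\big[(v_{g,kh}^{P_h^k}-\hat v_{g,kh}^{P_h^k})-(v_{g,kh}^{P_h^k-1}-\hat v_{g,kh}^{P_h^k-1})\big]\mathbb{I}(2^{P_h^k}\le T_{\max}).
\]
Applying $\|x+y\|^2\le 2\|x\|^2+2\|y\|^2$ twice gives
\[
\|\Delta\|^2\le 2\epsilon_m^2+2\cdot 2^{2P_h^k}(2\epsilon_m)^2\,\mathbb{I}(2^{P_h^k}\le T_{\max})=2\epsilon_m^2+8\epsilon_m^2\,2^{2P_h^k}\,\mathbb{I}(2^{P_h^k}\le T_{\max}).
\]

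Finally I would take the conditional expectation over $P_h^k\sim\mathrm{Geom}(1/2)$, for which $\Pr(P_h^k=p)=2^{-p}$. Then
\[
\mathbb{E}\big[2^{2P}\,\mathbb{I}(2^P\le T_{\max})\big]=\sum_{p=1}^{\lfloor\log_2T_{\max}\rfloor}2^{2p}2^{-p}=\sum_{p\le\log_2T_{\max}}2^p\le 2T_{\max}.
\]
Combining the two displays gives $\mathbb{E}_{k,h}\|\Delta\|^2\le 2\epsilon_m^2+16\epsilon_m^2T_{\max}=\mathcal{O}(C_2^2R^2m^{-1}\log(H/\delta)\,T_{\max})$, which is the claim.

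The only real obstacle is this geometric sum. It must be truncated at $T_{\max}$, since without truncation the second moment of the MLMC estimator would diverge. I will also make sure the indicator convention ($p\ge1$, or $p\ge0$ with the $j=-1$ level absent) only changes constants. The bound is weaker than the $\log T_{\max}$ first-order bound, as expected, because the per-level errors are only bounded and not cancelling.
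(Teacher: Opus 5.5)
Your proposal is correct and matches the paper's proof step for step. Both arguments use the per-level bound $\|v_{g,kh}^j-\hat v_{g,kh}^j\|\le C_2Rm^{-1/2}\sqrt{\log(H/\delta)}$ from the pointwise linearization error, split with $\|x+y\|^2\le 2\|x\|^2+2\|y\|^2$, and evaluate the truncated geometric sum $\mathbb{E}\bigl[2^{2P_h^k}\mathbb{I}(2^{P_h^k}\le T_{\max})\bigr]=\mathcal{O}(T_{\max})$; the only differences are constants (for example, your $p\ge 1$ indexing versus the paper's $\Pr(P=p)=2^{-(p+1)}$ for $p\ge 0$), which the $\mathcal{O}(\cdot)$ absorbs.
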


\begin{proof}
    We have
    \begin{align}
        &\mathbb{E}_{k, h} \left[\|v_g^{\mathrm{MLMC}}(\theta_k, \xi_{g, h}^k) - \hat{v}_g^{\mathrm{MLMC}}(\theta_k, \xi_{g, h}^k)\|^2\right]\\
        &= \mathbb{E}_{k, h} \left[ \mathbb{E}_{P_h^k} \left[\left\|v_{g, kh}^0 - \hat{v}_{g, kh}^0 + 2^{P_h^k} \left(v_{g, kh}^{P_h^k} - v_{g, kh}^{P_h^k - 1} - \hat{v}_{g, kh}^{P_h^k} + \hat{v}_{g, kh}^{P_h^k - 1}\right) \mathbb{I}\left(2^{P_h^k} \leq T_{\max}\right)\right\|^2 \Bigg| k, h\right]\right] \\
        &\leq 2\mathbb{E}_{k, h} \left[\|v_{g, kh}^0 - \hat{v}_{g, kh}^0\|^2\right] \\
        &\quad + 2\mathbb{E}_{k, h} \left[\mathbb{E}_{P_h^k} \left[\left\|  2^{P_h^k} \left(v_{g, kh}^{P_h^k} - v_{g, kh}^{P_h^k - 1} - \hat{v}_{g, kh}^{P_h^k} + \hat{v}_{g, kh}^{P_h^k - 1}\right) \mathbb{I}\left(2^{P_h^k} \leq T_{\max}\right) \right\|^2 \Bigg| k, h\right]\right] \\
        &\stackrel{(a)}{\leq} 2C_2^2 R^2 m^{-1} \log(H / \delta) + 4C_2^2 R^2 m^{-1} \log(H / \delta) \mathbb{E}_{k, h} \left[\mathbb{E}_{P_h^k} \left[2^{2P_h^k} \mathbb{I}\left(2^{P_h^k} \leq T_{\max}\right)\Bigg| k, h\right] \right] \\
        &= 2C_2^2 R^2 m^{-1} \log(H / \delta) + 4C_2^2 R^2 m^{-1} \log(H / \delta) \sum_{p=0}^{\lfloor \log_2 T_{\max} \rfloor} 2^{2p} \frac{1}{2^{p+1}} \\
        &= \mathcal{O}\left(C_2^2 R^2 m^{-1} \log(H / \delta) T_{\max} \right)
    \end{align}
    where \((a)\) follows from Lemma~\ref{lemma: first order linearization error mlmc estimator}
\end{proof}

Now that we have bounds for the estimation error for the MLMC gradient and its parameters, we can find the bound between the linearized and original updates.

\begin{lemma}[Linearized Update Bounds]
    \label{lemma: linearized update bounds}
    We define the linearized sequence \(\{\tilde{\xi}_{g, h}^k\}_{h \geq 0}\) that represents the linearized iterates of the neural update \(\hat{v}_g\):
    \begin{align}
        \tilde{\xi}_{g, h+1}^k = \Pi_{\mathcal{S}_R}\left(\tilde{\xi}_{g, h}^k - \gamma_\xi \hat{v}_g(\theta_k, \tilde{\xi}_{g, h}^k)\right) \quad \tilde{\xi}_{g, 0}^k = \xi_{g, 0}^k = 0
    \end{align}
    Let \(\Pi_\perp\) be the orthogonal projection onto \(\mathrm{ker}\left(\mathbf{A}_g(\theta_k)\right)\). Then the following result holds:
    \begin{align}
        \mathbb{E}_{\theta_k} \left\|\tilde{\xi}_{g, h+1}^k - \xi_{g, h+1}^k\right\|^2 \leq \mathcal{O}\left(\frac{R^2(c_\gamma^2 + C_1^4) \tau_{\mathrm{mix}}\log T}{\lambda T_{\max}} + \frac{R^2 \sqrt{\log (H / \delta)} \log T_{\max} \log T}{\lambda m^{1/2}} + \frac{R^2 T_{\max} \log(H / \delta) \log T}{H\lambda m}\right)
    \end{align}
\end{lemma}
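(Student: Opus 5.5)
We track the error $e_h := \tilde{\xi}_{g,h}^k - \xi_{g,h}^k$ through a one-step recursion and then unroll it over $h$. By non-expansiveness of $\Pi_{\mathcal{S}_R}$ (which acts only on the $\zeta$ block and is the identity on $\eta$),
\begin{align*}
\|e_{h+1}\|^2 \le \bigl\|e_h - \gamma_\xi\bigl(\hat{v}_g(\theta_k,\tilde{\xi}_{g,h}^k) - v_g^{\mathrm{MLMC}}(\theta_k,\xi_{g,h}^k)\bigr)\bigr\|^2 .
\end{align*}
We split the gradient difference into three pieces:
\begin{align*}
\hat{v}_g(\theta_k,\tilde{\xi}_{g,h}^k) - v_g^{\mathrm{MLMC}}(\theta_k,\xi_{g,h}^k)
&= \underbrace{\mathbf{A}_g(\theta_k)\, e_h}_{\text{(i)}} + \underbrace{\hat{v}_g(\theta_k,\xi_{g,h}^k) - \hat{v}_g^{\mathrm{MLMC}}(\theta_k,\xi_{g,h}^k)}_{\text{(ii)}} \\
&\quad + \underbrace{\hat{v}_g^{\mathrm{MLMC}}(\theta_k,\xi_{g,h}^k) - v_g^{\mathrm{MLMC}}(\theta_k,\xi_{g,h}^k)}_{\text{(iii)}} .
\end{align*}
Here (i) uses that $\hat v_g$ is affine in $\xi$ with population matrix $\mathbf{A}_g(\theta_k)$, (ii) is MLMC sampling error of the linearized operator, and (iii) is the linearization error.

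Expanding the square gives $\|e_h\|^2 - 2\gamma_\xi e_h^\top \mathbf{A}_g(\theta_k) e_h + \gamma_\xi^2\|\cdot\|^2$ together with cross terms. For the contraction we decompose $e_h = \Pi_\perp e_h + (I-\Pi_\perp)e_h$. The kernel component is annihilated by $\mathbf{A}_g(\theta_k)$, and by Lemma~\ref{lemma: psd a_g} the orthogonal part satisfies $e_h^\top\mathbf{A}_g e_h \ge \tfrac{\lambda}{2}\|(I-\Pi_\perp)e_h\|^2$. Since $\|e_h\|\le 2R$ in the $\zeta$ block, I will bound the kernel part crudely by $R^2$-type quantities; this is where the factor $R^2$ comes from.

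The cross terms are handled as follows.
\begin{itemize}
\item The cross term $\langle e_h, \text{(ii)}\rangle$ has conditional expectation controlled by the MLMC bias. Using the first and third bounds of Lemma~\ref{lemma: mlmc linearized critic bounds} with Cauchy--Schwarz and $\|e_h\|\le 2R$, it is $O\bigl(R\sqrt{(c_\gamma^2+C_1^4)\tau_{\mathrm{mix}}/T_{\max}}\bigr)$. After Young's inequality against the $\gamma_\xi\lambda$ contraction, this yields the $R^2(c_\gamma^2+C_1^4)\tau_{\mathrm{mix}}/(\lambda T_{\max})$ term.
\item The cross term $\langle e_h, \text{(iii)}\rangle$ is bounded by $2R$ times Lemma~\ref{lemma: first order linearization error mlmc estimator}, giving $O\bigl(R^2 m^{-1/2}\sqrt{\log(H/\delta)}\log T_{\max}\bigr)$ per step. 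The alternative route via \eqref{eqn: linearization inner product error bound} is also available.
\item The squared terms are of order $\gamma_\xi^2$. The term $\gamma_\xi^2\|\text{(iii)}\|^2$ uses Lemma~\ref{lemma: second order linearization error mlmc estimator} and contributes $\gamma_\xi^2 R^2 m^{-1} T_{\max}\log(H/\delta)$. The term $\gamma_\xi^2\|\text{(ii)}\|^2$ uses the second and fourth bounds of Lemma~\ref{lemma: mlmc linearized critic bounds}, and $\gamma_\xi^2\|\mathbf{A}_g e_h\|^2$ is absorbed into the contraction for small $\gamma_\xi$.
\end{itemize}

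Collecting these terms gives a recursion of the form
\[
\mathbb{E}\|e_{h+1}\|^2 \le (1-\gamma_\xi\lambda/2)\,\mathbb{E}\|e_h\|^2 + \gamma_\xi B_1 + \gamma_\xi^2 B_2 .
\]
Unrolling from $e_0=0$ gives $\mathbb{E}\|e_{h+1}\|^2 \le 2B_1/\lambda + 2\gamma_\xi B_2/\lambda$. Substituting $\gamma_\xi = 8\log T/(\lambda H)$ turns the $\gamma_\xi^2$ term into $R^2 T_{\max}\log(H/\delta)\log T/(H\lambda m)$ after absorbing constants. The $\log T$ factors on the first two terms come from the unrolling together with the choice of $\gamma_\xi$, up to the crude constant tracking I plan to do.

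The main obstacle is the noncontractive kernel direction of $\mathbf{A}_g(\theta_k)$: Lemma~\ref{lemma: psd a_g} only provides strong monotonicity on $\ker(\mathbf{A}_g)^\perp$. I would first try to show that the $\Pi_\perp$ component of $e_h$ stays bounded by the projection radius, so that it contributes at most $O(R^2)$-scaled error multiplied by the perturbation sizes rather than accumulating. Failing that, I would use the fact that the $\eta$ coordinate is always outside the kernel, and that both sequences start at the same point, to argue that the kernel component evolves only through perturbations (ii) and (iii); these are then summed with the same rates.
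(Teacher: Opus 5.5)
\textbf{The gap is your choice of reference sequence.} You read $\hat v_g(\theta_k,\tilde\xi)$ as the population linearized gradient $\mathbf{A}_g(\theta_k)\tilde\xi-\mathbf{b}_g(\theta_k)$. Then your term (ii) is pure MLMC sampling noise. By Lemma~\ref{lemma: mlmc linearized critic bounds}, its conditional second moment is of order $\tau_{\mathrm{mix}}\log T_{\max}\bigl((c_\gamma^2+C_1^4)R^2+c_\gamma^2+C_1^2C_1'^2\log(H/\delta)\bigr)$, with no factor $m^{-1}$ or $T_{\max}^{-1}$. You put $\gamma_\xi^2\mathbb{E}\|\text{(ii)}\|^2$ into $B_2$, but when you substitute $\gamma_\xi$ you keep only the (iii) part of $2\gamma_\xi B_2/\lambda$. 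The (ii) part leaves an extra $\tilde{\mathcal{O}}(\tau_{\mathrm{mix}}/(\lambda^2 H))$ that none of the three target terms dominates.

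Better bookkeeping cannot fix this, because under your reading the lemma is false. The $\eta$-block has no linearization error and is not projected. Yet after one step, $\eta_{g,1}^k-\tilde\eta_{g,1}^k=\gamma_\xi c_\gamma(\hat J^{\mathrm{MLMC}}-J_g(\theta_k))$, where $\hat J^{\mathrm{MLMC}}$ is the MLMC estimate of $J_g(\theta_k)$ from the first rollout. Its mean square is at least $\gamma_\xi^2c_\gamma^2\,\mathrm{Var}(\hat J^{\mathrm{MLMC}})$, which does not depend on $m$ and does not decay in $T_{\max}$. The right-hand side, by contrast, vanishes as $m,T_{\max}\to\infty$ with $m\gg T_{\max}$.

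The missing idea is the coupling used in the paper's proof. There the linearized sequence is driven by the linearized MLMC estimate computed from the \emph{same} rollout, $\tilde\xi_{g,h+1}^k=\Pi_{\mathcal{S}_R}(\tilde\xi_{g,h}^k-\gamma_\xi\hat v_g^{\mathrm{MLMC}}(\theta_k,\tilde\xi_{g,h}^k))$. Since $\hat v_g^{\mathrm{MLMC}}(\theta_k,\xi)=\mathbf{A}_{g,kh}^{\mathrm{MLMC}}\xi-\mathbf{b}_{g,kh}^{\mathrm{MLMC}}$ is affine with the same random coefficients,
\[
v_g^{\mathrm{MLMC}}(\theta_k,\xi_{g,h}^k)-\hat v_g^{\mathrm{MLMC}}(\theta_k,\tilde\xi_{g,h}^k)=\mathbf{A}_{g,kh}^{\mathrm{MLMC}}\bigl(\xi_{g,h}^k-\tilde\xi_{g,h}^k\bigr)+\bigl[v_g^{\mathrm{MLMC}}(\theta_k,\xi_{g,h}^k)-\hat v_g^{\mathrm{MLMC}}(\theta_k,\xi_{g,h}^k)\bigr].
\]
Consequently:
\begin{itemize}
\item $\mathbf{b}_{g,kh}^{\mathrm{MLMC}}$ cancels exactly, and the $\eta$-blocks of the two sequences coincide.
\item Sampling noise enters only multiplicatively, through $\mathbf{A}_{g,kh}^{\mathrm{MLMC}}e_h$.
\item MLMC bias enters only through $\langle e_h,(\mathbb{E}_{k,h}[\mathbf{A}_{g,kh}^{\mathrm{MLMC}}]-\mathbf{A}_g(\theta_k))e_h\rangle$ with $\|e_h\|\le 2R$.
\item The only additive forcing is your term (iii), which you handle correctly via Lemmas~\ref{lemma: first order linearization error mlmc estimator} and~\ref{lemma: second order linearization error mlmc estimator}.
\end{itemize}
The weaker bound your route gives, with the extra $\tilde{\mathcal{O}}(\tau_{\mathrm{mix}}/(\lambda^2H))$, would be harmless downstream, since Lemma~\ref{lemma: critic_second_order} already contains a term of that order up to logarithmic factors. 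It is not, however, the stated lemma.

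\textbf{The contraction is also unavailable, as you suspected.} The recursion $(1-\gamma_\xi\lambda/2)\mathbb{E}\|e_h\|^2$ fails because Lemma~\ref{lemma: psd a_g} gives monotonicity only on $\ker(\mathbf{A}_g(\theta_k))^\perp$. In kernel directions there is nothing to absorb your Young's-inequality terms into. The paper does not contract at all. It uses the negative drift on the $\ker(\mathbf{A}_g(\theta_k))^\perp$ component of $e_h$ only to cancel the $\mathcal{O}(\gamma_\xi^2)$ quadratic terms in that component. This yields $\mathbb{E}\|e_{h+1}\|^2\le\mathbb{E}\|e_h\|^2+(\text{per-step forcing})$, which it then sums from $e_0=0$.

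The bound $(h+1)\gamma_\xi\le H\gamma_\xi=8\log T/\lambda$ is exactly where the $\log T/\lambda$ in each term comes from. Likewise, $H\gamma_\xi^2$ produces the $T_{\max}/(Hm)$ term up to $\lambda$ and logarithmic factors. A contractive unrolling would produce no $\log T$, so your account of those factors fits only the summation route. Your fallback of summing the perturbations is therefore the right mechanism, but it gives the stated bound only once the reference sequence is coupled as above.
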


\begin{proof}
    Using the non-expansiveness of \(\Pi_{\mathcal{S}_R}\) and Lemma~\ref{lemma: ntk critic bounds}, we have
    \begin{align*}
        &\mathbb{E}_{k, h} \left\|\tilde{\xi}_{g, h+1}^k - \xi_{g, h+1}^k\right\|^2 = \mathbb{E}_{k,h} \left\|\Pi_{\mathcal{S}_R}\left(\xi_{g, h}^k - \gamma_\xi v_g^{\mathrm{MLMC}}(\theta_k, \xi_{g, h}^k)\right) - \Pi_{\mathcal{S}_R}\left(\tilde{\xi}_{g, h}^k - \gamma_\xi \hat{v}_g^{\mathrm{MLMC}}\left(\theta_k, \tilde{\xi}_{g, h}^k\right)\right)\right\|^2 \\
        &\leq \mathbb{E}_{k,h} \left\|\xi_{g, h}^k - \gamma_\xi v_g^{\mathrm{MLMC}}(\theta_k, \xi_{g, h}^k) -\left( \tilde{\xi}_{g, h}^k - \gamma_\xi \hat{v}_g^{\mathrm{MLMC}}\left(\theta_k, \tilde{\xi}_{g, h}^k\right)\right)\right\|^2 \\
        &\leq \mathbb{E}_{k,h} \Bigg\|\left(\xi_{g, h}^k - \tilde{\xi}_{g, h}^k\right) - \gamma_\xi \left(\mathbf{A}_{g, kh}^\mathrm{MLMC}(\theta_k) \Big(\xi_{g, h}^k - \tilde{\xi}_{g, h}^k\right) + v_g^{\mathrm{MLMC}}(\theta_k, \xi_{g, h}^k) - \hat{v}_g^{\mathrm{MLMC}}(\theta_k, \xi_{g, h}^k) \Big)\Bigg\|^2 \\
        &\leq \mathbb{E}_{k, h} \left\|\xi_{g, h}^k - \tilde{\xi}_{g, h}^k\right\|^2 -2\gamma_\xi \mathbb{E}_{k, h}\left\langle \xi_{g, h}^k - \tilde{\xi}_{g, h}^k, \mathbf{A}_{g, kh}^\mathrm{MLMC}(\theta_k) \left(\xi_{g, h}^k - \tilde{\xi}_{g, h}^k\right) \right\rangle \\
        &\quad -2\gamma_\xi \mathbb{E}_{k, h}\left\langle \xi_{g, h}^k - \tilde{\xi}_{g, h}^k,  v_g^\mathrm{MLMC}(\theta_k, \xi_{g, h}^k) - \hat{v}_g^\mathrm{MLMC}(\theta_k, \xi_{g, h}^k)\right\rangle + \mathcal{O}\left(\gamma_\xi^2 C_2^2 R^2 m^{-1} \log(H / \delta) T_{\max}\right)  \\
        &\quad + 2\gamma_\xi^2 \mathbb{E}_{k, h}\left\|\mathbf{A}_{g, kh}^\mathrm{MLMC}(\theta_k) \left(\xi_{g, h}^k - \tilde{\xi}_{g, h}^k\right)\right\|^2\\
        &\leq \mathbb{E}_{k, h} \left\|\xi_{g, h}^k - \tilde{\xi}_{g, h}^k\right\|^2 -2\gamma_\xi \mathbb{E}_{k, h}\left\langle \xi_{g, h}^k - \tilde{\xi}_{g, h}^k, \mathbf{A}_{g, kh}^\mathrm{MLMC}(\theta_k) \left(\xi_{g, h}^k - \tilde{\xi}_{g, h}^k\right) \right\rangle \\
        &\quad + \mathcal{O}\left(\gamma_\xi R^2 \log T_{\max} m^{-1/2} \sqrt{\log (H / \delta)} \right) + \mathcal{O}\left(\gamma_\xi^2 C_2^2 R^2 m^{-1} \log(H / \delta) T_{\max}\right) \\
        &\quad + 4\gamma_\xi^2 \mathbb{E}_{k, h}\left\|\mathbf{A}_g(\theta_k) \left(\xi_{g, h}^k - \tilde{\xi}_{g, h}^k\right)\right\|^2 + 4\gamma_\xi^2 \mathbb{E}_{k, h}\left\|\mathbf{A}_{g, kh}^{\mathrm{MLMC}}(\theta_k) \left(\xi_{g, h}^k - \tilde{\xi}_{g, h}^k\right) - \mathbf{A}_g(\theta_k) \left(\xi_{g, h}^k - \tilde{\xi}_{g, h}^k\right)\right\|^2 \\
        &\leq \mathbb{E}_{k, h} \left\|\xi_{g, h}^k - \tilde{\xi}_{g, h}^k\right\|^2 - 2\gamma_\xi \mathbb{E}_{k, h}\left\langle \Pi_\perp\left(\xi_{g, h}^k - \tilde{\xi}_{g, h}^k\right), \mathbf{A}_{g}(\theta_k) \Pi_{\perp}\left(\xi_{g, h}^k - \tilde{\xi}_{g, h}^k\right) \right\rangle + \mathcal{O}\left(\frac{\gamma_\xi R^2 (c_\gamma^2 + C_1^4) \tau_{\mathrm{mix}}}{T_{\max}}\right)  \\
         &\quad + \mathcal{O}\left(\gamma_\xi R^2 \log T_{\max} m^{-1/2} \sqrt{\log (H / \delta)} \right) + \mathcal{O}\left(\gamma_\xi^2 C_2^2 R^2 m^{-1} \log(H / \delta) T_{\max}\right) \\
         &\quad + \mathcal{O}\left(\gamma_\xi^2 \left(c_\gamma^2 + C_1^4\right) \right) \mathbb{E}_{k, h} \left\|\Pi_{\perp} \left(\xi_{g, h}^k - \tilde{\xi}_{g, h}^k\right)\right\|^2 + \mathcal{O}\left(\gamma_\xi^2 \left(c_\gamma^2 + C_1^4\right) \tau_{\mathrm{mix}} \log T_{\max}\right) \left\|\Pi_{\perp} \left(\xi_{g, h}^k - \tilde{\xi}_{g, h}^k\right)\right\|^2 \\
         &\leq \mathbb{E}_{k, h} \left\|\xi_{g, h}^k - \tilde{\xi}_{g, h}^k\right\|^2 + \mathcal{O}\left(\frac{\gamma_\xi R^2 (c_\gamma^2 + C_1^4) \tau_{\mathrm{mix}}}{T_{\max}}\right) + \mathcal{O}\left(\gamma_\xi R^2 \log T_{\max} m^{-1/2} \sqrt{\log (H / \delta)} \right) \\
         &\quad + \mathcal{O}\left(\gamma_\xi^2 C_2^2 R^2 m^{-1} \log(H / \delta) T_{\max}\right) \\
         &\leq \mathcal{O}\left(\frac{(h+1)\gamma_\xi R^2 (c_\gamma^2 + C_1^4) \tau_{\mathrm{mix}}}{T_{\max}}\right) + \mathcal{O}\left((h+1)\gamma_\xi R^2  m^{-1/2} \sqrt{\log (H / \delta)}\log T_{\max} \right) \\
         &\quad + \mathcal{O}\left((h+1)\gamma_\xi^2 C_2^2 R^2 m^{-1} \log(H / \delta) T_{\max}\right)
    \end{align*}
    If we substitute \(\gamma_\xi = \frac{8 \log T}{\lambda H}\) and take expectations on both sides, we have
    \begin{align*}
        \mathbb{E}_{\theta_k} \left\|\tilde{\xi}_{g, h+1}^k - \xi_{g, h+1}^k\right\|^2 \leq \mathcal{O}\left(\frac{R^2(c_\gamma^2 + C_1^4) \tau_{\mathrm{mix}}\log T}{\lambda T_{\max}} + \frac{R^2 \sqrt{\log (H / \delta)} \log T_{\max} \log T}{\lambda m^{1/2}} + \frac{R^2 T_{\max} \log(H / \delta) \log T}{H\lambda m}\right)
    \end{align*}
\end{proof}

Before proceeding further with the neural critic analysis, we provide the following Lemma from \citet{ganesh2025orderoptimal}, adopted to the CMDP setting.

\begin{lemma}[Lemma 9, \citet{ganesh2025orderoptimal}]
Let $Z_k := \{z \in \mathbb{R}^d : z = \mathbf{A}_g(\theta_k)^\dagger \mathbf{b}_g(\theta_k) + v, \, v \in \ker(\mathbf{A}_g(\theta_k))\}$ denote the set of minimum-norm least-squares solutions to $\mathbf{A}_g(\theta_k) z \approx \mathbf{b}_g(\theta_k)$, and let $\xi_{g, *}^k$ be the projection of a fixed point $\xi_0$ onto $Z_k$. Then, under the update rule
\[
    \tilde{\xi}_{g, h}^k = \Pi_{\mathcal{S}_R} \left( \tilde{\xi}_{g, h-1}^k - \gamma_\xi \hat{v}_g(\theta_k; \tilde{\xi}_{g, h-1}^k) \right),
\]
where $\hat{v}_g(\theta_k; \tilde{\xi}_{g, h}^k) \in \ker(\mathbf{A}_g(\theta_k))^\perp$, it holds that
\[
    \tilde{\xi}_{g, h}^k - \xi_{g, *}^k \in \ker(\mathbf{A}_g(\theta_k))^\perp, \quad \text{for all } h \geq 0.
\]
\end{lemma}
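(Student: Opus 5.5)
We argue by induction on $h$ that the error $e_h := \tilde{\xi}_{g,h}^k - \xi_{g,*}^k$ lies in $\ker(\mathbf{A}_g(\theta_k))^\perp$. Write $\mathcal{K} := \ker(\mathbf{A}_g(\theta_k))$ and let $\Pi_\perp$ denote the orthogonal projector onto $\mathcal{K}$.

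\textbf{Base case.} The initial iterate is $\tilde{\xi}_{g,0}^k = \xi_0$, where $\xi_0$ is the fixed point whose projection onto $Z_k$ defines $\xi_{g,*}^k$. The set $Z_k$ is an affine translate of $\mathcal{K}$, so the residual of a Euclidean projection onto it is orthogonal to its direction space. Hence $\xi_0 - \xi_{g,*}^k \perp \mathcal{K}$, which gives $e_0 \in \mathcal{K}^\perp$.

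\textbf{Inductive step.} Suppose $e_{h-1} \in \mathcal{K}^\perp$. First ignore the projection and set $y := \tilde{\xi}_{g,h-1}^k - \gamma_\xi \hat{v}_g(\theta_k;\tilde{\xi}_{g,h-1}^k)$. By hypothesis $\hat{v}_g \in \mathcal{K}^\perp$, and $\mathcal{K}^\perp$ is a linear subspace. It follows that $y - \xi_{g,*}^k = e_{h-1} - \gamma_\xi \hat{v}_g$ is again in $\mathcal{K}^\perp$.

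The hypothesis on $\hat{v}_g$ is natural here. Its population version $\mathbf{A}_g(\theta_k)\xi - \mathbf{b}_g(\theta_k)$ lies in $\mathrm{range}(\mathbf{A}_g(\theta_k))$ provided $\mathbf{b}_g(\theta_k)$ does. The range coincides with $\mathcal{K}^\perp$ only when $\mathbf{A}_g(\theta_k)$ is symmetric, which it is not. This is why the statement simply assumes the property rather than deriving it, and I will use it as given.

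\textbf{Main obstacle: the projection.} The difficulty is showing that $\Pi_{\mathcal{S}_R}$ preserves membership in $\xi_{g,*}^k + \mathcal{K}^\perp$. $\Pi_{\mathcal{S}_R}$ is the projection onto a Euclidean ball around $\zeta_{g,0}$, acting only on the $\zeta$-block. It is either the identity or a radial contraction $y \mapsto \zeta_{g,0} + t(y - \zeta_{g,0})$ with $t \in (0,1]$. Consequently
\[
\Pi_{\mathcal{S}_R}(y) - \xi_{g,*}^k = t\,(y - \xi_{g,*}^k) + (1-t)\,(\zeta_{g,0} - \xi_{g,*}^k),
\]
with $\zeta_{g,0}$ understood as padded by the $\eta$-coordinate. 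The first term lies in $\mathcal{K}^\perp$ by the computation above. The second term lies in $\mathcal{K}^\perp$ if the ball centre also satisfies $\Pi_\perp(\zeta_{g,0} - \xi_{g,*}^k) = 0$. This holds when $\xi_0$ is the ball centre, which matches the algorithm's initialization $\tilde{\xi}_{g,0}^k = \xi_{g,0}^k$, so the base case supplies exactly this. The resulting point is a convex combination of two points of the affine set $\xi_{g,*}^k + \mathcal{K}^\perp$, hence lies in it, and the induction closes.

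If the ball centre and $\xi_0$ differed, I would instead apply $\Pi_{\mathcal{S}_R}$ only when it is inactive. Alternatively, I would redefine $\xi_{g,*}^k$ as the projection of the centre onto $Z_k$, mirroring Lemma 9 of \citet{ganesh2025orderoptimal}.
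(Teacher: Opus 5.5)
\textbf{Assessment.} The paper gives no proof of this statement. It quotes the result from \citet{ganesh2025orderoptimal} without proof, so your argument stands on its own. Your induction is the natural one, and it is correct apart from one missing line about the $\eta$-block.

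\textbf{What you get right.} The three ingredients work. For the base case, the residual of a projection onto the affine set $Z_k = \mathbf{A}_g(\theta_k)^\dagger\mathbf{b}_g(\theta_k)+\mathcal{K}$ is orthogonal to $\mathcal{K}$. The step $-\gamma_\xi\hat v_g$ stays in the subspace $\mathcal{K}^\perp$. The ball projection can be written as a convex combination of the pre-projection point and the centre. You also correctly surface two hypotheses the printed statement leaves implicit.
\begin{itemize}
\item The initial iterate must be $\xi_0$, or differ from it by an element of $\mathcal{K}^\perp$; otherwise the case $h=0$ already fails.
\item The $\zeta$-block of $\xi_0$ must be the ball centre; otherwise an active projection can push the iterate off $\xi_{g,*}^k+\mathcal{K}^\perp$.
\end{itemize}
Both hold for the algorithm's initialization $\xi_{g,0}^k=[0,\zeta_{g,0}^\top]^\top$.

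\textbf{The step that needs an extra line.} $\Pi_{\mathcal{S}_R}$ leaves the $\eta$-coordinate untouched. So your displayed identity is valid only if $\zeta_{g,0}$ is padded with $y$'s own $\eta$-coordinate, i.e.\ $\Pi_{\mathcal{S}_R}(y)=t\,y+(1-t)\,(\eta_y,\zeta_{g,0})$. Padding with a fixed value would contract $\eta$, which the projection does not do. This padded centre moves with $y$. It differs from $\xi_0=(\eta_0,\zeta_{g,0})$ by $(\eta_y-\eta_0)e_1$, where $e_1$ is the first coordinate vector. Hence ``$\xi_0$ is the ball centre'' does not by itself put the second term in $\mathcal{K}^\perp$.

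The missing fact is that every vector in $\ker\mathbf{A}_g(\theta_k)$ has zero $\eta$-component. This holds because the first row of $\mathbf{A}_g(\theta_k)$ is $[c_\gamma,\,0]$ with $c_\gamma>0$, exactly as used in the PSD lemma. Hence $e_1\in\mathcal{K}^\perp$, and membership in $\xi_{g,*}^k+\mathcal{K}^\perp$ depends only on the $\zeta$-block. On that block the projection really is a radial contraction toward $\zeta_{g,0}$. With this added, the induction closes.

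\textbf{Two inaccurate side remarks.} Neither is load-bearing.
\begin{itemize}
\item The range of a square matrix equals the orthogonal complement of its kernel if and only if $\ker\mathbf{A}=\ker\mathbf{A}^\top$. This holds, for instance, for every invertible matrix, so ``only when symmetric'' is wrong. Your decision to take the hypothesis on $\hat v_g$ as given is still the right one.
\item Your second fallback redefines $\xi_{g,*}^k$ as the projection of the centre onto $Z_k$. That repairs the projection step but breaks the base case, unless the initial iterate differs from the centre by an element of $\mathcal{K}^\perp$. In the present setting neither fallback is needed.
\end{itemize}
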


Next, we derive the second order bound.

\begin{lemma}[Critic Second Order Bound]
\label{lemma: critic_second_order}
Assume the same setting as Lemma~\ref{lemma: linearized update bounds}. Then, after $H$ inner critic updates, the critic iterate $\xi_{g,H}^k$ satisfies
\begin{align}
    \mathbb{E}_{\theta_k}\!\left[\|\xi_{g,H}^k - \xi_{g,*}^k\|^2\right] &\leq \mathcal{O}\Bigg(\frac{\mathbb{E}_{\theta_k}\|\tilde{\xi}_{g, 0}^k - \xi_{g, *}^k\|^2}{T^2} + \frac{c_\gamma^4 \tau_{\mathrm{mix}} \log(H / \delta) R^2\log T}{\lambda^4 T_{\max}} + \frac{R^2 \sqrt{\log(H / \delta)} \log T \log T_{\max}}{\lambda m^{-1/2}} \nonumber \\
    &\quad  + \frac{R^2 T_{\max} \log (H / \delta) \log T}{H \lambda m} + \frac{c_\gamma^2 \log(H / \delta) \tau_{\mathrm{mix}} \log T_{\max} \log T}{\lambda^2 H}\Bigg)
\end{align}
\end{lemma}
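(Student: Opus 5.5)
The plan is to split the error through the linearized reference sequence $\tilde{\xi}_{g,h}^k$:
\[
\|\xi_{g,H}^k - \xi_{g,*}^k\|^2 \le 2\|\xi_{g,H}^k - \tilde{\xi}_{g,H}^k\|^2 + 2\|\tilde{\xi}_{g,H}^k - \xi_{g,*}^k\|^2 .
\]
The first term is handled directly by Lemma~\ref{lemma: linearized update bounds} at $h+1=H$. That already gives the three terms $\frac{R^2(c_\gamma^2+C_1^4)\tau_{\mathrm{mix}}\log T}{\lambda T_{\max}}$, the $m^{-1/2}$ term, and the $\frac{R^2T_{\max}\log(H/\delta)\log T}{H\lambda m}$ term. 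The work therefore lies in a linear stochastic-approximation analysis of $\tilde{\xi}_{g,h}^k$ around $\xi_{g,*}^k$.

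For the second term, write $e_h := \tilde{\xi}_{g,h}^k - \xi_{g,*}^k$. By the adapted Lemma 9 of \citet{ganesh2025orderoptimal}, $e_h \in \ker(\mathbf{A}_g(\theta_k))^\perp$ for all $h$. Lemma~\ref{lemma: psd a_g} then gives $e_h^\top \mathbf{A}_g(\theta_k) e_h \ge \frac{\lambda}{2}\|e_h\|^2$.

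Since $\xi_{g,*}^k \in \mathcal{S}_R$ (it is the projection target in the linearized class), non-expansiveness of $\Pi_{\mathcal{S}_R}$ gives
\[
\|e_{h+1}\|^2 \le \|e_h\|^2 - 2\gamma_\xi\langle e_h, \hat v^{\mathrm{MLMC}}_{g}\rangle + \gamma_\xi^2\|\hat v^{\mathrm{MLMC}}_g\|^2 .
\]
I then decompose $\hat v^{\mathrm{MLMC}}_g = \mathbf{A}_g(\theta_k)e_h + (\mathbf{A}^{\mathrm{MLMC}}_{g,kh}-\mathbf{A}_g)\tilde{\xi}_{g,h}^k - (\mathbf{b}^{\mathrm{MLMC}}_{g,kh}-\mathbf{b}_g) + (\mathbf{A}_g\xi_{g,*}^k - \mathbf{b}_g)$, which is justified as follows.
\begin{itemize}
\item The last piece is killed on $\ker^\perp$ because $\xi_{g,*}^k$ is a least-squares solution.
\item Conditioning on the inner-loop history $\mathbb{E}_{k,h}$, the cross terms involve only the MLMC biases. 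By Lemma~\ref{lemma: mlmc linearized critic bounds} these are $\mathcal{O}((c_\gamma^2 + C_1^2C_1'^2\log(H/\delta))\tau_{\mathrm{mix}}/T_{\max})$.
\item I bound the cross terms by Young's inequality, absorbing $\frac{\lambda\gamma_\xi}{4}\|e_h\|^2$, and pay $\gamma_\xi\lambda^{-1}\times(\text{bias}^2)\times R^2$. The $R^2$ comes from $\|\tilde{\xi}\| \lesssim R$.
\item The second-moment term $\gamma_\xi^2\|\hat v^{\mathrm{MLMC}}\|^2$ uses the MLMC variance bounds, of order $(c_\gamma^2+C_1^4)\tau_{\mathrm{mix}}\log T_{\max}$. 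I absorb the $\|e_h\|^2$ multiple for $H$ large, so that $\gamma_\xi(c_\gamma^2+C_1^4)\tau_{\mathrm{mix}}\log T_{\max}\lesssim\lambda$.
\end{itemize}

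This yields the recursion
\[
\mathbb{E}\|e_{h+1}\|^2 \le \Bigl(1-\tfrac{\lambda\gamma_\xi}{4}\Bigr)\mathbb{E}\|e_h\|^2 + \gamma_\xi^2 V + \gamma_\xi B/\lambda .
\]
Unrolling over $H$ steps with $\gamma_\xi = \frac{8\log T}{\lambda H}$ gives $(1-\frac{2\log T}{H})^H \le T^{-2}$ on the initial error. The steady-state contributions are $\gamma_\xi V/\lambda \sim \frac{c_\gamma^2\log(H/\delta)\tau_{\mathrm{mix}}\log T_{\max}\log T}{\lambda^2 H}$ and $B/\lambda^2$. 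The latter matches the $\frac{c_\gamma^4\tau_{\mathrm{mix}}\log(H/\delta)R^2\log T}{\lambda^4T_{\max}}$ term, after the extra $c_\gamma^2/\lambda^2$ factors coming from bounding $\|\mathbf{A}_g\|$ and the $c_\gamma\ge\lambda/2+C_1^2/(2\lambda)$ choice.

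The main obstacle is the cross term $\langle e_h, (\mathbf{A}^{\mathrm{MLMC}}-\mathbf{A})\tilde\xi - (\mathbf{b}^{\mathrm{MLMC}}-\mathbf{b})\rangle$. Its conditional mean is only the small MLMC bias because $e_h$ is $\mathcal{F}_{k,h}$-measurable and the new trajectory is generated afterward. However, the trajectory starts from the previous endpoint, not from stationarity, so I must rely on Lemma~\ref{lemma: mlmc_properties} being valid from any initial distribution. A second point to check is that the MLMC perturbations may leave $\ker(\mathbf{A}_g)^\perp$. I would handle this by applying $\Pi_\perp$ before using strong monotonicity, as in the proof of Lemma~\ref{lemma: linearized update bounds}, and treating the kernel component as contributing only through the bounded noise terms.
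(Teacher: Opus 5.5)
Your proposal is correct and follows essentially the same route as the paper. Both split the error by a factor of two through $\tilde{\xi}_{g,H}^k$, use Lemma~\ref{lemma: linearized update bounds} for the neural-versus-linearized gap, and run a linear stochastic-approximation recursion for $\tilde{\xi}_{g,h}^k-\xi_{g,*}^k$ built from Lemma~9 of \citet{ganesh2025orderoptimal}, Lemma~\ref{lemma: psd a_g}, Young's inequality on the MLMC bias cross term, and $\gamma_\xi=\frac{8\log T}{\lambda H}$ to produce the $T^{-2}$ factor.

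The differences are bookkeeping. You run the recursion with $\hat v_g^{\mathrm{MLMC}}$, which matches the definition of $\tilde{\xi}$; the paper writes $v_g^{\mathrm{MLMC}}$ and pays an extra $m^{-1/2}$ inner-product term. Where you bound $\|\tilde{\xi}_{g,h}^k\|\lesssim R$, the paper uses $\|\tilde{\xi}_{g,h}^k\|^2\le 2\|\tilde{\xi}_{g,h}^k-\xi_{g,*}^k\|^2+2\|\xi_{g,*}^k\|^2$ and absorbs the first piece into the contraction under $\Delta_{\mathbf{A}}^2\le\lambda^2/32$. The paper's choice is safer, because $\Pi_{\mathcal{S}_R}$ does not project the $\eta$-coordinate, so your $R$ bound does not cover that component.
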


\begin{proof}
    \begin{align*}
        &\mathbb{E}_{k, h}\left\|\tilde{\xi}_{g, h+1}^k - \xi_{g, *}^k\right\|^2 = \mathbb{E}_{k, h}\|\Pi_{\mathcal{S}_R}\left(\tilde{\xi}_{g, h}^k - \gamma_\xi v_g^{\mathrm{MLMC}}(\theta_k; \tilde{\xi}_{g, h}^k)\right) - \Pi_{\mathcal{S}_R}(\xi_{g, *}^k)\|^2 \\
        &\leq \mathbb{E}_{k, h}\|\tilde{\xi}_{g, h}^k  - \gamma_\xi v_g^{\mathrm{MLMC}}(\theta_k; \tilde{\xi}_{g, h}^k) - \xi_{g, *}^k\|^2 \\
        &= \mathbb{E}_{k, h}\|\tilde{\xi}_{g, h}^k - \xi_{g, *}^k\|^2 - 2\gamma_\xi \mathbb{E}_{k, h}\left\langle\tilde{\xi}_{g, h}^k - \xi_{g, *}^k, v_g^{\mathrm{MLMC}}(\theta_k; \tilde{\xi}_{g, h}^k)\right\rangle + \gamma_\xi^2 \mathbb{E}_{k, h}\|v_g^{\mathrm{MLMC}}(\theta_k; \tilde{\xi}_{g, h}^k)\|^2 \\
        &= \mathbb{E}_{k, h}\|\tilde{\xi}_{g, h}^k - \xi_{g, *}^k\|^2 - 2\gamma_\xi \mathbb{E}_{k, h}\left\langle\tilde{\xi}_{g, h}^k - \xi_{g, *}^k, \hat{v}_g^{\mathrm{MLMC}}(\theta_k; \tilde{\xi}_{g, h}^k)\right\rangle + \gamma_\xi^2 \mathbb{E}_{k, h}\|v_g^{\mathrm{MLMC}}(\theta_k; \tilde{\xi}_{g, h}^k)\|^2 \nonumber \\
        &\quad  - 2\gamma_\xi \mathbb{E}_{k, h}\left\langle\tilde{\xi}_{g, h}^k - \xi_{g, *}^k, v_g^{\mathrm{MLMC}}(\theta_k; \tilde{\xi}_{g, h}^k) - \hat{v}_g^{\mathrm{MLMC}}(\theta_k; \tilde{\xi}_{g, h}^k)\right\rangle \\
        &\leq \|\tilde{\xi}_{g, h}^k - \xi_{g, *}^k\|^2 - 2\gamma_\xi\left\langle\tilde{\xi}_{g, h}^k - \xi_{g, *}^k, \hat{v}_g^{\mathrm{MLMC}}(\theta_k; \tilde{\xi}_{g, h}^k)\right\rangle - 2\gamma_\xi\left\langle\tilde{\xi}_{g, h}^k - \xi_{g, *}^k, v_g^{\mathrm{MLMC}}(\theta_k; \tilde{\xi}_{g, h}^k) - \hat{v}_g^{\mathrm{MLMC}}(\theta_k; \tilde{\xi}_{g, h}^k)\right\rangle \nonumber \\
        &\quad + 2\gamma_\xi^2 \mathbb{E}_{k, h}\|v_g(\theta_k; \tilde{\xi}_{g, h}^k)\|^2 + 2\gamma_\xi^2 \mathbb{E}_{k, h}\left\|v_g^{\mathrm{MLMC}}(\theta_k; \tilde{\xi}_{g, h}^k) - v_g(\theta_k; \tilde{\xi}_{g, h}^k)\right\|^2\\
        &\leq \|\tilde{\xi}_{g, h}^k - \xi_{g, *}^k\|^2 - 2\gamma_\xi\left\langle\tilde{\xi}_{g, h}^k - \xi_{g, *}^k, \mathbf{A}_g(\theta_k) (\tilde{\xi}_{g, h}^k - \xi_{g, *}^k)\right\rangle + \mathcal{O}\left( \gamma_\xi^2 \left(c_\gamma^2 + C_1'^2 C_1^2\right) \log(H / \delta) \tau_{\mathrm{mix}} \log T_{\max}\right) \nonumber \\
        &\quad - 2\gamma_\xi\left\langle\tilde{\xi}_{g, h}^k - \xi_{g, *}^k, \hat{v}_g^{\mathrm{MLMC}}(\theta_k; \tilde{\xi}_{g, h}^k) - \mathbf{A}_g(\theta_k) (\tilde{\xi}_{g, h}^k - \xi_{g, *}^k)\right\rangle + \mathcal{O}\left(\gamma_\xi \log (T_\mathrm{max}) C_2 R^2 m^{-1/2} \sqrt{\log(H / \delta)}\right)\\
        &\stackrel{(a)}{\leq} \|\tilde{\xi}_{g, h}^k - \xi_{g, *}^k\|^2 - \gamma_\xi \lambda \|\tilde{\xi}_{g, h}^k - \xi_{g, *}^k\| - 2\gamma_\xi\left\langle\tilde{\xi}_{g, h}^k - \xi_{g, *}^k, v_g(\theta_k; \tilde{\xi}_{g, h}^k) - \mathbf{A}_g(\theta_k) (\tilde{\xi}_{g, h}^k - \xi_{g, *}^k)\right\rangle \nonumber \\
        &\quad + \mathcal{O}\left(\gamma_\xi \log (T_\mathrm{max}) C_2 R^2 m^{-1/2} \sqrt{\log(H / \delta)}\right) + \mathcal{O}\left( \gamma_\xi^2 \left(c_\gamma^2 + C_1'^2 C_1^2\right) \log(H / \delta) \tau_{\mathrm{mix}} \log T_{\max}\right)
    \end{align*}
    where we use Lemma~\ref{lemma: ntk critic bounds} to bound \(v_g(\theta_k, \xi_{g, h}^k)\) and \((a)\) follows from Lemma~\ref{lemma: psd a_g}. We take the conditional expectation \(\mathbb{E}_{k, h}\) 
    \begin{align}
        &\mathbb{E}_{k, h}\left\|\tilde{\xi}_{g, h+1}^k - \xi_{g, *}^k\right\|^2 \leq (1 - \gamma_\xi \lambda) \|\tilde{\xi}_{g, h}^k - \xi_{g, *}^k\|^2 - 2\gamma_\xi\left\langle\tilde{\xi}_{g, h}^k - \xi_{g, *}^k, \mathbb{E}_{k, h} \left[\hat{v}_g^{\mathrm{MLMC}}(\theta_k; \tilde{\xi}_{g, h}^k) - \mathbf{A}_g(\theta_k) (\tilde{\xi}_{g, h}^k - \xi_{g, *}^k)\right]\right\rangle \nonumber \\
        &\quad + \mathcal{O}\left(\gamma_\xi\log (T_\mathrm{max}) C_2 R^2 m^{-1/2} \sqrt{\log(H / \delta)}\right) + \mathcal{O}\left( \gamma_\xi^2 \left(c_\gamma^2 + C_1'^2 C_1^2\right) \log(H / \delta) \tau_{\mathrm{mix}} \log T_{\max}\right)
    \end{align}
    We bound the second term as 
    \begin{align}
        & - 2\gamma_\xi\left\langle\tilde{\xi}_{g, h}^k - \xi_{g, *}^k, \mathbb{E}_{k, h} \left[\hat{v}_g^{\mathrm{MLMC}}(\theta_k; \tilde{\xi}_{g, h}^k) - \mathbf{A}_g(\theta_k) (\tilde{\xi}_{g, h}^k - \xi_{g, *}^k)\right]\right\rangle \\
        &\leq \frac{\gamma_\xi \lambda}{2} \|\tilde{\xi}_{g, h}^k - \xi_{g, *}^k\|^2 + \frac{2 \gamma_\xi}{\lambda} \left\|\mathbb{E}_{k, h} \left[\hat{v}_g^{\mathrm{MLMC}}(\theta_k; \tilde{\xi}_{g, h}^k) - \mathbf{A}_g(\theta_k)\left(\tilde{\xi}_{g, h}^k - \xi_{g, *}^k\right)\right]\right\|^2 \\
        &\leq \frac{\gamma_\xi \lambda}{2} \|\tilde{\xi}_{g, h}^k - \xi_{g, *}^k\|^2 + \frac{2 \gamma_\xi}{\lambda} \left\|\left(\mathbb{E}_{k, h} \left[\mathbf{A}_{g, kh}^{\mathrm{MLMC}}\right] - \mathbf{A}_g(\theta_k)\right) \tilde{\xi}_{g, h}^k - \left(\mathbf{b}_g(\theta_k) - \mathbb{E}_{k, h}\left[\mathbf{b}_{g, kh}^\mathrm{MLMC}\right]\right)\right\|^2 \\
        &\leq \frac{\gamma_\xi \lambda}{2} \|\tilde{\xi}_{g, h}^k - \xi_{g, *}^k\|^2 + \frac{4 \gamma_\xi \Delta_{\mathbf{A}}^2 \|\tilde{\xi}_{g, h}^k\|^2 + 4 \gamma_\xi \Delta_\mathbf{b}^2}{\lambda} \\
        &\leq \frac{\gamma_\xi \lambda}{2} \|\tilde{\xi}_{g, h}^k - \xi_{g, *}^k\|^2 + \frac{8 \gamma_\xi \Delta^2_{\mathbf{A}} \left(\|\tilde{\xi}_{g, h}^k - \xi_{g, *}^k\| + 4\lambda^{-2} \Lambda_\mathbf{b}^2 \right) + 4 \gamma_\xi \Delta_b^2}{\lambda}
    \end{align}
    where \(\Delta_{\mathbf{A}}^2 = \left\|\mathbb{E}_{k, h}\left[\mathbf{A}_{g, kh}^{\mathrm{MLMC}}\right] - \mathbf{A}_g(\theta_k)\right\|^2\), \(\Delta_\mathbf{b}^2 = \left\|\mathbb{E}_{k, h}\left[\mathbf{b}_{g, kh}^{\mathrm{MLMC}}\right] - \mathbf{b}_g(\theta_k)\right\|^2\), and \(\Lambda_{\mathbf{b}}^2 = \|\mathbf{b}_{g, kh}^{\mathrm{MLMC}}\|^2\). The last inequality is due to \(\|\xi_{*}^k\|^2 = \|\mathbf{A}_g(\theta_k)^{-1}\mathbf{b}_g(\theta_k)\|^2 \leq \frac{4\Lambda_{\mathbf{b}}^2}{\lambda^2}\). We substitute this back to yield
    \begin{align}
        &\mathbb{E}_{k, h}\left\|\tilde{\xi}_{g, h+1}^k - \xi_{g, *}^k\right\|^2 \leq \left(1 - \frac{\gamma_\xi \lambda}{2} + \frac{8 \gamma_\xi \Delta_{\mathbf{A}}^2}{\lambda}\right) \|\tilde{\xi}_{g, h}^k - \xi_{g, *}^k\|^2 + \frac{4 \gamma_\xi}{\lambda} \left(8 \lambda^{-2} \Lambda_{\mathbf{b}}^2 \Delta_{\mathbf{A}}^2 + \Delta_{\mathbf{b}}^2\right) \\
        &\quad + \mathcal{O}\left(\gamma_\xi \log (T_\mathrm{max}) C_2 R^2 m^{-1/2} \sqrt{\log(H / \delta)}\right) + \mathcal{O}\left( \gamma_\xi^2 \left(c_\gamma^2 + C_1'^2 C_1^2\right) \log(H / \delta) \tau_{\mathrm{mix}} \log T_{\max}\right)
    \end{align}
    For \(\Delta_{\mathbf{A}}^2 \leq \frac{\lambda^2}{32}\), we have
    \begin{align}
        &\mathbb{E}_{k, h}\left\|\tilde{\xi}_{g, h+1}^k - \xi_{g, *}^k\right\|^2 \leq \left(1 - \frac{\gamma_\xi \lambda}{4}\right) \|\tilde{\xi}_{g, h}^k - \xi_{g, *}^k\|^2 + \frac{4 \gamma_\xi}{\lambda} \left(8 \lambda^{-2} \Lambda_{\mathbf{b}}^2 \Delta_{\mathbf{A}}^2 + \Delta_{\mathbf{b}}^2\right) \\
        &\quad + \mathcal{O}\left(\gamma_\xi \log (T_\mathrm{max}) C_2 R^2 m^{-1/2} \sqrt{\log(H / \delta)}\right) + \mathcal{O}\left( \gamma_\xi^2 \left(c_\gamma^2 + C_1'^2 C_1^2\right) \log(H / \delta) \tau_{\mathrm{mix}} \log T_{\max}\right)
    \end{align}
    Taking expectations on both sides with respect to \(\theta_k\) and substituting \(\gamma_\xi = \frac{8 \log T}{\lambda H}\) yields
    \begin{align*}
        &\mathbb{E}_{\theta_k}\left\|\tilde{\xi}_{g, H}^k - \xi_{g, *}^k\right\|^2 \leq \left(1 - \frac{\gamma_\xi \lambda}{4}\right)^H \mathbb{E}_{\theta_k}\|\tilde{\xi}_{g, 0}^k - \xi_{g, *}^k\|^2 + \frac{4}{\gamma_\xi \lambda} \Bigg(\frac{4 \gamma_\xi}{\lambda} \left(8 \lambda^{-2} \Lambda_{\mathbf{b}}^2 \Delta_{\mathbf{A}}^2 + \Delta_{\mathbf{b}}^2\right) \\
        &\quad + \mathcal{O}\left( \gamma_\xi\log (T_\mathrm{max}) C_2 R^2 m^{-1/2} \sqrt{\log(H / \delta)}\right) + \mathcal{O}\left( \gamma_\xi^2 \left(c_\gamma^2 + C_1'^2 C_1^2\right) \log(H / \delta) \tau_{\mathrm{mix}} \log T_{\max}\right) \Bigg) \\
        &\leq \exp\left(-\frac{H\gamma_\xi \lambda}{4}\right) \mathbb{E}_{\theta_k}\|\tilde{\xi}_{g, 0}^k - \xi_{g, *}^k\|^2 + \frac{16}{\lambda^2} \left(8 \lambda^{-2} \Lambda_{\mathbf{b}}^2 \Delta_{\mathbf{A}}^2 + \Delta_{\mathbf{b}}^2\right) \\
        &\quad + \mathcal{O}\left(\lambda^{-1}\log (T_\mathrm{max}) C_2 R^2 m^{-1/2} \sqrt{\log(H / \delta)}\right) + \mathcal{O}\left(\lambda^{-1} \gamma_\xi \left(c_\gamma^2 + C_1'^2 C_1^2\right) \log(H / \delta) \tau_{\mathrm{mix}} \log T_{\max}\right) \Bigg) \\
        &\leq \mathcal{O}\left(\frac{\mathbb{E}_{\theta_k}\|\tilde{\xi}_{g, 0}^k - \xi_{g, *}^k\|^2}{T^2}\right) + \mathcal{O}\left(\lambda^{-4} (c_\gamma^2 + C_1^2 C_1'^2) \log (H / \delta) \left(c_\gamma^2 + C_1^4\right) \tau_{\mathrm{mix}} T_{\max}^{-1}\right) \\
        &\quad + \mathcal{O}\left(\lambda^{-1}\log (T_\mathrm{max}) C_2 R^2 m^{-1/2} \sqrt{\log(H / \delta)}\right) + \mathcal{O}\left(\frac{\log T \left(c_\gamma^2 + C_1'^2 C_1^2\right) \log(H / \delta) \tau_{\mathrm{mix}} \log T_{\max}}{\lambda^2 H}\right)
    \end{align*}
    Combining this with Lemma~\ref{lemma: linearized update bounds} yields
    \begin{align}
        &\mathbb{E}_{\theta_k}\left[\|\xi_{g, H}^k - \xi_{g, *}^k\|^2\right] \leq 2\mathbb{E}_{\theta_k}\left[\|\tilde{\xi}_{g, H}^k - \xi_{g, *}^k\|^2\right] + 2\mathbb{E}_{\theta_k}\left[\|\tilde{\xi}_{g, H}^k - \xi_{g, H}^k\|^2\right] \\
        &\leq \mathcal{O}\Bigg(\frac{\mathbb{E}_{\theta_k}\|\tilde{\xi}_{g, 0}^k - \xi_{g, *}^k\|^2}{T^2} + \frac{c_\gamma^4 \tau_{\mathrm{mix}} \log(H / \delta) R^2\log T}{\lambda^4 T_{\max}} + \frac{R^2 \sqrt{\log(H / \delta)} \log T \log T_{\max}}{\lambda m^{-1/2}} + \frac{R^2 T_{\max} \log (H / \delta) \log T}{H \lambda m} \nonumber \\
        &\quad + \frac{c_\gamma^2 \log(H / \delta) \tau_{\mathrm{mix}} \log T_{\max} \log T}{\lambda^2 H}\Bigg)
    \end{align}
\end{proof}

%% file: Sections/Appendix/npg_analysis.tex
\section{NPG Analysis}
\label{appendix: npg analysis}
We now provide the analysis for the convergence of the Natural Policy Gradient (NPG) subroutine in Algorithm~\ref{alg:pdnac_nc}. 
\begin{lemma}
    \label{lemma: npg single sample estimate errors}
    Consider the single sample estimates for the Fisher information matrix and the policy gradient \(\hat{F}(\theta_k; z)\) and \(\hat{\nabla}_\theta J_g(\theta_k, \xi_{g}^k; z)\). If Assumptions~\ref{assump:score} and~\ref{assump:critic_approx} hold, then the following bounds on the squared bias and variance hold:
    \begin{align}
        \left \|\mathbb{E}_{\theta_k}\left[\hat{F}(\theta_k; z)\right] - F(\theta_k)\right\|^2 &\leq \delta_F^2 = 0 \\
        \left \|\mathbb{E}_{k}\left[\hat{F}(\theta_k; z)\right] - F(\theta_k)\right\|^2 &\leq \bar{\delta}_F^2 = 0 \\
        \mathbb{E}_{\theta_k} \left[\left\|\hat{F}(\theta_k; z) - F(\theta_k)\right\|^2\right] &\leq \sigma_F^2 = 2G_1^4 \\
        \mathbb{E}_k\left[\left\|\hat{F}(\theta_k; z) - F(\theta_k)\right\|^2\right] &\leq \bar{\sigma}_F^2 = 2G_1^4 \\
        \left \|\mathbb{E}_{\theta_k}\left[\hat{\nabla}_\theta J_g(\theta_k; z)\right] - \nabla_\theta J_g(\theta_k)\right\|^2 &\leq \delta_J^2 = \mathcal{O}\left(C_1^2 G_1^2 \|\xi_g^k - \xi_{g,*}^k\|^2 + G_1^2 C_4^2 \log(H / \delta) m^{-1} + G_1^2 \tau_{\mathrm{mix}}^2 \epsilon_{\mathrm{app}}\right)\\
        \left \|\mathbb{E}_{k}\left[\hat{\nabla}_\theta J_g(\theta_k; z)\right] - \nabla_\theta J_g(\theta_k)\right\|^2 &\leq \bar{\delta}_J^2 = \mathcal{O}\left(C_1^2 G_1^2 \|\mathbb{E}_k\left[\xi_g^k\right] - \xi_{g,*}^k\|^2 + G_1^2 C_4^2 \log(H / \delta) m^{-1} + G_1^2 \tau_{\mathrm{mix}}^2 \epsilon_{\mathrm{app}}\right)\\
        \mathbb{E}_{\theta_k} \left \|\left[\hat{\nabla}_\theta J_g(\theta_k; z)\right] - \nabla_\theta J_g(\theta_k)\right\|^2 &\leq \sigma_J^2 = \mathcal{O}\left(G_1^2 C_1'^2 \log(H / \delta)\right) \\
        \mathbb{E}_{k} \left \|\left[\hat{\nabla}_\theta J_g(\theta_k; z)\right] - \nabla_\theta J_g(\theta_k)\right\|^2 &\leq \bar{\sigma}_J^2 = \mathcal{O}\left(G_1^2 C_1'^2 \log(H / \delta)\right)
    \end{align}
\end{lemma}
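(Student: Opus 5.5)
The plan is to treat the Fisher terms and the policy-gradient terms separately, since the former involve no critic. For $\hat F(\theta_k;z)$, the definition of $F(\theta_k)$ as the expectation of the score outer product under $\nu^{\pi_{\theta_k}}$ gives $\mathbb{E}_{\theta_k}[\hat F(\theta_k;z)]=F(\theta_k)$ exactly, so $\delta_F^2=0$. Since $\hat F$ depends only on $\theta_k$ and not on $\xi_g^k$, the same identity holds under $\mathbb{E}_k$, so $\bar\delta_F^2=0$. For the variance, I use Assumption~\ref{assump:score}: $\|\hat F(\theta_k;z)\|\le G_1^2$, hence $\|F(\theta_k)\|\le G_1^2$. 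Then $\|\hat F-F\|^2\le 2\|\hat F\|^2+2\|F\|^2$, which gives a bound of order $G_1^4$. To get exactly $2G_1^4$, I would instead write $\mathbb{E}\|\hat F-F\|^2=\mathbb{E}\|\hat F\|^2-\|F\|^2\le G_1^4$ in Frobenius norm.

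For the policy-gradient bias, I decompose the TD advantage $\hat A_g(\xi_g^k;z)=g-\eta_g^k+Q_g(\phi(s',a');\zeta_g^k)-Q_g(\phi(s,a);\zeta_g^k)$ into four pieces:
\begin{enumerate}
\item the TD error evaluated at the linearized network $\hat Q(\cdot;\zeta_g^k)$ instead of $Q(\cdot;\zeta_g^k)$, with error at most $2C_4 m^{-1/2}\sqrt{\log(H/\delta)}$ by \eqref{eqn: Q function linearization error};
\item the TD error at $\hat Q(\cdot;\zeta^k_{g,*})$ and $\eta^k_{g,*}$ versus at $\xi_g^k$, which is linear in $\xi_g^k-\xi_{g,*}^k$ with coefficient vector $(-1,\psi'-\psi)$, so it is bounded by $O(C_1\|\xi_g^k-\xi_{g,*}^k\|)$ using \eqref{eqn: grad Q and Q bound};
\item the true TD error $g-J_g+Q_g^{\pi}(s',a')-Q_g^{\pi}(s,a)$, whose conditional mean given $(s,a)$ is $A_g^{\pi_{\theta_k}}(s,a)$ modulo the $V$ term;
\item the residual from replacing $Q_g^\pi$ by its projection $\Pi_{\mathcal{F}_{R,m}}Q_g^\pi$.
\end{enumerate}
Multiplying by the score, which is bounded by $G_1$, and using the policy gradient theorem \eqref{eq:policy_gradient}, the third piece reproduces $\nabla_\theta J_g(\theta_k)$ exactly; the $V$ baseline contributes zero because $\mathbb{E}_{a\sim\pi}\nabla\log\pi=0$. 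Applying Jensen and Cauchy--Schwarz to the remaining pieces gives $\delta_J^2$. The $\bar\delta_J^2$ version follows identically, because the dependence on $\xi_g^k$ is affine after linearization, so the expectation over $\xi_g^k$ passes inside and produces $\|\mathbb{E}_k[\xi_g^k]-\xi_{g,*}^k\|^2$.

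The main obstacle is the approximation piece: relating $\xi_{g,*}^k$, the TD fixed point in the linearized class, to $\Pi_{\mathcal{F}_{R,m}}Q_g^\pi$. The mean-squared distance between them needs to be $O(\tau_{\mathrm{mix}}^2\epsilon_{\mathrm{app}})$. I would handle this with the standard average-reward TD fixed-point argument: the projected Bellman operator is not a contraction, but its resolvent is bounded by $\sum_t\|(P^\pi)^t-d^\pi\|=O(\tau_{\mathrm{mix}})$ on mean-zero functions. This converts the projection error of $Q^\pi$ into a fixed-point error inflated by a factor $O(\tau_{\mathrm{mix}})$, and squaring gives the $\tau_{\mathrm{mix}}^2$ factor. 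I would first try to invoke the analogous bound from \citet{xu2025global} for linear critics, applied with features $\psi_g$.

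The variance bounds $\sigma_J^2$ and $\bar\sigma_J^2$ are routine. By \eqref{eqn: grad Q and Q bound}, $|\hat A_g|\le 1+|\eta|+2C_1'\sqrt{\log(H/\delta)}$, where $\eta$ is bounded via the projection. Hence $\|\hat\nabla_\theta J_g\|\le O(G_1C_1'\sqrt{\log(H/\delta)})$, and the same bound holds for $\nabla_\theta J_g$ by boundedness of $Q^\pi$, absorbing $\tau_{\mathrm{mix}}$ into constants. The bound then follows from $\|x-y\|^2\le2\|x\|^2+2\|y\|^2$ under either expectation.
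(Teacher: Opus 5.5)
\emph{Gap in piece (3).} Your Fisher bounds are fine; the Frobenius variance identity is cleaner than the paper's $\|\hat F-F\|^2\le 2\|\hat F\|^2$. The policy-gradient bias argument, however, fails at piece (3), and the failure is not cosmetic. By the Bellman equation \eqref{eq:bellman}, the true $Q$-TD error has conditional mean identically zero:
\begin{align*}
\mathbb{E}\bigl[g(s,a)-J_g(\theta_k)+Q_g^{\pi_{\theta_k}}(s',a')-Q_g^{\pi_{\theta_k}}(s,a)\,\big|\,s,a\bigr]
&= g(s,a)-J_g(\theta_k)+\mathbb{E}_{s'}\bigl[V_g^{\pi_{\theta_k}}(s')\bigr]-Q_g^{\pi_{\theta_k}}(s,a)\\
&=0 .
\end{align*}

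\begin{itemize}
\item The discrepancy from $A_g^{\pi_{\theta_k}}(s,a)$ is therefore $V_g^{\pi_{\theta_k}}(s)-Q_g^{\pi_{\theta_k}}(s,a)=-A_g^{\pi_{\theta_k}}(s,a)$. This depends on $a$, so the identity $\mathbb{E}_{a\sim\pi_{\theta_k}(\cdot|s)}[\nabla_\theta\log\pi_{\theta_k}(a|s)]=0$ cannot remove it; that identity only kills functions of $s$ alone.
\item Hence piece (3) contributes $0$, not $\nabla_\theta J_g(\theta_k)$. The residual $-\nabla_\theta J_g(\theta_k)$ is controlled by none of your pieces (1), (2), (4).
\item In fact, for the estimator \eqref{eq:neural_td} as defined, the bound $\delta_J^2$ is false. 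Take $\xi_g^k=\xi_{g,*}^k$, $\epsilon_{\mathrm{app}}=0$ and $m\to\infty$. The right-hand side vanishes, while $\mathbb{E}_{\theta_k}[\hat\nabla_\theta J_g(\theta_k,\xi_g^k;z)]\to 0$, so the bias tends to $\|\nabla_\theta J_g(\theta_k)\|^2$.
\item A one-state, two-action bandit makes this explicit. There $Q^{\pi}(a)=r(a)-J$, and the TD error equals $r(a')-J$ with $a'\sim\pi$ independent of $a$. Its product with $\nabla_\theta\log\pi(a)$ therefore has mean zero, whereas $\nabla_\theta J=\mathbb{E}[(r(a)-J)\nabla_\theta\log\pi(a)]\neq 0$.
\end{itemize}

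\emph{The paper's proof and the remedy.} The paper's proof has the same hole. In its $T_2$ step it writes $Q_g^{\pi_{\theta_k}}(s')$ and $Q_g^{\pi_{\theta_k}}(s)$, in effect treating the critic as a state-value function. For a state-value function, $g-J_g+V(s')-V(s)$ does have conditional mean $A(s,a)$. So the remedy lies in the estimator, not the proof. You could use $Q_g(\phi(s,a);\zeta_g^k)\nabla_\theta\log\pi_{\theta_k}(a|s)$, optionally minus a state-dependent baseline. Alternatively, use a state-value critic with the TD form. In either case your pieces (1), (2), (4) suffice. The additive-constant ambiguity of the average-reward TD fixed point is then harmless, because the score identity annihilates state-dependent offsets.

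\emph{Two smaller points in your sketch.}
\begin{itemize}
\item $\Pi_{\mathcal{S}_R}$ projects only the $\zeta$ block, so it gives no bound on $\eta_g^k$ in your variance estimate.
\item In piece (4), $\sum_t\|(P^{\pi})^t-d^{\pi}\|$ bounds $(I-P^{\pi})^{-1}$ on mean-zero functions. It does not bound the inverse of the projected operator $I-\Pi P^{\pi}$ on the feature span, since $(\Pi P^{\pi})^t\neq\Pi (P^{\pi})^t$. The fixed-point-versus-projection gap is instead controlled by a monotonicity constant on the span, such as the $\lambda$ of Lemma~\ref{lemma: psd a_g}. The paper simply cites \citet{ke2024improved} for this step.
\end{itemize}
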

\begin{proof}
    By the definition of the single sample estimate for the Fisher Information Matrix, we have
    \begin{align}
        \left\|\mathbb{E}_{\theta_k}\left[\hat{F}(\theta_k; z)\right] - F(\theta_k)\right\|^2 = 0
    \end{align}
    as the estimate is unbiased. We also have 
    \begin{align}
        \left\|\hat{F}(\theta_k; z) - F(\theta_k)\right\|^2 &\leq 2 \|\hat{F}(\theta_k; z)\|^2 \\
        &\leq 2 \|\nabla_{\theta_k} \log \pi_{\theta}(a | s)\|^4 \\
        &\leq 2 G_1^4
    \end{align}
    For the policy gradient estimate $\hat{\nabla}_\theta J_g(\theta_k, \xi^k_g; z)$, we have
    \begin{align}
        \|\hat{\nabla}_\theta J_g(\theta_k, \xi^k_g; z)\|^2 &= \hat{A}^{\pi_{\theta_k}}_g(\xi^k_g; z)^2 \|\nabla_\theta \log \pi_{\theta_k}(a|s)\|^2\nonumber\\
        &\leq G_1^2 \left(|g(s, a)| + |\eta_g^k| + \left|Q_g(\phi(s', a'); \zeta_g^k) - Q_g(\phi(s, a); \zeta_g^k)\right|\right)^2 \\
        &\leq \mathcal{O}\left(G_1^2 C_1'^2 \log(H / \delta)\right)
    \end{align}
    where the last inequality follows from the boundedness of the neural network output in the NTK regime (Lemma~\ref{lemma: ntk critic bounds}). Thus, the single-sample variance is bounded by a constant independent of $T_{\max}$. This results in the following bound
    \begin{align}
        \mathbb{E}_{\theta_k} \left[\left\|\hat{\nabla}_{\theta} J_g(\theta_k, \xi_{g}^k; z) - \nabla_{\theta} J_g(\theta_k)\right\|^2\right] \leq \mathcal{O}\left(G_1^2 C_1'^2 \log(H / \delta)\right)
    \end{align}

    We now analyze the bias of the estimator. By definition, we have $\xi_{g,*}^k = (\eta_{g,*}^k, \zeta_{g,*}^k)$ as the optimal parameters in the linearized function class $\mathcal{F}_{R,m}$. We decompose the expected error $\mathbb{E}_{\theta_k} [\hat{\nabla}_\theta J_g(\theta_k, \xi_g^k; z)] - \nabla_\theta J_g(\theta_k)$ into four terms $T_0, T_{0,\text{lin}}, T_1, T_2$:
    \begin{align}
        &\mathbb{E}_{\theta_k} \left[ \hat{\nabla}_\theta J_g(\theta_k, \xi_g^k; z) \right] - \nabla_\theta J_g(\theta_k) \nonumber \\
        &= \mathbb{E}_{\theta_k} \Bigg[ \Bigg\{ \underbrace{ (\eta_{g,*}^k - \eta_g^k) + \left( \hat{Q}_g(\phi(s', a') ; \zeta_g^k) - \hat{Q}_g(\phi(s', a') ; \zeta_{g,*}^k) \right) - \left( \hat{Q}_g(\phi(s, a); \zeta_g^k) - \hat{Q}_g(\phi(s, a); \zeta_{g,*}^k) \right) }_{T_0: \text{Optimization Error (Linearized)}} \nonumber \\
        &\quad + \underbrace{ \left( Q_g(\phi(s', a'); \zeta_g^k) - \hat{Q}_g(\phi(s', a') ; \zeta_g^k) \right) - \left( Q_g(\phi(s, a); \zeta_g^k) - \hat{Q}_g(\phi(s, a); \zeta_g^k) \right) }_{T_{0,\text{lin}}: \text{Linearization Residuals}} \nonumber \\
        &\quad + \underbrace{ Q_g^{\pi_{\theta_k}}(s', a') - \hat{Q}_g(\phi(s', a'); \zeta_{g,*}^k) - \left( Q_g^{\pi_{\theta_k}}(s, a) - \hat{Q}_g(\phi(s, a); \zeta_{g,*}^k) \right)}_{T_1: \text{Function Approximation Error}} \nonumber \\
        &\quad + \underbrace{ g(s,a) - \eta_{g, *}^k + Q_g^{\pi_{\theta_k}}(s', a') - Q_g^{\pi_{\theta_k}}(s, a) }_{T_2: \text{True Advantage / Bellman Error}} \Bigg\} \nabla_\theta \log \pi_{\theta_k}(a|s) \Bigg] - \nabla_\theta J_g(\theta_k)
    \end{align}
    Using the definition of the advantage function $A_g^{\pi}(s,a)$ and the Policy Gradient Theorem:
    \begin{align}
        &\mathbb{E}_{\theta_k} \left[ T_2 \nabla_\theta \log \pi_{\theta_k}(a|s) \right] - \nabla_\theta J_g(\theta_k) \nonumber 
        \\
        &= \mathbb{E}_{s,a \sim d^{\pi_{\theta_k}}} \left[ \mathbb{E}_{s'}\left[ g(s,a) - \eta_{g, *}^k + Q_g^{\pi_{\theta_k}}(s') - Q_g^{\pi_{\theta_k}}(s) \;\middle|\; s, a \right] \nabla_\theta \log \pi_{\theta_k}(a|s) \right] - \nabla_\theta J_g(\theta_k) \\
        &\stackrel{(a)}{=} \mathbb{E}_{s,a \sim d^{\pi_{\theta_k}}} \left[ A_g^{\pi_{\theta_k}}(s, a) \nabla_\theta \log \pi_{\theta_k}(a|s) \right] - \nabla_\theta J_g(\theta_k) \\
        &= 0
    \end{align}
    where \((a)\) uses the Bellman equation and that \(J_g(\theta_k) = \eta_{g, *}^k\) by definition. Using the linearity of $\hat{Q}_g$ with respect to $\zeta$ (defined via inner product with $\nabla_\zeta Q_g(\cdot; \zeta_{g, 0})$):
    \begin{align}
        T_0 &= (\eta_{g,*}^k - \eta_g^k) + \left( \hat{Q}_g(\phi(s', a'); \zeta_g^k) - \hat{Q}_g(\phi(s', a'); \zeta_{g,*}^k) \right) - \left( \hat{Q}_g(\phi(s, a); \zeta_g^k) - \hat{Q}_g(\phi(s, a); \zeta_{g,*}^k) \right) \nonumber \\
        &= (\eta_{g,*}^k - \eta_g^k) + \left\langle \nabla_\zeta Q_g(\phi(s', a'); \zeta_0) - \nabla_\zeta Q_g(\phi(s', a'); \zeta_0), \; \zeta_g^k - \zeta_{g,*}^k \right\rangle \\
        &\leq 2C_1 \|\xi_{g}^k - \xi_{g,*}^k\|
    \end{align}
    where the last inequality uses Lemma~\ref{lemma: ntk critic bounds}. Applying the Cauchy-Schwarz inequality and taking the squared expectation:
    \begin{align}
        \left\| \mathbb{E}_{\theta_k} [ T_0 \nabla_\theta \log \pi ] \right\|^2 
        &\leq G_1^2 \mathbb{E}_{\theta_k} \left[ |T_0|^2 \right] \leq \mathcal{O}\left(C_1^2 G_1^2 \|\xi_g^k - \xi_{g,*}^k\|^2 \right)
    \end{align}
    Using Lemma~\ref{lemma: ntk critic bounds}, which bounds the difference between the neural network and its linearization in the NTK regime by $\mathcal{O}(m^{-1/2})$:
    \begin{align}
        |T_{0,\text{lin}}| 
        &= \left| \left( Q_g(\phi(s', a'); \zeta_g^k) - \hat{Q}_g(\phi(s', a'); \zeta_g^k) \right) - \left( Q_g(\phi(s, a); \zeta_g^k) - \hat{Q}_g(\phi(s, a); \zeta_g^k) \right) \right| \nonumber \\
        &\leq \left| Q_g(\phi(s', a'); \zeta_g^k) - \hat{Q}_g(\phi(s', a'); \zeta_g^k) \right| + \left| Q_g(\phi(s, a); \zeta_g^k) - \hat{Q}_g(\phi(s, a); \zeta_g^k) \right| \nonumber \\
        &\leq \mathcal{O}\left(C_4 m^{-1/2} \sqrt{\log (H / \delta)}\right)
    \end{align}
    Thus, the squared error contribution is:
    \begin{align}
        \left\| \mathbb{E}_{\theta_k} [ T_{0,\text{lin}} \nabla_\theta \log \pi ] \right\|^2 
        \leq G_1^2 \mathbb{E}_{\theta_k} [ |T_{0,\text{lin}}|^2 ] 
        \leq \mathcal{O}\left( \frac{G_1^2 C_4^2\log(H / \delta)}{m} \right)
    \end{align}
    Using Assumption 3.4 (Neural Critic Approximation Error), which bounds the expected squared Bellman error of the optimal linearized critic $\xi_{g,*}^k$:
    \begin{align}
        \mathbb{E}_{\theta_k} [ T_1^2 ] 
        &= \mathbb{E}_{\theta_k} \left[ \left( \left( Q_g^{\pi}(s', a') - \hat{Q}_g(\phi(s', a'); \zeta_{g,*}^k) \right) - \left( Q_g^{\pi}(s,a) - \hat{Q}_g(\phi(s, a); \zeta_{g,*}^k) \right)\right)^2 \right] \nonumber \\
        &\leq \mathcal{O}\left(\tau_{\mathrm{mix}}^2 \epsilon_{\mathrm{app}}\right)
    \end{align}
    using the average reward analogue of A.3 in \citet{ke2024improved}. We have
    \begin{align}
        \left\| \mathbb{E}_{\theta_k} [ T_1 \nabla_\theta \log \pi ] \right\|^2 
        \leq G_1^2 \mathbb{E}_{\theta_k} [ T_1^2 ] 
        \leq \mathcal{O}\left( G_1^2 \tau_{\mathrm{mix}}^2\epsilon_{\text{app}} \right)
    \end{align}
    Combining these terms, we have
    \begin{align}
        \left\|\mathbb{E}_{\theta_k} \left[ \hat{\nabla}_\theta J_g(\theta_k, \xi_g^k; z) \right] - \nabla_\theta J_g(\theta_k)\right\|^2 &\leq \mathcal{O}\left(C_1^2 G_1^2 \|\xi_g^k - \xi_{g,*}^k\|^2 + G_1^2 C_4^2 \log(H / \delta) m^{-1} + G_1^2 \tau_{\mathrm{mix}}^2 \epsilon_{\mathrm{app}}\right)
    \end{align}
    Applying the same argument yields
    \begin{align}
        \left\|\mathbb{E}_k\mathbb{E}_{\theta_k} \left[ \hat{\nabla}_\theta J_g(\theta_k, \xi_g^k; z) \right] - \nabla_\theta J_g(\theta_k)\right\|^2 &\leq \mathcal{O}\left(C_1^2 G_1^2 \|\mathbb{E}\left[\xi_g^k\right] - \xi_{g,*}^k\|^2 + G_1^2 C_4^2 \log(H / \delta) m^{-1} + G_1^2 \tau_{\mathrm{mix}}^2 \epsilon_{\mathrm{app}}\right)
    \end{align}
\end{proof}

\begin{lemma}
    \label{lemma: npg mlmc estimate errors}
    We denote the MLMC estimators for the Fisher Information Matrix and the policy gradient as 
    \begin{align}
        F_{kh}^{\mathrm{MLMC}} &= F_{kh}^0 + 2^{Q_h^k} \left(F_{kh}^{Q_h^k} - F_{kh}^{Q_h^k-1}\right)\mathbb{I}\left(2^{Q_h^k} \leq T_{\max}\right) \\
        \nabla_{\theta} J_{g, kh}^{\mathrm{MLMC}} &= \nabla_\theta J_{g, kh}^0 + 2^{Q_h^k} \left(\nabla_\theta J_{g, kh}^{Q_h^k} - \nabla_\theta J_{g, kh}^{Q_h^k-1}\right)\mathbb{I}\left(2^{Q_h^k} \leq T_{\max}\right)
    \end{align}
    where \(Q_h^k \sim \mathrm{Geom}(1/2)\), \(F_{kh}^j = \frac{1}{2^j} \sum_{i=0}^{2^j-1} \hat{F}(\theta_k; z_{kh}^i)\), and \(\nabla_{\theta} J_{g, kh}^j = \frac{1}{2^j} \sum_{i=0}^{2^j-1} \hat{\nabla}_\theta J_g(\theta_k; \xi_{g}^k; z_{kh}^i)\). Then the following bounds hold:
    \begin{align}
        \left\|\mathbb{E}_{k, h} \left[F_{kh}^{\mathrm{MLMC}}\right] - F(\theta_k)\right\|^2 &\leq \mathcal{O}\left(G_1^4 \tau_{\mathrm{mix}} T_{\max}^{-1}\right) \\
        \mathbb{E}_{k, h} \left[\left\|F_{kh}^{\mathrm{MLMC}} - F(\theta_k) \right\|^2\right] &\leq \mathcal{O}\left(G_1^4 \tau_{\mathrm{mix}} \log T_{\max}\right) \\
        \left\|\mathbb{E}_{k, h} \left[\nabla_\theta J_{g, kh}^{\mathrm{MLMC}}\right] - \nabla_\theta J_g(\theta_k, \xi_g^k)\right\|^2 &\leq \mathcal{O}\Bigg(G_1^2 \log(H / \delta) \tau_{\mathrm{mix}} T_{\max}^{-1} + G_1^2 \|\xi_g^k - \xi_{g,*}^k\|^2 \nonumber \\
        &\quad + G_1^2 \log(H / \delta) m^{-1} + G_1^2 \tau_{\mathrm{mix}}^2 \epsilon_{\mathrm{app}}\Bigg) \\
        \mathbb{E}_{k, h} \left[\left\|\nabla_\theta J_{g, kh}^{\mathrm{MLMC}} - \nabla_\theta J_g(\theta_k, \xi_g^k) \right\|^2\right] &\leq \mathcal{O}\Bigg(G_1^2 \log(H / \delta) \tau_{\mathrm{mix}} \log T_{\max} + G_1^2 \|\xi_g^k - \xi_{g,*}^k\|^2 \nonumber \\
        &\quad + G_1^2 \log(H / \delta) m^{-1} + G_1^2 \tau_{\mathrm{mix}}^2 \epsilon_{\mathrm{app}}\Bigg) \\
        \left\|\mathbb{E}_{k} \left[\nabla_\theta J_{g, kh}^{\mathrm{MLMC}}\right] - \nabla_\theta J_g(\theta_k, \xi_g^k)\right\|^2 &\leq \mathcal{O}\Bigg(G_1^2 \log(H / \delta) \tau_{\mathrm{mix}} T_{\max}^{-1} + G_1^2 \|\mathbb{E}_k\left[\xi_g^k\right] - \xi_{g,*}^k\|^2 \nonumber \\
        &\quad + G_1^2 \log(H / \delta) m^{-1} + G_1^2 \tau_{\mathrm{mix}}^2 \epsilon_{\mathrm{app}}\Bigg)
    \end{align}
\end{lemma}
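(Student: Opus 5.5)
The plan is to reduce everything to Lemma~\ref{lemma: mlmc_properties}, applied separately to the Fisher estimator and to the policy-gradient estimator. Within inner iteration $h$ of epoch $k$, the critic $\xi_g^k$ is fixed, since it was produced by the preceding critic loop. So conditionally on $\mathcal{F}_{k,h}$ the samples $z_{kh}^i$ form a time-homogeneous ergodic Markov chain under $\pi_{\theta_k}$. Each single-sample quantity is therefore a fixed function of the chain state, and the lemma's hypotheses (bias at stationarity $\delta^2$, uniform deviation $\sigma^2$) can be read off from Lemma~\ref{lemma: npg single sample estimate errors}.

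For $F_{kh}^{\mathrm{MLMC}}$, take $\delta^2=\delta_F^2=0$. For the deviation, use the uniform (pathwise) bound $\|\hat F(\theta_k;z)-F(\theta_k)\|^2\le 2\|\hat F\|^2+2\|F\|^2\le 4G_1^4$, which follows from Assumption~\ref{assump:score}; the lemma needs this almost-sure bound rather than only its expectation. Plugging $\sigma^2=\mathcal{O}(G_1^4)$ into Lemma~\ref{lemma: mlmc_properties} gives the bias bound $\mathcal{O}(G_1^4\tau_{\mathrm{mix}}T_{\max}^{-1})$ and the second-moment bound $\mathcal{O}(G_1^4\tau_{\mathrm{mix}}\log T_{\max})$. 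If one takes $\log^2$ literally from the lemma, the second bound carries an extra logarithm, which $\mathcal{O}$ absorbs as in the critic section.

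For $\nabla_\theta J_{g,kh}^{\mathrm{MLMC}}$, the target is $\nabla_\theta J_g(\theta_k)$.
\begin{enumerate}
\item The stationary bias is $\delta^2=\delta_J^2$ from Lemma~\ref{lemma: npg single sample estimate errors}, namely $\mathcal{O}(G_1^2\|\xi_g^k-\xi_{g,*}^k\|^2+G_1^2\log(H/\delta)m^{-1}+G_1^2\tau_{\mathrm{mix}}^2\epsilon_{\mathrm{app}})$. Here $C_1,C_4$ are absorbed as constants.
\item The uniform deviation must be measured from the stationary mean $\mathbb{E}_{\theta_k}[\hat\nabla_\theta J_g(\theta_k,\xi_g^k;z)]$, not from $\nabla_\theta J_g$. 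Since $\|\hat\nabla_\theta J_g\|^2\le\mathcal{O}(G_1^2C_1'^2\log(H/\delta))$ pathwise on the high-probability event of Lemma~\ref{lemma: ntk critic bounds}, the same bound holds for the stationary mean. Hence $\sigma^2=\mathcal{O}(G_1^2\log(H/\delta))$.
\end{enumerate}
Lemma~\ref{lemma: mlmc_properties} then yields the third and fourth displays directly, as the sum of a $\sigma^2\tau_{\mathrm{mix}}T_{\max}^{-1}$ (respectively $\sigma^2\tau_{\mathrm{mix}}\log T_{\max}$) term and $\delta^2$.

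For the last display, conditioning only on $\mathcal{F}_k$ (before the critic loop), I would use the tower property. First write $\mathbb{E}_k[\nabla_\theta J^{\mathrm{MLMC}}_{g,kh}]=\mathbb{E}_k[\mathbb{E}_{k,h}[\cdot]]$. Then, by Lemma~\ref{lemma: mlmc_properties}, the inner expectation equals $\mathbb{E}_{k,h}$ of the $\lfloor\log_2 T_{\max}\rfloor$-level average. Split it as (stationary mean of $\hat\nabla_\theta J_g$ given $\xi_g^k$) plus a Markov-bias term of size $\mathcal{O}(\sigma\sqrt{\tau_{\mathrm{mix}}/T_{\max}})$.

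The key observation is that $\hat A_g$ is affine in $\eta_g^k$ and in the linearization $\hat Q_g(\cdot;\zeta_g^k)$. Consequently, after the decomposition $T_0+T_{0,\mathrm{lin}}+T_1+T_2$ of Lemma~\ref{lemma: npg single sample estimate errors}, the $T_0$ term is linear in $\xi_g^k-\xi_{g,*}^k$. Its $\mathbb{E}_k$-expectation is therefore controlled by $\|\mathbb{E}_k[\xi_g^k]-\xi_{g,*}^k\|$, whereas $T_{0,\mathrm{lin}}$ and $T_1$ are bounded uniformly. Applying Jensen to the Markov-bias term and $(a+b)^2\le 2a^2+2b^2$ then gives $\bar\delta_J^2$ plus the $T_{\max}^{-1}$ term.

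I expect this last step to be the main obstacle. The difficulty is that the NPG samples in iteration $h$ and the critic $\xi_g^k$ are not independent given $\mathcal{F}_k$, because the chain's starting state $s_0$ is inherited from the critic loop. The Markov-bias term must therefore be bounded uniformly over the initial state, which Lemma~\ref{lemma: mlmc_properties} provides since it holds for any initial distribution. That uniformity is what lets the expectation pass through so that only the linear $T_0$ part feels $\mathbb{E}_k[\xi_g^k]$. A secondary subtlety is that the bounds hold only on the NTK event of probability $1-\delta-2L\exp(-Cm)$; I would state everything on that event, as the preceding lemmas do.
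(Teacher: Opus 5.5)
Your proposal is correct and follows the paper's proof, which plugs the single-sample bounds of Lemma~\ref{lemma: npg single sample estimate errors} into Lemma~\ref{lemma: mlmc_properties}: $\delta_F^2=0$ and $\sigma_F^2=\mathcal{O}(G_1^4)$ for the Fisher estimator, $\delta_J^2,\sigma_J^2$ for the gradient estimator, and $\bar\delta_J^2,\bar\sigma_J^2$ for the $\mathbb{E}_k$ display. Your extra steps make explicit what the paper compresses into ``applying the same argument'': taking $\sigma^2$ as a pathwise deviation from the stationary mean, and getting the $\|\mathbb{E}_k[\xi_g^k]-\xi_{g,*}^k\|$ term from the tower property, linearity of $T_0$ in $\xi_g^k-\xi_{g,*}^k$, and uniformity of the Markov bias over the initial state. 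One small quibble: an extra $\log T_{\max}$ is absorbed by $\tilde{\mathcal{O}}$, not $\mathcal{O}$, and the paper simply uses the single-$\log T_{\max}$ form of the MLMC variance bound.
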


\begin{proof}
    Using Lemma~\ref{lemma: npg single sample estimate errors} with Lemma~\ref{lemma: mlmc_properties}, we have
    \begin{align}
        \left\|\mathbb{E}_{k, h} \left[F_{kh}^{\mathrm{MLMC}}\right] - F(\theta_k)\right\|^2 &\leq \mathcal{O}\left(\sigma_F^2 \tau_{\mathrm{mix}} T_{\max}^{-1} + \delta_F^2\right) = \mathcal{O}\left(G_1^4 \tau_{\mathrm{mix}} T_{\max}^{-1}\right) \\
        \mathbb{E}_{k, h} \left[\left\|F_{kh}^{\mathrm{MLMC}} - F(\theta_k) \right\|^2\right] &\leq\mathcal{O}\left(\sigma_F^2 \tau_{\mathrm{mix}} \log T_{\max} + \delta_F^2\right) = \mathcal{O}\left(G_1^4 \tau_{\mathrm{mix}} \log T_{\max}\right)
    \end{align}
    Similarly, we have
    \begin{align}
        \left\|\mathbb{E}_{k, h} \left[\nabla_\theta J_{g, kh}^{\mathrm{MLMC}}\right] - \nabla_\theta J_g(\theta_k, \xi_g^k)\right\|^2 &\leq \mathcal{O}\left(\sigma_J^2 \tau_{\mathrm{mix}} T_{\max}^{-1} +\delta_J^2 \right) \\ &= \mathcal{O}\Bigg(G_1^2 \log(H / \delta) \tau_{\mathrm{mix}} T_{\max}^{-1} + G_1^2 \|\xi_g^k - \xi_{g,*}^k\|^2 \nonumber \\
        &\quad + G_1^2 \log(H / \delta) m^{-1} + G_1^2 \tau_{\mathrm{mix}}^2 \epsilon_{\mathrm{app}}\Bigg) \\
        \mathbb{E}_{k, h} \left[\left\|\nabla_\theta J_{g, kh}^{\mathrm{MLMC}} - \nabla_\theta J_g(\theta_k, \xi_g^k) \right\|^2\right] &\leq \mathcal{O}\left(\sigma_J^2 \tau_{\mathrm{mix}} \log T_{\max} +\delta_J^2 \right) \\ &=  \mathcal{O}\Bigg(G_1^2 \log(H / \delta) \tau_{\mathrm{mix}} \log T_{\max} + G_1^2 \|\xi_g^k - \xi_{g,*}^k\|^2 \nonumber \\
        &\quad + G_1^2 \log(H / \delta) m^{-1} + G_1^2 \tau_{\mathrm{mix}}^2 \epsilon_{\mathrm{app}}\Bigg) \\
        \left\|\mathbb{E}_{k} \left[\nabla_\theta J_{g, kh}^{\mathrm{MLMC}}\right] - \nabla_\theta J_g(\theta_k, \xi_g^k)\right\|^2 &\leq \mathcal{O}\left(\bar{\sigma}_J^2 \tau_{\mathrm{mix}} T_{\max}^{-1} + \bar{\delta}_J^2 \right) \\ &=  \mathcal{O}\Bigg(G_1^2 \log(H / \delta) \tau_{\mathrm{mix}} T_{\max}^{-1} + G_1^2 \|\mathbb{E}_k\left[\xi_g^k\right] - \xi_{g,*}^k\|^2 \nonumber \\
        &\quad + G_1^2 \log(H / \delta) m^{-1} + G_1^2 \tau_{\mathrm{mix}}^2 \epsilon_{\mathrm{app}}\Bigg)
    \end{align}
\end{proof}

\begin{theorem}[NPG Estimation Error Bounds]
\label{thm:npg_estimation_error_bounds}
Suppose the assumptions in Section~\ref{sec: assumptions} hold. Let the NPG step size be set as $\gamma_\omega = \frac{2 \log T}{\mu H}$ and the initialization be $\omega_{g,0}^k = 0$. Let $\omega_{g,*}^k = F(\theta_k)^{-1} \nabla_\theta J_g(\theta_k)$ denote the true Natural Policy Gradient direction. Then, after $H$ inner iterations, the mean-squared error of the estimator $\omega_g^k$ satisfies:
\begin{align}
    \mathbb{E}_k \left[\left\|\omega_g^k - \omega_{g, *}^k\right\|^2\right] &\leq \frac{G_1^2 \tau_{\mathrm{mix}}^2}{\mu^2 T^2} + \tilde{\mathcal{O}}\left( \frac{G_1^6 \tau_{\mathrm{mix}}^3}{\mu^4} \left(\frac{1}{H} + \frac{1}{T_{\max}}\right) + \frac{G_1^2}{\mu^2 \sqrt{m}} + \frac{G_1^2 \tau_{\mathrm{mix}}^2}{\mu^2} \epsilon_{\mathrm{app}} \right),
\end{align}
and the squared bias of the estimator satisfies:
\begin{align}
    \left\| \mathbb{E}_k [\omega_g^k] - \omega_{g,*}^k \right\|^2 &\leq \frac{G_1^2 \tau_{\mathrm{mix}}^2}{\mu^2 T^2} + \tilde{\mathcal{O}}\left( \frac{G_1^6 \tau_{\mathrm{mix}}^3}{\mu^4 T_{\max}} + \frac{G_1^2}{\mu^2 \sqrt{m}} + \frac{G_1^2 \tau_{\mathrm{mix}}^2}{\mu^2} \epsilon_{\mathrm{app}} \right),
\end{align}
where $\tilde{\mathcal{O}}$ hides polylogarithmic factors in $H, T, T_{\max}$, and $1/\delta$.
\end{theorem}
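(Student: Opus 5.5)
We treat the NPG inner loop as stochastic gradient descent on the $\mu$-strongly convex quadratic $f_g(\theta_k,\cdot)$ with biased, Markovian MLMC gradients $u_g = F_{kh}^{\mathrm{MLMC}}\omega_{g,h}^k - \nabla_\theta J_{g,kh}^{\mathrm{MLMC}}$. We then follow the template of the unconstrained average-reward analysis with MLMC (as in \citet{xu2025global}). The argument runs in parallel for both claims: a second-moment recursion gives the mean-squared error, and a first-moment recursion on $\mathbb{E}_k[\omega_{g,h}^k]$ gives the squared bias.

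\textbf{Mean-squared error recursion.} Write $e_h = \omega_{g,h}^k - \omega_{g,*}^k$. Using $F(\theta_k)\omega_{g,*}^k = \nabla_\theta J_g(\theta_k)$, decompose
\begin{align}
u_g - \nabla_\omega f_g(\theta_k,\omega_{g,h}^k) = \bigl(F_{kh}^{\mathrm{MLMC}} - F(\theta_k)\bigr)\omega_{g,h}^k - \bigl(\nabla_\theta J_{g,kh}^{\mathrm{MLMC}} - \nabla_\theta J_g(\theta_k)\bigr).
\end{align}
Expand $\mathbb{E}_{k,h}\|e_{h+1}\|^2$. The strong-convexity term gives $-2\gamma_\omega\mu\|e_h\|^2$. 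The cross term with the conditional bias is handled by Young's inequality, $\le \tfrac{\gamma_\omega\mu}{2}\|e_h\|^2 + \tfrac{2\gamma_\omega}{\mu}\|\text{bias}\|^2$. The bias is bounded by Lemma~\ref{lemma: npg mlmc estimate errors}: the Fisher part contributes $G_1^4\tau_{\mathrm{mix}}T_{\max}^{-1}\|\omega_{g,h}^k\|^2$, and the gradient part contributes $G_1^2(\tau_{\mathrm{mix}}T_{\max}^{-1} + \|\xi_g^k-\xi_{g,*}^k\|^2 + m^{-1} + \tau_{\mathrm{mix}}^2\epsilon_{\mathrm{app}})$ up to logs. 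The second-moment term $\gamma_\omega^2\mathbb{E}\|u_g\|^2$ is bounded by the MLMC variance bounds, giving $\gamma_\omega^2 G_1^4\tau_{\mathrm{mix}}\log T_{\max}\|e_h\|^2$ plus $\gamma_\omega^2 G_1^2\tau_{\mathrm{mix}}\log T_{\max}(1+\|\omega_{g,*}^k\|^2)$.

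Two auxiliary bounds keep these terms under control. First, we bound $\|\omega_{g,*}^k\|\le \mu^{-1}\|\nabla_\theta J_g\|\le G_1\tau_{\mathrm{mix}}/\mu$, using $|A_g|=O(\tau_{\mathrm{mix}})$. Second, we split $\|\omega_{g,h}^k\|^2\le 2\|e_h\|^2+2\|\omega_{g,*}^k\|^2$. For $H$ (hence $T_{\max}$) large and $\gamma_\omega$ small relative to $\mu/(G_1^4\tau_{\mathrm{mix}}\log T_{\max})$, these give a contraction $(1-\gamma_\omega\mu/2)$.

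Unrolling with $\gamma_\omega = 2\log T/(\mu H)$ then gives three contributions:
\begin{itemize}
\item the initial term, $e^{-\log T}\|\omega_{g,*}^k\|^2 \lesssim G_1^2\tau_{\mathrm{mix}}^2/(\mu^2T^2)$ up to constants (choosing the constant in $\gamma_\omega$ as stated);
\item the variance term, of order $\gamma_\omega G_1^6\tau_{\mathrm{mix}}^3/\mu^3 \sim G_1^6\tau_{\mathrm{mix}}^3/(\mu^4 H)$;
\item the bias term, of order $\mu^{-2}\|\text{bias}\|^2$.
\end{itemize}
Finally we take $\mathbb{E}_k$ over the critic and substitute Lemma~\ref{lemma: critic_second_order} for $\mathbb{E}\|\xi_g^k-\xi_{g,*}^k\|^2$. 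Its contribution is dominated by $1/H$, $1/T_{\max}$ and $m^{-1/2}$ terms, and the $m^{-1/2}$ term is what produces the $G_1^2/(\mu^2\sqrt m)$ in the statement.

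\textbf{Squared bias.} Here we study $\bar e_h = \mathbb{E}_k[\omega_{g,h}^k]-\omega_{g,*}^k$. Taking expectations of the update gives
\begin{align}
\bar e_{h+1} = (I-\gamma_\omega F(\theta_k))\bar e_h - \gamma_\omega\,\mathbb{E}_k[\text{bias terms}],
\end{align}
where the product $F_{kh}^{\mathrm{MLMC}}\omega_{g,h}^k$ decouples because fresh samples are independent of $\omega_{g,h}^k$ given the past; the Markovian dependence is already absorbed in the conditional MLMC bias. Its norm contracts by $(1-\gamma_\omega\mu)$. The driving terms are the Fisher bias $G_1^4\tau_{\mathrm{mix}}T_{\max}^{-1}\mathbb{E}\|\omega_{g,h}^k\|^2$, which we control using the MSE result already proved, and the gradient bias from the last bound of Lemma~\ref{lemma: npg mlmc estimate errors}. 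The latter involves $\|\mathbb{E}_k\xi_g^k-\xi_{g,*}^k\|^2$, and we bound this crudely by Jensen together with Lemma~\ref{lemma: critic_second_order}. No $\gamma_\omega^2$ variance term appears, so no $1/H$ term remains except through the critic; this leaves $T_{\max}^{-1}$, $m^{-1/2}$ and $\epsilon_{\mathrm{app}}$ terms together with the $T^{-2}$ initial term.

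\textbf{Main obstacle.} The delicate step is the multiplicative noise from $F_{kh}^{\mathrm{MLMC}}$ acting on $\omega_{g,h}^k$. It forces the requirement that $\gamma_\omega G_1^4\tau_{\mathrm{mix}}\log T_{\max}\lesssim\mu$, i.e. $H\gtrsim G_1^4\tau_{\mathrm{mix}}\log T/\mu^2$, and it is the source of the $G_1^6\tau_{\mathrm{mix}}^3/\mu^4$ constant. A second point to watch is that the critic error enters the bias through $\|\mathbb{E}_k\xi_g^k-\xi_{g,*}^k\|$, where we fall back on the MSE bound because the projection onto $\mathcal{S}_R$ makes a sharper bias bound unavailable. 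We expect this to be where the $m^{-1/2}$ (rather than $m^{-1}$) dependence originates.
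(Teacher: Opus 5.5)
Your route is the paper's route. The paper checks the hypotheses of Theorem~B.1 of \citet{xu2025global} using Lemma~\ref{lemma: npg mlmc estimate errors} and standard bounds on $\|F(\theta)\|$ and $\|\nabla_\theta J_g(\theta)\|$. That theorem is exactly the pair of second-moment and first-moment recursions you write out by hand. The paper then substitutes Lemma~\ref{lemma: critic_second_order}, using Jensen for $\|\mathbb{E}_k[\xi_g^k]-\xi_{g,*}^k\|^2$.

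The mean-squared-error half works, with two bookkeeping corrections.
\begin{itemize}
\item \textbf{Initial term.} A contraction factor $(1-\gamma_\omega\mu/2)$ with $\gamma_\omega\mu H=2\log T$ gives $e^{-\log T}\|\omega_{g,*}^k\|^2=\mathcal{O}\bigl(G_1^2\tau_{\mathrm{mix}}^2/(\mu^2T)\bigr)$, not $T^{-2}$. You need the factor $(1-\gamma_\omega\mu)$. Your own accounting supports it: $-2\gamma_\omega\mu$ from strong convexity, $+\gamma_\omega\mu/2$ from Young, and a further $\gamma_\omega\mu/2$ for the multiplicative Fisher noise and Fisher bias once $H$ and $T_{\max}$ are large.
\item \textbf{Width condition.} Lemma~\ref{lemma: critic_second_order} also contains $\frac{R^2T_{\max}\log(H/\delta)\log T}{H\lambda m}$. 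This is absorbed into the $1/H$ or $m^{-1/2}$ terms only when $m\gtrsim\min\{T_{\max},(T_{\max}/H)^2\}$, e.g.\ $m\gtrsim T$ in the setting of Theorem~\ref{thm: cmdp convergence}. So your claim that the critic's contribution is dominated by $1/H$, $1/T_{\max}$ and $m^{-1/2}$ needs that assumption; the paper drops this term silently too.
\end{itemize}

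The genuine gap is the squared-bias claim. Once you write $\|\mathbb{E}_k[\xi_g^k]-\xi_{g,*}^k\|^2\le\mathbb{E}_k\|\xi_g^k-\xi_{g,*}^k\|^2$ and invoke Lemma~\ref{lemma: critic_second_order}, you import its variance term $\frac{c_\gamma^2\log(H/\delta)\tau_{\mathrm{mix}}\log T_{\max}\log T}{\lambda^2H}$. After the $G_1^2\mu^{-2}$ prefactor, this leaves $\tilde{\mathcal{O}}\bigl(\frac{G_1^2c_\gamma^2\tau_{\mathrm{mix}}}{\mu^2\lambda^2H}\bigr)$ in $\|\mathbb{E}_k[\omega_g^k]-\omega_{g,*}^k\|^2$. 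Nothing cancels it. Nothing in the stated bound dominates it either, since it does not depend on $T_{\max}$, $m$ or $\epsilon_{\mathrm{app}}$. Your sentence ``no $1/H$ term remains except through the critic; this leaves $T_{\max}^{-1}$, $m^{-1/2}$ and $\epsilon_{\mathrm{app}}$ terms'' simply discards it.

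The paper's proof has the same defect. Its final bias display contains exactly $\frac{G_1^2c_\gamma^2\log(H/\delta)}{\mu^2\lambda^2H}$, and the main-text Theorem~\ref{thm:npg_estimation_error} lists $\log(H/\delta)/H$ in the bias. So this argument proves the stated bias bound only with an extra $\tilde{\mathcal{O}}(1/H)$ term.

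Relatedly, the Jensen fallback costs you this $1/H$ term, not the $m^{-1/2}$ scaling. The $m^{-1/2}$ is already in the critic MSE bound and enters your MSE half regardless. Removing the $1/H$ term would require a genuine bias bound for the projected neural critic, and the projection onto $\mathcal{S}_R$ obstructs that, as you note yourself. The downstream Theorem~\ref{thm: cmdp convergence} is unaffected, since it already carries a $\sqrt{\log(H/\delta)/H}$ term.
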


\begin{proof}
    We note that by Lemma~\ref{lemma: npg mlmc estimate errors}, that the conditions of Equation 36 in Theorem B.1 in \citet{xu2025global} are satisfied. Additionally, for any policy parameter \(\theta\), we have
    \begin{align}
        \mu \leq \|F(\theta)\| \leq \mathcal{O}\left(G_1\right), \quad \|\nabla_\theta J(\theta)\| \leq \mathcal{O}\left(G_1 \tau_{\mathrm{mix}}\right)
    \end{align}
    from Theorem 4.7 in \citet{xu2025global}. Thus, all conditions of Theorem B.1 in \citet{xu2025global} are satisfied, and we have
    \begin{align*}
        &\mathbb{E}_k \left[\left\|\omega_g^k - \omega_{g, *}^k\right\|^2\right] \leq \exp\left(-H \gamma_\omega \mu\right) \left\|\omega_0 - \omega_{g, *}^k\right\|^2 + \mathcal{O}\!\left(\gamma_\omega R_0 + R_1\right) \\
        &\left\| \mathbb{E}_k [\omega_g^k] - \omega_{g,k}^* \right\|^2 \nonumber\\
        &\leq \exp (-H \gamma_\omega \mu) \left\| \omega_0 - \omega_{g,k}^* \right\|^2 + \frac{\mu \gamma_\omega + 1}{\mu^2} \left[ \mathcal{O}\!\left(\frac{G_1^4 \tau_{\text{mix}}}{T_{\max}}\right) \!\left\{ \left\| \omega_0 - \omega_{g,k}^* \right\|^2 + \mathcal{O} (\gamma_\omega R_0 + R_1) \right\} \!+\! \bar{R}_1 \right]
    \end{align*}
    where
    \begin{align}
        R_0 &= \tilde{\mathcal{O}}\left(\mu^{-3} G_1^6 \tau_{\mathrm{mix}}^3 + \mu^{-1} G_1^2 \log(H / \delta) (\tau_{\mathrm{mix}} + m^{-1}) +  \mu^{-1} G_1^2 \mathbb{E}_k\|\xi_g^k - \xi_{g, *}^k\|^2 + \mu^{-1} G_1^2 \tau_{\mathrm{mix}}^2 \epsilon_{\mathrm{app}}\right) \\
        R_1 &= \mathcal{O}\left(\mu^{-4}G_1^6 \tau_{\mathrm{mix}}^3 T_{\max}^{-1} + \mu^{-2}G_1^2\log(H / \delta) (\tau_{\mathrm{mix}} T_{\max}^{-1} + m^{-1}) + \mu^{-2} G_1^2 \mathbb{E}_k \|\xi_g^k - \xi_{g, *}^k\|^2 + \mu^{-2}G_1^2 \tau_{\mathrm{mix}}^2 \epsilon_{\mathrm{app}}\right) \\
        \bar{R}_1 &= \mathcal{O}\left(\mu^{-2}G_1^6 \tau_{\mathrm{mix}}^3 T_{\max}^{-1} + G_1^2 \log(H / \delta) (\tau_{\mathrm{mix}} T_{\max}^{-1} + m^{-1}) + G_1^2 \|\mathbb{E}_k\left[\xi_g^k\right] - \xi_{g, *}^k\|^2 + G_1^2 \tau_{\mathrm{mix}}^2 \epsilon_{\mathrm{app}}\right)
    \end{align}
    and we set \(\gamma_\omega = \frac{2 \log T}{\mu H}\). Using \(\omega_0 = 0\) and \(\|\omega_{g, *}^k\|^2 = \|F(\theta_k)^{-1} \nabla_\theta J(\theta_k)\|^2 \leq \mathcal{O}\left(\mu^{-1} G_1 \tau_{\mathrm{mix}}\right)\), this yields
    \begin{align*}
        \mathbb{E}_k \left[\left\|\omega_g^k - \omega_{g, *}^k\right\|^2\right] &\leq \frac{G_1^2 \tau_{\mathrm{mix}}^2}{\mu^2 T^2} + \tilde{\mathcal{O}}\Bigg( \left(H^{-1} + T_{\max}^{-1}\right) \left(\mu^{-4} G_1^6 \tau_{\mathrm{mix}}^3 + \mu^{-2} G_1^2 \log(H / \delta) \tau_{\mathrm{mix}} \right)\\
        &\quad\quad + \mu^{-2}G_1^2 m^{-1/2} + \mu^{-2} G_1^2 \mathbb{E}_k \|\xi_g^k - \xi_{g, *}^k\|^2 + \mu^{-2}G_1^2 \tau_{\mathrm{mix}}^2 \epsilon_{\mathrm{app}} \Bigg) \\
        \left\| \mathbb{E}_k [\omega_g^k] - \omega_{g,k}^* \right\|^2 &\leq \left(\frac{G_1^2 \tau_{\mathrm{mix}}^2}{\mu^2 T^2} + \frac{G_1^6 \tau_{\text{mix}}^3}{\mu^4 T_{\max}}\right) +  \tilde{\mathcal{O}}\bigg(\mu^{-4}G_1^6 \tau_{\mathrm{mix}}^3 T_{\max}^{-1} + \mu^{-2} G_1^2 \log(H / \delta) \tau_{\mathrm{mix}} T_{\max}^{-1} + \mu^{-2}G_1 m^{-1/2}\\
        &\quad\quad + \mu^{-2}G_1^2 \|\mathbb{E}_k\left[\xi_g^k\right] - \xi_{g, *}^k\|^2 + \mu^{-2}G_1^2 \tau_{\mathrm{mix}}^2 \epsilon_{\mathrm{app}} + \frac{G_1^6 \tau_{\mathrm{mix}}^2}{\mu^4 T_{\max}} \mathbb{E}_k \|\xi_g^k - \xi_{g, *}^k\|^2\bigg) \\
        &\leq \left(\frac{G_1^2 \tau_{\mathrm{mix}}^2}{\mu^2 T^2} + \frac{G_1^6 \tau_{\text{mix}}^3}{\mu^4 T_{\max}}\right) +  \tilde{\mathcal{O}}\bigg(\mu^{-4}G_1^6 \tau_{\mathrm{mix}}^3 T_{\max}^{-1} + \mu^{-2} G_1^2 \log(H / \delta) \tau_{\mathrm{mix}} T_{\max}^{-1} + \mu^{-2}G_1 m^{-1/2}\\
        &\quad\quad + \mu^{-2}G_1^2 \mathbb{E}_k\left[\|\xi_g^k - \xi_{g, *}^k\|^2\right] + \mu^{-2}G_1^2 \tau_{\mathrm{mix}}^2 \epsilon_{\mathrm{app}} + \frac{G_1^6 \tau_{\mathrm{mix}}^2}{\mu^4 T_{\max}} \mathbb{E}_k \|\xi_g^k - \xi_{g, *}^k\|^2\bigg) \\
        &\leq \left(\frac{G_1^2 \tau_{\mathrm{mix}}^2}{\mu^2 T^2} + \frac{G_1^6 \tau_{\text{mix}}^3}{\mu^4 T_{\max}}\right) +  \tilde{\mathcal{O}}\bigg(\mu^{-4}G_1^6 \tau_{\mathrm{mix}}^3 T_{\max}^{-1} + \mu^{-2} G_1^2 \log(H / \delta) \tau_{\mathrm{mix}} T_{\max}^{-1} + \mu^{-2}G_1 m^{-1/2}\\
        &\quad\quad + \mu^{-2}G_1^2 \mathbb{E}_k\left[\|\xi_g^k - \xi_{g, *}^k\|^2\right] + \mu^{-2}G_1^2 \tau_{\mathrm{mix}}^2 \epsilon_{\mathrm{app}}\bigg) \\
    \end{align*}
    Finally, we substitute the value for \(\mathbb{E}_k\left[\|\xi_g^k - \xi_{g, *}^k\|^2\right]\) to yield
    \begin{align}
        \mathbb{E}_k \left[\left\|\omega_g^k - \omega_{g, *}^k\right\|^2\right] &\leq \frac{G_1^2 \tau_{\mathrm{mix}}^2}{\mu^2 T^2} + \tilde{\mathcal{O}}\Bigg( \left(\frac{1}{H} + \frac{1}{T_{\max}}\right) \left(\frac{G_1^6 \tau_{\mathrm{mix}}^3}{\mu^4} + \frac{G_1^2 \log(H / \delta) \tau_{\mathrm{mix}}}{\mu^2} \right) \nonumber \\
        &\quad\quad + \frac{G_1^2}{\mu^2 \sqrt{m}} + \frac{G_1^2 c_\gamma^2 \log(H / \delta) \tau_{\mathrm{mix}}}{\lambda^2 \mu^2 H} + \frac{G_1^2 \tau_{\mathrm{mix}}^2}{\mu^2} \epsilon_{\mathrm{app}} \Bigg) \\
        \left\| \mathbb{E}_k [\omega_g^k] - \omega_{g,k}^* \right\|^2 &\leq \frac{G_1^2 \tau_{\mathrm{mix}}^2}{\mu^2 T^2} +  \tilde{\mathcal{O}}\bigg(\frac{G_1^6 \tau_{\mathrm{mix}}^3}{ T_{\max} \mu^4} + \frac{G_1^2 \log(H / \delta) \tau_{\mathrm{mix}}}{\mu^2 T_{\max}} + \frac{G_1^2}{\mu^2 \sqrt{m}} \nonumber \\
        &\quad\quad + \frac{G_1^2 c_\gamma^2 \log(H / \delta)}{\mu^2 \lambda^2 H} + \frac{G_1^2 \tau_{\mathrm{mix}}^2}{\mu^2} \epsilon_{\mathrm{app}}\bigg)
    \end{align}
\end{proof}

%% file: Sections/Appendix/cmdp_analysis.tex
\section{CMDP Analysis}
In this section, we formally analyze the convergence rate of the overall CMDP.

\begin{theorem}
\label{thm: cmdp convergence}
Suppose the assumptions in Section~\ref{sec: assumptions} hold. Let the step sizes be $\alpha = \beta = \Theta(T^{-1/4})$, the number of outer iterations be $K = \Theta(T^{1/2})$, and the number of inner iterations be $H = \Theta(T^{1/2})$. Let the MLMC truncation level be $T_{\max} = T$. Then, the sequence of policies $\{\theta_k\}_{k=0}^{K-1}$ generated by Algorithm 1 satisfies the following bounds on the average reward gap and constraint violation:

\begin{align}
    \frac{1}{K} \mathbb{E} \left[\sum_{k=0}^{K-1} \left( J_r^{\pi^*} - J_r(\theta_k) \right)\right] &\leq \tilde{\mathcal{O}}\left(\sqrt{\epsilon_{\mathrm{bias}}} + \sqrt{\epsilon_{\mathrm{app}}} + T^{-1/4} + m^{-1/4}\right), \\
    \frac{1}{K} \mathbb{E} \left[\sum_{k=0}^{K-1} - J_c(\theta_k) \right] &\leq \tilde{\mathcal{O}}\left(\sqrt{\epsilon_{\mathrm{bias}}} + \sqrt{\epsilon_{\mathrm{app}}} + T^{-1/4} + m^{-1/4}\right),
\end{align}
where $\tilde{\mathcal{O}}$ hides polylogarithmic factors in $T$.
\end{theorem}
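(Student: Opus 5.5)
The argument follows the standard primal-dual NPG template (as in \citet{xu2025global} and \citet{mondal2024sample}), with the exact NPG direction replaced by the estimate $\omega_k=\omega_r^k+\lambda_k\omega_c^k$. Write $\omega_k^*=\omega_{r,*}^k+\lambda_k\omega_{c,*}^k=F(\theta_k)^{-1}\nabla_\theta L(\theta_k,\lambda_k)$.

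\textbf{Step 1: primal drift via KL.} Using Assumption~\ref{assump:score} ($G_2$-smoothness of $\log\pi_\theta$), expand the potential $\Phi_k=\mathbb{E}_{s\sim d^{\pi^*}}[\mathrm{KL}(\pi^*(\cdot|s)\|\pi_{\theta_k}(\cdot|s))]$. This gives
\[
\Phi_k-\Phi_{k+1}\ge \alpha\,\mathbb{E}_{\nu^{\pi^*}}[\nabla\log\pi_{\theta_k}\cdot\omega_k]-\tfrac{G_2\alpha^2}{2}\|\omega_k\|^2 .
\]
Split $\nabla\log\pi\cdot\omega_k$ into three parts: $A^{\pi_{\theta_k}}_{r}+\lambda_kA^{\pi_{\theta_k}}_{c}$, the compatible error $\nabla\log\pi\cdot\omega_k^*-A$, and $\nabla\log\pi\cdot(\omega_k-\omega_k^*)$. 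By the average-reward performance difference lemma, $\mathbb{E}_{\nu^{\pi^*}}[A_g^{\pi_{\theta_k}}]=J_g^{\pi^*}-J_g(\theta_k)$. The compatible error is at most $\sqrt{\epsilon_{\mathrm{bias}}}$ by Assumption~\ref{assump:transfer} and Jensen. The estimation term contributes $G_1\|\mathbb{E}_k[\omega_k]-\omega_k^*\|$ after conditioning on $\theta_k,\lambda_k$; this is why only the \emph{bias} part of Theorem~\ref{thm:npg_estimation_error} enters linearly. The quadratic term $\alpha^2\mathbb{E}\|\omega_k\|^2$ is bounded using the MSE part, together with $\lambda_k\le 2/\delta$ and $\|\omega^*\|=O(G_1\tau_{\mathrm{mix}}/\mu)$. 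Summing over $k$ and dividing by $\alpha K$ gives
\[
\frac1K\sum_k\mathbb{E}\big[J_r^{\pi^*}-J_r(\theta_k)+\lambda_k(J_c^{\pi^*}-J_c(\theta_k))\big]\lesssim \frac{\Phi_0}{\alpha K}+\sqrt{\epsilon_{\mathrm{bias}}}+\sqrt{\text{bias}}+\alpha\,(\text{MSE}+\text{const}).
\]

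\textbf{Step 2: dual drift.} From the projected update $\lambda_{k+1}=\mathcal{P}[\lambda_k-\beta\eta_c^k]$, expand $(\lambda_{k+1}-\lambda)^2$ for any $\lambda\in[0,2/\delta]$. This yields $\frac1K\sum(\lambda_k-\lambda)J_c(\theta_k)\le \frac{\lambda^2}{2\beta K}+\beta\,O(1)+\frac{2}{\delta}\cdot\frac1K\sum|\mathbb{E}\eta_c^k-J_c(\theta_k)|$. The last term is controlled by the $\eta$-component of Lemma~\ref{lemma:critic_mse}, i.e.\ a critic error of order $\sqrt{\cdot}$. Since $\lambda_k$ is measurable before $\eta_c^k$ is computed, only the conditional bias of $\eta_c^k$ matters. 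Adding Step 1 with $\lambda=0$ cancels the $\lambda_kJ_c$ cross term (using $J_c^{\pi^*}\ge0$) and gives the objective gap. For the violation, take $\lambda=2/\delta$ and use the Slater argument: strong duality for the occupancy-measure reformulation and $\lambda^*\le 1/\delta_s$ convert the bound into $\frac{2}{\delta}-\lambda^*\ge 1/\delta$ times the average violation.

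\textbf{Step 3: plug in rates.} With $T_{\max}=T$ and $H=\Theta(\sqrt T)$, Theorem~\ref{thm:npg_estimation_error} gives MSE $=\tilde O(H^{-1}+m^{-1/2}+\epsilon_{\mathrm{app}})$. The bias involves the critic MSE rather than the critic bias (the projection obstructs a sharper bound), so it is also $\tilde O(H^{-1}+m^{-1/2}+\epsilon_{\mathrm{app}})$, and its square root is $\tilde O(T^{-1/4}+m^{-1/4}+\sqrt{\epsilon_{\mathrm{app}}})$. With $\alpha=\beta=T^{-1/4}$ and $K=\sqrt T$, we get $1/(\alpha K)=T^{-1/4}$ and $\alpha\cdot O(1)=T^{-1/4}$. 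The total sample count is $KH\cdot O(\log T)=\tilde O(T)$, which matches the stated rates.

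\textbf{Main obstacle.} The hardest part is Step 1's handling of the bias term. The NPG bias bound contains $\|\mathbb{E}_k\xi_g^k-\xi_{g,*}^k\|^2$, but the proof of Theorem~\ref{thm:npg_estimation_error_bounds} only closes it via the MSE of the critic, which decays like $1/H$. Taking the square root then loses half the rate, and this is exactly what limits the result to $T^{-1/4}$. A second, delicate point is ensuring that the high-probability NTK events of Lemma~\ref{lemma: ntk critic bounds} hold uniformly over all $K$ epochs. I would handle this with a union bound, choosing the failure probability as $\delta/K$, which only adds log factors. On the complement event, the bounded iterates (projection onto $[0,2/\delta]$ and bounded $J$) keep the contribution $O(\delta)$.
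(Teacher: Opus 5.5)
Your proposal is correct and follows essentially the same route as the paper. The paper imports from \citet{xu2025global} the KL-potential primal bound (their Lemma 4.6/G.2), the dual-drift inequalities (their Eqs.~75 and~79) and the Slater/strong-duality conversion (their Lemma G.6), all of which you rederive, and then plugs in the bias and MSE bounds of Theorem~\ref{thm:npg_estimation_error} with the same parameter choices; your explicit union-bound handling of the high-probability NTK event is extra care that the paper leaves implicit.
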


\begin{proof}
    Combining Theorem~\ref{thm:npg_estimation_error_bounds} with the fact that \(\lambda_k \in [0, 2 / \delta_s]\), we have
    \begin{align}
        \mathbb{E}\left[\|\mathbb{E}_k\left[\omega_k\right] - \omega_k^*\|\right] &\leq \left(1 + \frac{2}{\delta_s}\right) \tilde{\mathcal{O}}\Bigg(\frac{G_1 \tau_{\mathrm{mix}}}{\mu T} + \frac{G_1^3 \tau_{\mathrm{mix}}^2}{\sqrt{T_{\max}}\mu^2} + \frac{G_1 \sqrt{\log(H / \delta)} \tau_{\mathrm{mix}}}{\mu \sqrt{T_{\max}}} + \frac{G_1}{\mu m^{1/4}} \nonumber \\
        &\quad + \frac{G_1 c_\gamma \sqrt{\log(H / \delta)}}{\mu \lambda \sqrt{H}} + \frac{G_1 \tau_{\mathrm{mix}}}{\mu}\sqrt{\epsilon_{\mathrm{app}}}\Bigg) \\
        \mathbb{E}_k \left[\left\|\omega^k - \omega_{ k}^*\right\|^2\right] &\leq \left(1 + \frac{4}{\delta_s^2}\right)\tilde{\mathcal{O}}\Bigg( \frac{G_1^2 \tau_{\mathrm{mix}}^2}{\mu^2 T^2} + \left(\frac{1}{H} + \frac{1}{T_{\max}}\right) \left(\frac{G_1^6 \tau_{\mathrm{mix}}^3}{\mu^4} + \frac{G_1^2 \log(H / \delta) \tau_{\mathrm{mix}}}{\mu^2} \right) \nonumber \\
        &\quad\quad + \frac{G_1^2}{\mu^2 \sqrt{m}} + \frac{G_1^2 c_\gamma^2 \log(H / \delta) \tau_{\mathrm{mix}}}{\lambda^2 \mu^2 H} + \frac{G_1^2 \tau_{\mathrm{mix}}^2}{\mu^2} \epsilon_{\mathrm{app}} \Bigg)   
    \end{align}
    Using Lemma 4.6 and Lemma G.2 in \citet{xu2025global} and setting \(T_{\max} = T\), we have
    \begin{align}
    &\frac{1}{K} \mathbb{E} \left[\sum_{k=0}^{K-1} \left( \mathcal{L}(\pi^*, \lambda_k) - \mathcal{L}(\theta_k, \lambda_k) \right) \right] \nonumber \\
    &\leq \sqrt{\epsilon_{\mathrm{bias}}} + \frac{1}{\alpha K} \mathbb{E}_{s \sim d^{\pi^*}} \left[ KL(\pi^*(\cdot|s) \| \pi_{\theta_0}(\cdot|s)) \right] \nonumber \\
    &\quad + \left(1 + \frac{2}{\delta_s}\right) \tilde{\mathcal{O}}\Bigg(\frac{G_1^3 \tau_{\mathrm{mix}}^2}{\sqrt{T}\mu^2} + \frac{G_1 \sqrt{\log(H / \delta)} \tau_{\mathrm{mix}}}{\mu \sqrt{T}} + \frac{G_1}{\mu m^{1/4}} + \frac{G_1 \tau_{\mathrm{mix}}}{\mu}\sqrt{\epsilon_{\mathrm{app}}} + \frac{G_1 c_\gamma \sqrt{\log(H / \delta)}}{\mu \lambda \sqrt{H}}\Bigg) \nonumber \\ 
    &\quad + \alpha G_2 \left(1 + \frac{4}{\delta_s^2}\right)\tilde{\mathcal{O}}\Bigg(\frac{G_1^6 \tau_{\mathrm{mix}}^3}{H\mu^4} + \frac{G_1^2 \log(H / \delta) \tau_{\mathrm{mix}}}{H\mu^2} + \frac{G_1^2}{\mu^2 \sqrt{m}} + \frac{G_1^2 c_\gamma^2 \log(H / \delta) \tau_{\mathrm{mix}}}{\lambda^2 \mu^2 H} + \frac{G_1^2 \tau_{\mathrm{mix}}^2}{\mu^2} \epsilon_{\mathrm{app}} \Bigg) \nonumber \\
    &\quad + \alpha \mathcal{O} \left( \left( 1 + \frac{4}{\delta_s^2} \right) \frac{G_1^2 \tau_{\mathrm{mix}}^2}{\mu^2} \right)
    \end{align}
    By the definition of the Lagrange function, we have
    \begin{align}
    &\frac{1}{K} \mathbb{E} \left[\sum_{k=0}^{K-1} \left( J_r^{\pi^*} - J_r(\theta_k) \right) \right] \nonumber \\
    &\leq \sqrt{\epsilon_{\mathrm{bias}}} + \frac{1}{\alpha K} \mathbb{E}_{s \sim d^{\pi^*}} \left[ KL(\pi^*(\cdot|s) \| \pi_{\theta_0}(\cdot|s)) \right] \nonumber \\
    &\quad + \left(1 + \frac{2}{\delta_s}\right) \tilde{\mathcal{O}}\Bigg(\frac{G_1^3 \tau_{\mathrm{mix}}^2}{\sqrt{T}\mu^2} + \frac{G_1 \sqrt{\log(H / \delta)} \tau_{\mathrm{mix}}}{\mu \sqrt{T}} + \frac{G_1}{\mu m^{1/4}} + \frac{G_1 \tau_{\mathrm{mix}}}{\mu}\sqrt{\epsilon_{\mathrm{app}}} + \frac{G_1 c_\gamma \sqrt{\log(H / \delta)}}{\mu \lambda \sqrt{H}}\Bigg) \nonumber \\ 
    &\quad + \alpha G_2 \left(1 + \frac{4}{\delta_s^2}\right)\tilde{\mathcal{O}}\Bigg(\frac{G_1^6 \tau_{\mathrm{mix}}^3}{H\mu^4} + \frac{G_1^2 \log(H / \delta) \tau_{\mathrm{mix}}}{H\mu^2} + \frac{G_1^2}{\mu^2 \sqrt{m}} + \frac{G_1^2 c_\gamma^2 \log(H / \delta) \tau_{\mathrm{mix}}}{\lambda^2 \mu^2 H} + \frac{G_1^2 \tau_{\mathrm{mix}}^2}{\mu^2} \epsilon_{\mathrm{app}} \Bigg) \nonumber \\
    &\quad + \alpha \mathcal{O} \left( \left( 1 + \frac{4}{\delta_s^2} \right) \frac{G_1^2 \tau_{\mathrm{mix}}^2}{\mu^2} \right) - \frac{1}{K} \mathbb{E} \left[\sum_{k=0}^{K-1} \lambda_k \left( J_c^{\pi^*} - J_c(\theta_k) \right) \right]
    \end{align}
    From Equation 75 in \citet{xu2025global}, we have
    \begin{align}
        - \frac{1}{K} \mathbb{E} \left[\sum_{k=0}^{K-1} \lambda_k \left( J_c^{\pi^*} - J_c(\theta_k) \right) \right] &\leq \frac{1}{K} \mathbb{E}\left[\sum_{k=0}^{K-1} |\lambda_k| \left(\eta_c^*(\theta_k) - \mathbb{E}\left[\eta_c^k\right]\right)\right] + \beta \\
        &\leq \tilde{\mathcal{O}}\left(\frac{G_1c_\gamma \sqrt{\log(H / \delta)}}{\mu \lambda \sqrt{H}} + \frac{G_1}{\mu m^{1/4}} + \beta\right)
    \end{align}
    Thus,
    \begin{align}
    &\frac{1}{K} \mathbb{E} \left[\sum_{k=0}^{K-1} \left( J_r^{\pi^*} - J_r(\theta_k) \right) \right] \nonumber \\
    &\leq \sqrt{\epsilon_{\mathrm{bias}}} + \frac{1}{\alpha K} \mathbb{E}_{s \sim d^{\pi^*}} \left[ KL(\pi^*(\cdot|s) \| \pi_{\theta_0}(\cdot|s)) \right] \nonumber \\
    &\quad + \left(1 + \frac{2}{\delta_s}\right) \tilde{\mathcal{O}}\Bigg(\frac{G_1^3 \tau_{\mathrm{mix}}^2}{\sqrt{T}\mu^2} + \frac{G_1 \sqrt{\log(H / \delta)} \tau_{\mathrm{mix}}}{\mu \sqrt{T}} + \frac{G_1}{\mu m^{1/4}} + \frac{G_1 \tau_{\mathrm{mix}}}{\mu}\sqrt{\epsilon_{\mathrm{app}}} + \frac{G_1 c_\gamma \sqrt{\log(H / \delta)}}{\mu \lambda \sqrt{H}}\Bigg) \nonumber \\ 
    &\quad + \alpha G_2 \left(1 + \frac{4}{\delta_s^2}\right)\tilde{\mathcal{O}}\Bigg(\frac{G_1^6 \tau_{\mathrm{mix}}^3}{H\mu^4} + \frac{G_1^2 \log(H / \delta) \tau_{\mathrm{mix}}}{H\mu^2} + \frac{G_1^2}{\mu^2 \sqrt{m}} + \frac{G_1^2 c_\gamma^2 \log(H / \delta) \tau_{\mathrm{mix}}}{\lambda^2 \mu^2 H} + \frac{G_1^2 \tau_{\mathrm{mix}}^2}{\mu^2} \epsilon_{\mathrm{app}} \Bigg) \nonumber \\
    &\quad + \alpha \mathcal{O} \left( \left( 1 + \frac{4}{\delta_s^2} \right) \frac{G_1^2 \tau_{\mathrm{mix}}^2}{\mu^2} \right) + \tilde{\mathcal{O}}\left(\frac{G_1c_\gamma \sqrt{\log(H / \delta)}}{\mu \lambda \sqrt{H}} + \beta\right) \\
    &\leq \tilde{\mathcal{O}}\left(\sqrt{\epsilon_{\mathrm{bias}}} + \sqrt{\epsilon_{\mathrm{app}}} + \frac{1}{\alpha K} + \alpha + \beta + \frac{\sqrt{\log(H / \delta)}}{\sqrt{H}} + m^{-1/4}\right)
    \end{align}
    From Equation 79 in \citet{xu2025global},
    \begin{align}
        \frac{1}{K} \mathbb{E}\left[\sum_{k=0}^{K-1} J_c(\theta_k) \left(\lambda_k - \frac{2}{\delta_s}\right)\right] \leq \frac{2}{\delta_s^2 \beta K} + \frac{\beta}{2} 
    \end{align}
    Since \(\lambda_k J_c^{\pi^*} \geq 0\), we have
    \begin{align}
    &\frac{1}{K} \mathbb{E} \left[\sum_{k=0}^{K-1} \left( J_r^{\pi^*} - J_r(\theta_k) \right) \right] + \frac{2}{\delta_s K} \mathbb{E} \left[\sum_{k=0}^{K-1} - J_c(\theta_k) \right] \nonumber \\
    &\leq \sqrt{\epsilon_{\mathrm{bias}}} + \frac{2}{\delta_s^2 \beta K} + \frac{\beta}{2} + \frac{1}{\alpha K} \mathbb{E}_{s \sim d^{\pi^*}} \left[ KL(\pi^*(\cdot|s) \| \pi_{\theta_0}(\cdot|s)) \right] \nonumber \\
    &\quad + \left(1 + \frac{2}{\delta_s}\right) \tilde{\mathcal{O}}\Bigg(\frac{G_1^3 \tau_{\mathrm{mix}}^2}{\sqrt{T}\mu^2} + \frac{G_1 \sqrt{\log(H / \delta)} \tau_{\mathrm{mix}}}{\mu \sqrt{T}} + \frac{G_1}{\mu m^{1/4}} + \frac{G_1 \tau_{\mathrm{mix}}}{\mu}\sqrt{\epsilon_{\mathrm{app}}} + \frac{G_1 c_\gamma \sqrt{\log(H / \delta)}}{\mu \lambda \sqrt{H}}\Bigg) \nonumber \\ 
    &\quad + \alpha G_2 \left(1 + \frac{4}{\delta_s^2}\right)\tilde{\mathcal{O}}\Bigg(\frac{G_1^6 \tau_{\mathrm{mix}}^3}{H\mu^4} + \frac{G_1^2 \log(H / \delta) \tau_{\mathrm{mix}}}{H\mu^2} + \frac{G_1^2}{\mu^2 \sqrt{m}} + \frac{G_1^2 c_\gamma^2 \log(H / \delta) \tau_{\mathrm{mix}}}{\lambda^2 \mu^2 H} + \frac{G_1^2 \tau_{\mathrm{mix}}^2}{\mu^2} \epsilon_{\mathrm{app}} \Bigg) \nonumber \\
    &\quad + \alpha \mathcal{O} \left( \left( 1 + \frac{4}{\delta_s^2} \right) \frac{G_1^2 \tau_{\mathrm{mix}}^2}{\mu^2} \right) 
    \end{align}
    Then by Lemma G.6 in \citet{xu2025global}, we have
    \begin{align}
    &\frac{1}{K} \mathbb{E} \left[\sum_{k=0}^{K-1} - J_c(\theta_k) \right] \nonumber \\
    &\leq \sqrt{\epsilon_{\mathrm{bias}}} + \frac{1}{\delta_s \beta K} + \frac{\delta_s \beta}{4} + \frac{\delta_s}{2\alpha K} \mathbb{E}_{s \sim d^{\pi^*}} \left[ KL(\pi^*(\cdot|s) \| \pi_{\theta_0}(\cdot|s)) \right] \nonumber \\
    &\quad + \left(1 + \frac{\delta_s}{2}\right) \tilde{\mathcal{O}}\Bigg(\frac{G_1^3 \tau_{\mathrm{mix}}^2}{\sqrt{T}\mu^2} + \frac{G_1 \sqrt{\log(H / \delta)} \tau_{\mathrm{mix}}}{\mu \sqrt{T}} + \frac{G_1}{\mu m^{1/4}} + \frac{G_1 \tau_{\mathrm{mix}}}{\mu}\sqrt{\epsilon_{\mathrm{app}}} + \frac{G_1 c_\gamma \sqrt{\log(H / \delta)}}{\mu \lambda \sqrt{H}}\Bigg) \nonumber \\ 
    &\quad + \alpha G_2 \left(\frac{\delta_s}{2} + \frac{2}{\delta_s}\right)\tilde{\mathcal{O}}\Bigg(\frac{G_1^6 \tau_{\mathrm{mix}}^3}{H\mu^4} + \frac{G_1^2 \log(H / \delta) \tau_{\mathrm{mix}}}{H\mu^2} + \frac{G_1^2}{\mu^2 \sqrt{m}} + \frac{G_1^2 c_\gamma^2 \log(H / \delta) \tau_{\mathrm{mix}}}{\lambda^2 \mu^2 H} + \frac{G_1^2 \tau_{\mathrm{mix}}^2}{\mu^2} \epsilon_{\mathrm{app}} \Bigg) \nonumber \\
    &\quad + \alpha \mathcal{O} \left( \left(\frac{\delta_s}{2} + \frac{2}{\delta_s} \right) \frac{G_1^2 \tau_{\mathrm{mix}}^2}{\mu^2} \right) \\
    &\leq \tilde{\mathcal{O}}\left(\sqrt{\epsilon_{\mathrm{bias}}} + \sqrt{\epsilon_{\mathrm{app}}} + \frac{1}{\alpha K} + \frac{1}{\beta K} + \alpha + \beta + \frac{\sqrt{\log(H / \delta)}}{\sqrt{H}} + m^{-1/4}\right)
    \end{align}
    Using \(K = H = \Theta(T^{1/2})\), \(\alpha = \beta = \Theta(T^{-1/4})\), we have
    \begin{align}
        \frac{1}{K} \mathbb{E} \left[\sum_{k=0}^{K-1} \left( J_r^{\pi^*} - J_r(\theta_k) \right)\right] &\leq \tilde{\mathcal{O}}\left(\sqrt{\epsilon_{\mathrm{bias}}} + \sqrt{\epsilon_{\mathrm{app}}} + T^{-1/4} + m^{-1/4}\right) \\
        \frac{1}{K} \mathbb{E} \left[\sum_{k=0}^{K-1} - J_c(\theta_k) \right] &\leq \tilde{\mathcal{O}}\left(\sqrt{\epsilon_{\mathrm{bias}}} + \sqrt{\epsilon_{\mathrm{app}}} + T^{-1/4} + m^{-1/4}\right)
    \end{align}
\end{proof}

%% file: Sections/Appendix/limitations_future_work.tex
\section{Limitations and Future Work}
While our work establishes strong theoretical guarantees for neural actor-critic methods in CMDPs, it presents several limitations that open avenues for future research:

\paragraph{Beyond the NTK Regime} Our theoretical analysis heavily relies on the Neural Tangent Kernel (NTK) regime, which requires the neural network's width $m$ to be highly overparameterized to ensure the parameters remain close to their initialization. While this allows for mathematically tractable linearized dynamics, it limits the network's ability to perform deep feature representation learning. A vital future direction is to extend this analysis beyond the lazy training regime, potentially utilizing mean-field theory or representation learning frameworks to capture the true feature-learning capabilities of deep RL.

\paragraph{Ergodicity and Unichain Assumptions} Our theoretical analysis relies on the uniform ergodicity assumption Assumption~\ref{assump:ergodic}, which necessitates that the Markov chain induced by every parameterized policy is irreducible and aperiodic. In many practical safe RL domains, such as autonomous driving or robotics, policies may naturally lead to absorbing states (e.g., system failures) or possess disjoint transient regions. Currently, only MDPs with linear critics have been developed in the unichain setting \citep{ganesh2025regret,satheesh2026regret}. Extending neural critic approaches to these settings remains an open problem.

\paragraph{Order-Optimal Convergence Rates}
We established a global convergence rate and cumulative constraint violation of $\tilde{\mathcal{O}}(T^{-1/4})$, which is not order-optimal natural actor-critic methods under Markovian sampling. The main bottleneck for CMDPs is the NTK analysis for the squared bias due to the projection operator. Extending our results by alleviating this bottleneck remains an interesting open problem.